\newtheorem{remark}{Remark}
\newcommand{\ddk}{\Delta d_k}
\newcommand{\drk}{\Delta R_k}
\newcommand{\bnorm}[1]{\left\|{#1} \right\|}
\newcommand{\pth}[1]{\left( #1 \right)}
\newcommand{\qth}[1]{\left[ #1 \right]}
\newcommand{\iprod}[2]{\left \langle #1, #2 \right\rangle}
\newcommand{\E}{\mathbb{E}}
\newcommand{\n}[1]{%
\ifthenelse{\equal{#1}{}}{\frac{1}{n}}{\frac{1}{n^{#1}}}%
}
\newcommand{\Trans}{\mathrm{T}}
\renewcommand{\epsilon}{\varepsilon}
\newcommand{\bX}{\mathbf{X}}
\newcommand{\bx}{\mathbf{x}}
\newcommand{\by}{\mathbf{y}}
\newcommand{\N}{\mathcal{N}}
\newcommand{\R}{\mathbb{R}}
\DeclareMathOperator*{\argmin}{argmin}
\newcommand{\hfstack}{\hat f_{\text{stack}}}
\newcommand{\hfbest}{\hat f_{\text{best}}}
\newcommand{\bhfstack}{\mathbf{\hat f}_{\text{stack}}}
\newcommand{\bhfbest}{\mathbf{\hat f}_{\text{best}}}
\newcommand{\bff}{\mathbf{f}}
\newcommand{\biprodl}{\iprod{\by}{\boldsymbol{\psi}_l}}
\newcommand{\cov}{\text{cov}}
\def\bx{\mathbf{x}}
\def\bX{\mathbf{X}}
\DeclarePairedDelimiterX{\infdivx}[2]{(}{)}{%
  #1\;\delimsize\|\;#2%
}
\newtheorem{theorem}{Theorem}[section]
\newtheorem{lemma}[theorem]{Lemma}
\theoremstyle{remark}
\begin{document}

% --------------------------------------------------------------------------------------------------------------------------------
% Title page
% --------------------------------------------------------------------------------------------------------------------------------
\title{Error Reduction from Stacked Regressions\thanks{
    The authors would like to thank Matias Cattaneo, Jianqing Fan, Giles Hooker, Vladimir Koltchinskii, Cheng Mao, Jonathan Siegel, Min Xu, and Dana Yang for insightful discussions. Klusowski gratefully acknowledges financial support from the National Science Foundation through CAREER DMS-2239448, DMS-2054808, and HDR TRIPODS CCF-1934924. Tan gratefully acknowledges support from NUS Start-up Grant A-8000448-00-00.
}}
\author{Xin Chen\thanks{Department of Operations Research and Financial Engineering, Princeton University} \and
  Jason Klusowski\footnotemark[2]  \and
  Yan Shuo Tan\thanks{Department of Statistics and Data Science,
National University of Singapore}
}
\maketitle

\begin{abstract}
  Stacking regressions is an ensemble technique that forms linear combinations of different regression estimators to enhance predictive accuracy. The conventional approach uses cross-validation data to generate predictions from the constituent estimators, and least-squares with nonnegativity constraints to learn the combination weights. In this paper, we learn these weights analogously by minimizing a regularized version of the empirical risk subject to a nonnegativity constraint. When the constituent estimators are linear least-squares projections onto nested subspaces separated by at least three dimensions, we show that thanks to an adaptive shrinkage effect, the resulting stacked estimator has strictly smaller population risk than best single estimator among them, with more significant gains when the signal-to-noise ratio is small. Here ``best'' refers to an estimator that minimizes a model selection criterion such as AIC or BIC. In other words, in this setting, the best single estimator is inadmissible. Because the optimization problem can be reformulated as isotonic regression, the stacked estimator requires the same order of computation as the best single estimator, making it an attractive alternative in terms of both performance and implementation.
\end{abstract}

\section{Introduction}

When performing regression, an analyst rarely knows the true model a priori, and instead starts by deriving a collection of candidate models $ \hat\mu_1, \hat\mu_2, \dots, \hat\mu_M$.
Classically, the next step in the process is to select the best model based on data-dependent criteria such as complexity (e.g., AIC or BIC) or out-of-sample error (e.g., cross-validation).
Such procedures come under the umbrella of model selection, and have been extensively studied \citep{Hastie-Tibshirani-Friedman2009_book}.

If predictive performance is the primary consideration,
\citet{wolpert1992stacked} realized that an alternate approach may be more fruitful.
Instead of selecting the best single model, one may use predictions from these estimators as inputs for another (combined) model, a scheme he called \emph{stacked generalizations}.
\cite{breiman1996stacking} was able to operationalize Wolpert's stacking idea by restricting the combined models to be of the form
\begin{equation} \label{eq:stacking}
    \hat f_{\text{stack}}(\bx) = \sum_{k=1}^M \hat\alpha_k\hat\mu_k(\bx).
\end{equation}
and learning the weights $\hat\alpha_1,\hat\alpha_2,\ldots,\hat\alpha_M$ using cross-validation.
Through extensive experiments on real-world and simulated data, he consistently observed the following:
\begin{enumerate}[(i)]
    \item The stacked model \eqref{eq:stacking} has lower mean squared error than the single model having lowest cross-validation error.
    \item The optimal weights sum to approximately one, despite no explicit constraint to do so.
\end{enumerate}
These initial results have proved to be quite robust and have been reproduced in many different settings. Indeed, stacking has found widespread applications in industry (often going by the name \emph{blending}), and has also been a component of several successful solutions to data science competitions, including some on the platform Kaggle and, perhaps most famously, the Netflix Prize \citep{koren2009bellkor}.
It is therefore unfortunate that the beneficial impact of stacking has yet to be properly understood theoretically.

In our paper, we focus on a special setting in order to better illustrate these stylized features of stacking. 
We will focus on the case when the base estimators comprise a nested sequence of regression models.
Such a sequence arises naturally in the context of sieve estimation (series, spline, polynomial, wavelet), certain autoregressive models, stepwise regression, and decision tree pruning (e.g., CART).
Hence, it is fairly general.
In such a setting, we can rigorously prove both of Breiman's empirical observations (i) and (ii) for a variant of stacking that estimates the stacking weights via a regularized mean squared error. 
We show in fact that \emph{stacking performs model selection and shrinkage}. 
This allows us to lower bound the reduction in population risk of the stacked model, relative to that of the best single model selected using an AIC-like criterion, and to quantify how this depends on the signal strength and sample size.

Our proof relies on a novel connection between the optimization program and isotonic regression.
This connection also allows our version of the stacked regression problem to be solved efficiently.

The problem of how to linearly combine multiple estimators of the same target, especially in the context of nonparametric regression, has been studied in the statistical literature under the name of \emph{aggregation} \citep{juditsky2000functional,nemirovski2000topics,tsybakov2003optimal}.
Due perhaps to the difficulty of analyzing $\hat f_{\textnormal{stack}}$, past works have defined and studied other formulations of the weights $\hat\alpha_1,\ldots,\hat\alpha_M$ and have focused on showing that the aggregate model does not perform too much worse compared to an oracle benchmark.
They hence neither study stacking as used in practice, nor do they explain how stacking, or aggregating models more generally, can uniformly improve upon model selection.
Furthermore, in the context of nested regressions, we provide both simulation results showing that the stacked model outperforms other aggregation techniques as well as mathematical explanations for why this might be the case.

\section{Preliminaries}
Throughout the paper, we consider a nonparametric regression model. Suppose we have collected data $(\bx_1, y_1), (\bx_2, y_2), \dots, (\bx_n, y_n) $ satisfying
\begin{equation}\label{eq:model}
    y_i=f(\bx_i) + \sigma \varepsilon_i, \quad i = 1, 2, \dots, n, 
\end{equation}

where $y_i \in \mathbb R$ is the label of the $i$-th observation, $\bx_i\in \R^d$ is the $d$-dimensional feature vector of the $i$-th observation, $\varepsilon_i \overset{\mathrm{iid}}{\sim} \N(0,1) $ is the $i$-th unobserved noise variable following the standard normal distribution, and $f$ is the unknown regression function to estimate. We consider the fixed design problem where the $\bx_i$'s are deterministic. We also assume that the noise level $ \sigma^2 $ is known a priori.

For a real-valued function $ f $, we define
$ \mathbf{f} = (f(\bx_1), \dots, f(\bx_n))^{\Trans} $
to be the $ n\times 1$ vector of $ f $
evaluated at the design points $\mathbf{X}=(\bx_1, \dots, \bx_n)^\Trans \in
\mathbb{R}^{n\times d} $. For real-valued functions $f$ and $g$, we let
\begin{align*}
    \bnorm{\mathbf{f}}^2= \frac{1}{n}\sum_{i=1}^n\pth{f(\bx_i)}^2, \qquad \iprod{\mathbf{f}}{\mathbf{g}} = \frac{1}{n}\sum_{i=1}^nf(\bx_i)g(\bx_i)
\end{align*}
denote the squared empirical norm and empirical inner product, respectively. The response vector $\mathbf{y} = (y_1,y_2,\dots,y_n)^\Trans$ is viewed as a relation, defined on the design matrix $ \mathbf{X}$, that associates $\bx_i$ with $y(\bx_i)=y_i$. Thus, we write $\bnorm{\mathbf{y}-\mathbf{f}}^2 = \frac{1}{n}\sum_{i=1}^n \pth{y_i - f(\bx_i)}^2$ and $\iprod{\mathbf{y}}{\mathbf{f}} = \frac{1}{n}\sum_{i=1}^ny_if(\bx_i)$. 
The $\ell_0$ norm of a vector, denoted by $ \|\cdot\|_{\ell_0} $, is defined as the number of its nonzero components. 
We write $ (z)_{+} = \max\{ 0, z\}  $ for the positive part of a real number $ z $.

\subsection{Nested Regression Models}

In this paper, we investigate the problem of approximating a target variable $y$ by stacking a sequence of least squares projections onto nested subspaces corresponding to a different level of approximation.
Specifically, we consider a fixed sequence of 
orthonormal basis functions $\{\psi_l\}_{l\in\mathbb{N}}$, that is, $ \|\boldsymbol{\psi}_l\|^2 = 1 $ and $ \langle \boldsymbol{\psi}_l, \boldsymbol{\psi}_{l'}\rangle = 0  $ for all $ l \neq l '$.
Let $ A_1 \subseteq A_2 \subseteq \cdots \subseteq A_M $ be a nested sequence of index sets and define the $ k $-th linear subspace $ \mathcal{A}_k $ as $ \text{span}(\{\psi_l: l \in A_k\}) $. The nested structure implies that each subsequent subspace of dimension $ d_k = |A_k| $ expands the representation space of the previous one. Let $\hat \mu_k(\bx)$ represent the projection of the response values $\mathbf{y} = (y_1, y_2, \dots, y_n)^{\Trans}$ onto the $k$-th linear subspace $ \mathcal{A}_k $, given explicitly by
\begin{equation} \label{eq:expansion}
\hat \mu_k(\bx) = \sum_{l\in A_k}\langle \mathbf{y}, \boldsymbol{\psi}_l \rangle \psi_l(\bx).
\end{equation}
We define the mean of $\hat\mu_k$ by $f_k(\bx) = \sum_{l\in A_k}\langle \mathbf{f}, \boldsymbol{\psi}_l \rangle \psi_l(\bx)$ and the empirical risk of $\hat\mu_k $ by $ R_k = \|\mathbf{y}-\hat{\boldsymbol{\mu}}_k\|^2$. For notational convenience in some of the upcoming expressions, we define $ d_0 = 0 $, $ R_0 = \|\mathbf{y}\|^2 $, and $ \hat\mu_0(\bx) \equiv 0 $, corresponding to the \emph{null model}.
Without loss of generality, we shall assume that the models are distinct, so that the set inclusions above are strict. In this case, because the representation spaces are nested, we have $ R_0 \geq R_1 \geq \cdots \geq R_M $, with strict inequality holding almost surely.\footnote{This is due to the fact that $(n/\sigma^2)(R_0-R_k) \sim \chi^2(d_k, \theta)$ follows a (continuous) noncentral chi-squared distribution with $ d_k $ degrees of freedom and noncentrality parameter $ \theta = n\big(\|\mathbf{f}\|^2-\|\mathbf{f}-\mathbf{f}_k\|^2\big)/\sigma^2 $.} We therefore assume throughout the paper that $R_k < R_{k-1} $ for all $ k$, an event that holds with probability one.

\subsection{Examples of Nested Regressions}

While there are many families of nested regression models that one could potentially stack (as previously mentioned), Breiman focused on two canonical forms that we now describe. 

\subsubsection{Stepwise Regressions}

Using validation data, perform stepwise deletion (elimination) on a linear model with $ d $ variables. That is, start with all $ d $ variables and then discard the one that reduces the empirical risk the least, in succession, until we are left with a simple linear model. This produces $ d $ (nested) linear models such that the $k$-th model has $d-k+1$ nonzero coefficients. These nested regressions can then be stacked together using the training data. 

\subsubsection{Decision Trees} \label{sec:tree}

A decision tree model can be viewed as a least squares projection onto a set of piecewise constant functions. The set of piecewise-constant functions is defined in terms of a (possibly data-dependent) partition of the input space induced by the splits in the tree. The tree output can therefore be written in the form \eqref{eq:expansion}, where the basis elements $ \psi_l $ are indexed by the internal nodes of the tree and depend only on the induced tree partition; see for example, \citep[Lemma 2.1]{cattaneo2022convergence}. 

Nested decision trees serve as a canonical example of nested regressions, since a refinement of a partition naturally defines an ordering. Each tree $T_k$ is then a subtree of its successor: $T_1  \preceq T_2  \preceq \cdots  \preceq T_M$, where $ T^{\prime}  \preceq T $ means that $T^{\prime}$ can be obtained from $ T $ by pruning.  As each tree model is linear, $d_k $ is equal to the number of internal nodes in its corresponding tree $ T_k $.

To form the nested sequence of trees in practice, \cite{breiman1996stacking} recommended growing a large tree $ T_M $ with $ M $ terminal nodes ($ M-1$ internal nodes) and then pruning upwards so that $ T_k $ is the subtree of $ T_M $ having the smallest empirical risk among all subtrees of $ T_M $ with $ k $ terminal nodes ($k-1$ internal nodes).
This construction implies that the number of models in the stack could potentially be quite large, as $M$ can grow polynomially with $n$.

\subsection{Accounting for Data-adaptive Subspaces}
To preserve the integrity of our theory, we require the sequence of subspaces to be statistically independent of the responses.
On the other hand, subset regression and decision trees as implemented in practice both select these subspaces data-adaptively, which in the case of decision trees corresponds to selecting splits based on a data-dependent criterion, such as impurity decrease for CART.
In order to maintain independence, one may perform data splitting, using a portion of the data to select the subspaces and the rest of the data to fit the stacked model.
This assumption, sometimes known as the honesty condition, is often made in the literature for theoretical tractability \citep{athey2016recursive}.

\section{Learning the Stacking Weights}

For a sequence of weights $\boldsymbol{\alpha} = (\alpha_1,\alpha_2,\ldots,\alpha_M)$, the empirical risk of the corresponding stacked model is given by
\begin{equation} \label{eq:emp_risk}
    R(\boldsymbol{\alpha}) = \Bigg\|\mathbf{y} - \sum_{k=1}^M \alpha_k \boldsymbol{\hat \mu}_k \Bigg\|^2.
\end{equation}

\subsection{Breiman's Stacking}

It is easy to see that directly optimizing \eqref{eq:emp_risk} leads to a weight vector that is a point mass on the most complex model $\hat\mu_M$, and so \citet{breiman1996stacking} advocated for optimizing the weights with respect to cross-validation error. 
Even so, optimizing over $\boldsymbol{\alpha} $ without constraints often leads to stacked models that overfit. Recognizing this, Breiman advocated for some form of regularization. Because the $ \hat\mu_k $ values are highly correlated—since they aim to predict the same outcome—he initially experimented with ridge constraints, i.e., $ \sum_{k=1}^M \alpha^2_k = t $. Ultimately, however, he settled on nonnegative constraints, i.e., $ \alpha_k \geq 0 $ for $ k = 1, 2, \dots, M $, for their seemingly best performance. Breiman argued that the reason nonnegative weights (that also sum to one) work well is because they make the stacked estimator ``interpolating'' in the sense that $ \min_k \hat\mu_k(\bx) \leq \hat f_{\text{stack}}(\bx) \leq \max_k \hat\mu_k(\bx) $.
On the other hand, he did not impose the constraint that the weights had to sum to unity as this seemed to have little effect on performance.

While Breiman used cross-validation to learn the weights of combination, we can quickly see that such an approach is not appropriate for the fixed design setting because the cross-validation error does not give an unbiased estimate of the in-sample error.
As such, we will instead study a variant of stacking that optimizes a regularized version of \eqref{eq:emp_risk} based on model degrees of freedom and dimension.
We believe this to be a small departure from Breiman's formulation of stacking, as the asymptotic model selection properties of cross-validation and penalized empirical risk have been shown to be similar under certain conditions \citep{stone1977asymptotic, shao1997asymptotic}. Because of this, our theory can and indeed does explain some of the empirical findings of Breiman.

\subsection{Dimension of a Model}

We define the dimension of a function $ \mu $ by
$$
\text{dim}(\mu) = \min_{A \subseteq \mathbb{N}}\big\{ |A| : \mu \in \text{span}(\{\psi_l : l \in A \})\big\},
$$
that is, the fewest number of basis elements needed to represent the function. If no such representation exists, then $ \text{dim}(\mu) = \infty $, and if $ \mu \equiv 0 $, then $ \text{dim}(\mu) = 0 $.
Using this definition and setting $ \alpha_0 = 1 $, it is easy to see that a stacked model of nested regressions with fixed weight vector $ \boldsymbol{\alpha} $ has dimension 
$ \max_{k = 0, 1, \dots, M}\big\{ d_k : \alpha_k \neq 0\big \}.
$

\subsection{Degrees of Freedom of a Stacked Model}

In fixed design regression, the degrees of freedom of an estimator $\hat f$ is defined as
$$
\text{df}(\hat f) = \sum_{i=1}^n \frac{\text{cov}(y_i, \hat f(\bx_i))}{\sigma^2}.
$$

It quantifies the optimism bias of the empirical risk with respect to the mean squared error \citep[Section 7.5 \& 7.7]{Hastie-Tibshirani-Friedman2009_book}, seen through the relation
\begin{equation}\label{eq:unbiased_mse_est}
    \mathbb{E}\big[\|\mathbf{f}-\hat{\mathbf{f}}\|^2\big] = \E\big[\|\mathbf{y}-\hat{\mathbf{f}}\|^2\big] + \frac{2\sigma^2\text{df}(\hat f)}{n} - \sigma^2.
\end{equation}
Although it is generally a population level quantity, in the case of linear regression, it is equivalent to the number of regressors which is known a priori; in particular, $ \text{df}(\hat \mu_k) = d_k $ for each $k=1,2,\ldots,M$.
Moreover, because covariance is linear in each of its arguments, the degrees of freedom for a stacked model with fixed weight vector $\boldsymbol{\alpha}$ likewise has a known formula.
It is equal to
\begin{equation}
    \text{df}(\boldsymbol{\alpha}) = \sum_{k=1}^M \alpha_k d_k.
\end{equation}

\subsection{Stacking via Complexity Penalization}

The identity \eqref{eq:unbiased_mse_est} immediately suggests that an unbiased estimator of the population risk of the stacked model is $ R(\boldsymbol{\alpha})+(2\sigma^2/n)\text{df}(\boldsymbol{\alpha}) $, which we may then optimize, subject to $ \boldsymbol{\alpha} \geq \mathbf{0} $, to learn the stacking weights. However, a major problem with this estimator is that unbiasedness holds as long as the stacking weights are nonadaptive. So there is no guarantee that  $ R(\hat{\boldsymbol{\alpha}})+(2\sigma^2/n)\text{df}(\hat{\boldsymbol{\alpha}}) $ is an unbiased estimator of the expected population risk of the stacked model with adaptive weights $\hat{\boldsymbol{\alpha}} $, where $ \hat{\boldsymbol{\alpha}} $ minimizes $ R(\boldsymbol{\alpha})+(2\sigma^2/n)\text{df}(\boldsymbol{\alpha}) $ subject to $ \boldsymbol{\alpha} \geq \mathbf{0} $. In fact, for the stacked model with these weights, it turns out that
\begin{equation} \label{loss:full}
\mathbb{E}\big[\|\mathbf{f}-\hat{\mathbf{f}}_{\text{stack}}\|^2\big] = \mathbb{E}\Bigg[R(\hat{\boldsymbol{\alpha}}) + \frac{2\sigma^2}{n}\text{df}(\hat{\boldsymbol{\alpha}})+ \frac{4\sigma^2}{n}\|\hat{\boldsymbol{\alpha}}\|_{\ell_0} - \frac{4\sigma^2}{n} \hat{\boldsymbol{\alpha}}^{\Trans}\hat{\boldsymbol{\alpha}}_0 - \sigma^2  \Bigg],
\end{equation}
where $ \hat{\boldsymbol{\alpha}}_0(k) = \#\{j \leq k: \hat\alpha_j \neq 0 \} $. It therefore becomes clear that a good proxy for the population risk of a stacked model with adaptive weights should involve more than just the empirical risk  $R(\boldsymbol{\alpha})$ and degrees of freedom $\text{df}(\boldsymbol{\alpha})$, and even be discontinuous in the weights. Inspired by this observation, we propose learning the weights via solving the following program:
\begin{equation} \label{loss:lasso}
\begin{aligned}
& \text{minimize} \quad R(\boldsymbol{\alpha}) + \frac{2\tau\sigma^2}{n}\text{df}(\boldsymbol{\alpha}) + \frac{(\lambda-\tau)^2_{+}}{\lambda}\frac{\sigma^2}{n}\text{dim}(\boldsymbol{\alpha})  \\
& \text{subject to} \quad \boldsymbol{\alpha} \geq \mathbf{0}, \\
\end{aligned}
\end{equation}
where $ \text{dim}(\boldsymbol{\alpha}) = \max_{k = 0, 1, \dots, M}\big\{ d_k : \alpha_k \neq 0\big\} $ with $ \alpha_0 = 1 $, and $ \tau, \lambda > 0 $ are tuning parameters chosen by the user. It turns out that the solution $\hat{\boldsymbol{\alpha}}$ to program \eqref{loss:lasso} satisfies $ \sum_{k=1}^M \hat\alpha_k \leq 1 $, in which case $ \text{dim}(\hat{\boldsymbol{\alpha}}) \geq \max\{ \|\hat{\boldsymbol{\alpha}}\|_{\ell_0}, \text{df}(\hat{\boldsymbol{\alpha}}) \} $. Thus, among weight sequences satisfying $ \sum_{k=1}^M \alpha_k \leq 1 $, we see that $\text{dim}(\boldsymbol{\alpha})$ is a stronger penalty than either $ \|\boldsymbol{\alpha}\|_{\ell_0} $ or $ \text{df}(\boldsymbol{\alpha}) $.

The objective function \eqref{loss:lasso} modulates the empirical risk of the stacked ensemble in two distinct ways. Firstly, similar to $\ell_1$ regularization in nonnegative Lasso, it shrinks the weights according to the size of the models, as reflected by the term $\text{df}(\boldsymbol{\alpha})$. Secondly, similar to $\ell_0$ regularization, the objective function also takes into account the number of basis functions needed to represent the model through the term $\text{dim}(\boldsymbol{\alpha})$. This promotes model parsimony by encouraging a sparse selection of models.

Crucially, while program \eqref{loss:lasso} is nonconvex due to $ \text{dim}(\boldsymbol{\alpha}) $, we can leverage the nested structure of the estimators and reduce it to an isotonic regression problem, which can be solved in $O(M)$ time, i.e., linear time in the number of models. Note that this is orderwise the same complexity as finding the data-selected best single model \eqref{eq:best}. This means that we could potentially stack thousands of models and not worry about computational issues.

It is worth noting that, like conventional stacking, we do not impose any sum constraint on the weights, i.e., stacking forms a conical combination of models. Indeed, as we shall see, such a constraint is essentially superfluous as the unconstrained solution always satisfies $ \sum_{k=1}^M \hat\alpha_k 
 < 1 $ with near equality in most cases. This fact also corroborates with Breiman's experiments, in particular, when stacking nested decision trees or linear regressions determined by stepwise deletion. We have also found both empirically and theoretically that enforcing the equality constraint can lead to inferior performance, as otherwise one is limiting the potentially beneficial effects of shrinkage.

\subsection{Data-selected Best Single Model}

In our analysis, we will compare the performance of the stacked model \eqref{loss:lasso} to that of the \emph{data-selected best single model}, defined as $ \hat f_{\text{best}}(\bx) = \hat \mu_{\hat m}(\bx) $, where
\begin{equation}\label{eq:best}
\hat m \in \argmin_{k = 0, 1, \dots, M} \; \|\mathbf{y}-\hat{\boldsymbol{\mu}}_k\|^2 + \lambda\frac{\sigma^2 d_k}{n},
\end{equation}
where $\lambda > 0$ is a tuning parameter whose value we will take to be the same in \eqref{loss:lasso} and \eqref{eq:best}.
If ties exist in \eqref{eq:best}, we choose the smallest $ \hat m $.
Note that we have also included the null model (i.e., $\hat\mu_0 \equiv 0 $) as a candidate model, which will simplify some of the forthcoming theory. The presence of the penalty term $ \lambda\sigma^2 d_k/n $ in \eqref{eq:best} is crucial for modulating the complexity of the chosen model. For example, without it, due to the nested structure of the models, the data-selected best single model is the most complex model, $ \hat \mu_M(\bx) $, as it has the highest goodness-of-fit. In this case, the population risk $ \mathbb{E}\big[\|\mathbf{f}-\hat{\mathbf{f}}_{\text{best}}\|^2\big] = \|\mathbf{f}-\mathbf{f}_M\|^2 + \sigma^2 d_M/n $ can be very large if $ d_M$ has the same order as $ n$, even though $ \|\mathbf{f}-\mathbf{f}_M\|^2 $ may be very small.

\begin{remark}
An equivalent way of formulating program \eqref{loss:lasso} is to minimize $ R(\boldsymbol{\alpha}) + (2\tau\sigma^2/n)\text{df}(\boldsymbol{\alpha}) $ subject to the constraints $ \boldsymbol{\alpha} \geq \mathbf{0} $ and $ \text{dim}(\boldsymbol{\alpha}) \leq \text{dim}(\hat f_{\text{best}}) = d_{\hat m} $. If $\tau = 0$, stacking becomes data-selected best single model selection according to \eqref{eq:best}.
\end{remark}

The criterion $ R_k + \lambda \sigma^2 d_k/n $ holds significance in various model selection contexts. Specifically, for $ \lambda = 2 $, it matches Mallows's $C_p$. In the present Gaussian linear regression setting, if $ \lambda = 2 $, it is equivalent to the Akaike Information Criterion (AIC) and Stein's Unbiased Risk Estimate (SURE) \citep{stein1981estimation}, and if $ \lambda = \log(n) $, it corresponds to the Bayesian Information Criterion (BIC). These model selection criteria operate on a single model at a time, seeking the one that optimally balances goodness-of-fit and complexity. In contrast, stacking leverages the diverse strengths of multiple models through an optimal linear combination.

Despite potential performance limitations in certain settings (e.g., when the signal-to-noise ratio $\|\mathbf{f}\|^2/\sigma^2$ is small), AIC and BIC remain widely used across many applied disciplines. For example, AIC and BIC continue to be standard model selection criteria in econometrics \citep{zhang2015cross}, biology \citep{dziak2019sensitivity}, and psychology \citep{vrieze2012model}. These fields, including many others, place high value on model interpretability, which can be diminished with model combining approaches. Our forthcoming results provide a quantitative basis for navigating the inherent trade-off between accuracy and interpretability when deciding between single model selection and ensemble methods.

\subsection{Stacking Versus Pruning Decision Trees}

When the nested subspaces arise from a sequence of decision trees obtained via Breiman's approach (see Section \ref{sec:tree}), model selection via \eqref{eq:best} corresponds exactly to cost-complexity pruning
 \citep[Section 9.2.2]{Hastie-Tibshirani-Friedman2009_book}. 
Meanwhile, the stacked model $\hat f_{\text{stack}}$ also takes the form of a decision tree with the same structure as the largest tree in the stack. 
It consists of $\text{dim}(\hat f_{\text{stack}})$ internal nodes and outputs $\hat f_{\text{stack}}(\bx) = \sum_{k=1}^M\hat\alpha_k \overline y_k$, where $\overline y_k$ represents the output of tree $T_k$ at $\bx$, which is the sample mean of observations in the cell containing $\bx$.
Since for $k=1,2,\ldots,M$, the leaf node of $T_k$ containing $\bx$ is an ancestor of that containing $\bx$ in $T_M$, this has the effect of shrinking the predictions over each leaf in $T_M$ to its ancestors' values, similar to the method of \citet{agarwal2022hierarchical}.

\section{Main Results}

Our main result is that stacked model with weights from \eqref{loss:lasso} \emph{strictly} outperforms the data-selected best single model \eqref{eq:best} provided the dimensions of the individual models differ by a constant. Put another way, in the parlance of statistical decision theory, the data-selected best single model $\hat f_{\text{best}}$ (and by implication, the estimator selected by AIC and BIC) is \emph{inadmissible}. 

We also provide an explicit lower bound on the difference between their population risks, which reveals that more substantial performance gains occur in settings with weak signals and small sample sizes. Evidently, this gap tends to be larger when the signal-to-noise ratio $ \|\mathbf{f}\|^2/\sigma^2 $ or sample size $n$ is small—agreeing with the adage that ensemble methods tend to work particularly well in these settings, whereas the data-selected best single model could potentially be unstable.

We emphasize again that both $\hat f_{\text{best}}$ and $\hat f_{\text{stack}}$ have the same computational complexity, that is, $O(M)$, which means stacking offers improved performance at no additional computational cost.

\begin{theorem}\label{thm:main}
Suppose $ 0 < \tau < 2 $ and $d_k \geq d_{k-1} + 4/(2-\tau)$ for all $ k $. The population risk of the stacked model with weights from \eqref{loss:lasso} is strictly less than the population risk of the data-selected best single model \eqref{eq:best}; furthermore, if $d_k \geq d_{k-1} + 5/(2-\tau)$ for all $ k $, there exists a universal constant $ C > 0 $ such that 
\begin{equation} \label{eq:main_inequality}
\mathbb{E}\big[\|\mathbf{f}-\hat{\mathbf{f}}_{\text{stack}}\|^2\big] \leq \mathbb{E}\big[\| \mathbf{f} - \hat{\mathbf{f}}_{\text{best}}\|^2\big] - C\frac{\sigma^2}{n}\frac{d_1^2\tau(2-\tau)}{d_1+n(\|\mathbf{f}\|^2/\sigma^2)}.
\end{equation}
\end{theorem}

\begin{remark}\label{rmk:ddk}
Since the $ d_k $ are integers, the hypotheses of Theorem \ref{thm:main} become $ d_k \geq d_{k-1} + 3 $ when $ 0 < \tau \leq 1/3 $. Thus, a sufficient condition for the stacked model to be provably superior to the data-selected best single model is that the constituent models differ by at least three dimensions. We presently do not know if the condition is also necessary. If we knew the theoretically optimal $ \tau $, which involves unknown population level quantities, then the condition can be improved to $ d_k \geq d_{k-1} + 2 $.
\end{remark}

The proof hinges on a few novel ingredients that we briefly introduce here, and will explain further in the next section.
First, the nestedness of the base estimators and the nonnegative weight constraints allow us to recast the optimization program \eqref{loss:lasso} as isotonic regression (see Lemma \ref{lem:equi}).
Second, we then build on known properties of solutions to isotonic regression to obtain an explicit representation of $\hfstack$ and compare it to that of $\hfbest$ (see Theorem \ref{thm:fstack}).
To compare the population risks of the two representations, we apply an extension of Stein's Lemma for discontinuous functions \citep{tibshirani2015degrees}, which allows us to decompose the model degrees of freedom into two portions, one of which is attributable to the model selection mechanism (Tibshirani called this contribution the \emph{search degrees of freedom}). This proof strategy yields the following lower bound for the population risk gap
\begin{equation} \label{eq:improve}
\begin{aligned}
& \mathbb{E}\big[\| \mathbf{f} - \hat{\mathbf{f}}_{\text{best}}\|^2\big] -  \mathbb{E}\big[\|\mathbf{f}-\hat{\mathbf{f}}_{\text{stack}}\|^2\big] \\ & \qquad \geq 
\frac{\sigma^2\tau(2-\tau)}{n}\mathbb{E}\Bigg[\min_{1 \leq k \leq M}\frac{(d_k-4k/(2-\tau))^2}{(n/\sigma^2)(R_0-R_{k})}\Bigg] + 2\min\Big\{1, \frac{\tau}{\lambda}\Big\}\frac{\sigma^2}{n}\text{sdf}(\hat f_{\text{best}}).
\end{aligned}
\end{equation}
Third, to reveal how the population risk gap depends on the model parameters, we apply a martingale argument to further lower bound \eqref{eq:improve}, yielding \eqref{eq:main_inequality} in Theorem \ref{thm:main}. To see why martingale theory enters the discussion, note that, by Jensen's inequality, 
\begin{equation} \label{eq:maxbound1}
\mathbb{E}\Bigg[\min_{1 \leq k \leq M}\frac{(d_k-4k/(2-\tau))^2}{(n/\sigma^2)(R_0-R_{k})}\Bigg] \geq \Bigg(\mathbb{E}\Bigg[\max_{1 \leq k \leq M}\frac{(n/\sigma^2)(R_0-R_{k})}{(d_k-4k/(2-\tau))^2}\Bigg]\Bigg)^{-1}.
\end{equation}
We then use a powerful inequality due to \cite{shorack1976inequalities} to upper bound the maximum of the zero mean process $\frac{(n/\sigma^2)(R_0-R_{k}-\mathbb{E}[R_0-R_{k}])}{(d_k-4k/(2-\tau))^2}$ by the maximum of a positive submartingale. Finally, we apply Doob's maximal inequality \citep[Theorem 4.4.4]{Durrett_2019} on this submartingale to show that 
\begin{equation} \label{eq:maxbound2}
\mathbb{E}\Bigg[\max_{1 \leq k \leq M}\frac{(n/\sigma^2)(R_0-R_{k})}{(d_k-4k/(2-\tau))^2}\Bigg] \leq \frac{d_1+n(\|\mathbf{f}\|^2/\sigma^2)}{Cd_1^2},
\end{equation}
for some universal constant $C > 0$. The lower bound \eqref{eq:main_inequality} is obtained by combining \eqref{eq:improve}, \eqref{eq:maxbound1}, and \eqref{eq:maxbound2}, since $\text{sdf}(\hat f_{\text{best}}) \geq 0$.

\subsection{Connections to James-Stein Shrinkage}
The second term in \eqref{eq:improve} is in terms of the (strictly positive) search degrees of freedom of $\hat f_{\text{best}}$, which is due to the model selection mechanism of stacking.
Meanwhile, the first term
is reminiscent of the improvement that would arise from applying positive-part James-Stein shrinkage factors \citep{stein1956inadmissibility,james1961estimation, baranchik1964multiple} across the successive model subspace differences $\mathcal{A}_k\setminus \mathcal{A}_{k-1}$, $ k = 1, 2, \dots, M$, viz.,
\begin{equation} \label{eq:JS}
\Bigg(1-\frac{(d_k-d_{k-1})-2}{(n/\sigma^2)(R_{k-1}-R_k)}\Bigg)_{+}\;\big(\hat\mu_k(\bx)-\hat\mu_{k-1}(\bx)\big).
\end{equation}

These similarities are not a coincidence.
Our proof strategy shows that one may think of stacking, in the nested regressions setting, as essentially performing shrinkage to an adaptively selected subset of models. The condition $ d_k-d_{k-1} \geq 3$ needed in Theorem \ref{thm:main} (see Remark \ref{rmk:ddk}) is akin to the necessary and sufficient condition $d_k-d_{k-1} \geq 3$ for \eqref{eq:JS} to lead to an improvement in population risk over $ \hat\mu_k(\bx)-\hat\mu_{k-1}(\bx) $.

However, despite these parallels, stacking does \emph{not} apply statistically independent shrinkage factors, like the ones in \eqref{eq:JS}, across the successive model subspace differences $\mathcal{A}_k\setminus \mathcal{A}_{k-1}$. Instead, stacking applies dependent shrinkage factors across adaptively chosen model subspace differences $\mathcal{A}_k\setminus \mathcal{A}_l$, where $ k > l$; see the forthcoming representation in \eqref{eq:stack}. We will discuss the adaptive shrinkage properties of stacking further in the next few sections.

\section{Reduction to Isotonic Regression}

Due to the nested structure of the models and the nonnegative weight constraint, the problem of determining the stacking weights from problem \eqref{loss:lasso} can be recast as isotonic regression. To see this, let $ \Delta d_k = d_k - d_{k-1} $ and $ \Delta R_k = R_{k-1}-R_k $. Recall that $ R_k < R_{k-1}$ for all $ k $, implying that $ \Delta R_k > 0 $ for all $ k $. Making the change of variables $ \alpha_k = \beta_{k+1} - \beta_k $, for $ k = 1, 2, \dots, M $, with $\beta_{M+1} = 1$, the nonnegativity constraint on the weights $ \alpha_k $ becomes an isotonic constraint on the $\beta_k$. 
Some further algebraic manipulation gives the following equivalence of optimization programs.
\begin{lemma}\label{lem:equi}
    Program \eqref{loss:lasso} is equivalent to:
    \begin{equation} \label{loss:1}
    \begin{aligned}
    & \text{minimize} \quad \sum_{k=1}^M w_k(z_k - \beta_k)^2 + \xi\sum_{k=1}^M \Delta d_k\mathbf{1}(\beta_k \neq 1) \\
    &\text{subject to} \quad \beta_1 \leq \beta_2 \leq \cdots \leq \beta_M \leq 1,
    \end{aligned}
    \end{equation}
    where $ w_k = \Delta R_k > 0  $, $ z_k = (\tau\sigma^2/n)(\Delta d_k/\Delta R_k) > 0 $, and $ \xi = (\sigma^2/n)((\lambda-\tau)^2_{+}/\lambda) $.
\end{lemma}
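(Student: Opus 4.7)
The plan is to verify the equivalence by carrying out the change of variables $\alpha_k = \beta_{k+1}-\beta_k$ (with boundary condition $\beta_{M+1}=1$) in each of the three pieces of the objective of \eqref{loss:lasso}, and show that after dropping terms that do not depend on $\boldsymbol{\beta}$, what remains is exactly \eqref{loss:1}. The nonnegativity $\alpha_k \geq 0$ immediately translates into $\beta_1 \leq \beta_2 \leq \cdots \leq \beta_M \leq \beta_{M+1} = 1$, which is the isotonic constraint together with the cap at $1$.

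First I would handle the empirical risk term $R(\boldsymbol{\alpha})$. Writing $e_k := \hat\mu_k - \hat\mu_{k-1}$, Abel summation gives
\[
\sum_{k=1}^M \alpha_k \boldsymbol{\hat\mu}_k = \sum_{k=1}^M (1-\beta_k)\, \mathbf{e}_k,
\]
so that $\mathbf{y} - \sum_k \alpha_k \boldsymbol{\hat\mu}_k = (\mathbf{y} - \boldsymbol{\hat\mu}_M) + \sum_k \beta_k \mathbf{e}_k$. Because the subspaces $\mathcal{A}_k$ are nested and spanned by orthonormal $\psi_l$'s, the increments $\mathbf{e}_k$ are mutually orthogonal, orthogonal to $\mathbf{y}-\boldsymbol{\hat\mu}_M$, and satisfy $\|\mathbf{e}_k\|^2 = R_{k-1}-R_k = \Delta R_k$. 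Expanding the squared norm therefore collapses to
\[
R(\boldsymbol{\alpha}) = R_M + \sum_{k=1}^M \beta_k^2 \,\Delta R_k.
\]
For the degrees-of-freedom term, the same Abel-summation identity gives $\mathrm{df}(\boldsymbol{\alpha}) = \sum_k (1-\beta_k)\Delta d_k = d_M - \sum_k \beta_k \Delta d_k$. Adding the two and completing the square in $\beta_k$ with weight $\Delta R_k$ produces
\[
R(\boldsymbol{\alpha}) + \frac{2\tau\sigma^2}{n}\mathrm{df}(\boldsymbol{\alpha}) = \sum_{k=1}^M \Delta R_k(\beta_k - z_k)^2 + C,
\]
where $z_k = (\tau\sigma^2/n)(\Delta d_k/\Delta R_k)$ and $C$ depends only on $R_M, d_M, \tau, \sigma^2, n$ and $\{\Delta R_k, \Delta d_k\}$, not on $\boldsymbol{\beta}$. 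This matches the weighted least-squares piece of \eqref{loss:1} with $w_k = \Delta R_k$.

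Next I would handle the $\mathrm{dim}(\boldsymbol{\alpha})$ term. The isotonic constraint together with $\beta_{M+1}=1$ implies that $\{k : \beta_k < 1\}$ is a (possibly empty) initial segment $\{1,\ldots,k^\star\}$, and moreover $\alpha_{k} \neq 0$ for the largest such $k$ (namely $k^\star$) while $\alpha_k = 0$ for $k > k^\star$. Hence $\mathrm{dim}(\boldsymbol{\alpha}) = d_{k^\star}$, and since $d_0 = 0$ we can write this telescopically as
\[
\mathrm{dim}(\boldsymbol{\alpha}) = \sum_{k=1}^{k^\star}\Delta d_k = \sum_{k=1}^M \Delta d_k\,\mathbf{1}(\beta_k \neq 1),
\]
which is exactly the penalty in \eqref{loss:1} once multiplied by $\xi = (\sigma^2/n)((\lambda-\tau)_+^2/\lambda)$.

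The whole argument is a careful algebraic translation, so I don't expect a substantive obstacle—just bookkeeping. The one subtle spot is the $\mathrm{dim}$ reduction: one must check that identifying $\mathrm{dim}(\boldsymbol{\alpha}) = d_{k^\star}$ is valid given the convention $\alpha_0 = 1$ and null model $\hat\mu_0 \equiv 0$, and that the initial-segment structure of $\{k : \beta_k < 1\}$ under the isotonic constraint exactly matches $\{k : \alpha_j \neq 0 \text{ for some } j \geq k\}$ at the cutoff. Once that is observed, combining the three displays and dropping the $\boldsymbol{\beta}$-free constant $C$ yields program \eqref{loss:1}, and the map $\boldsymbol{\beta} \leftrightarrow \boldsymbol{\alpha}$ is a bijection between feasible sets, completing the equivalence.
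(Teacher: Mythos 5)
Your proposal is correct and follows essentially the same route as the paper's proof: the same change of variables $\alpha_k=\beta_{k+1}-\beta_k$ with $\beta_{M+1}=1$, the same orthogonality of the increments $\boldsymbol{\hat\mu}_k-\boldsymbol{\hat\mu}_{k-1}$ (with $\|\boldsymbol{\hat\mu}_k-\boldsymbol{\hat\mu}_{k-1}\|^2=\Delta R_k$) to expand the empirical risk, the same completion of the square yielding $w_k=\Delta R_k$ and $z_k=(\tau\sigma^2/n)(\Delta d_k/\Delta R_k)$, and the same initial-segment/telescoping argument converting $\mathrm{dim}(\boldsymbol{\alpha})$ into $\sum_k\Delta d_k\mathbf{1}(\beta_k\neq 1)$. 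The only cosmetic difference is that the paper works in the variables $c_k=1-\beta_k$ and records the risk expansion as a separate lemma ($R_0+\sum_k\Delta R_k(c_k^2-2c_k)$), which is algebraically identical to your $R_M+\sum_k\beta_k^2\Delta R_k$.
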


We shall henceforth refer to the stacking weights $\hat{\boldsymbol{\alpha}}$ as \emph{the} solution to program \eqref{loss:lasso}, since it turns out to be almost surely unique. Crucially, using known properties of isotonic regression, the solution admits a closed form expression in terms of the following nondecreasing minimax sequence \eqref{eq:minmax}, which allows us to connect $\hat f_{\text{stack}}$ and $ \hat f_{\text{best}}$:
\begin{equation} \label{eq:minmax}
\hat \gamma_k = \frac{\sigma^2}{n}\min_{k\leq i\leq M}\max_{0 \leq j < k}\frac{d_{i}-d_{j}}{R_{j}-R_{i}}, \quad k = 1, 2, \dots, M.
\end{equation}
The vector of this sequence is denoted as $\hat{\boldsymbol{\gamma}} = (\hat\gamma_1, \hat\gamma_2, \dots, \hat\gamma_M)^{\Trans}$. In what follows, it will be convenient to define $ \hat\gamma_0 = 0 $ and $ \hat\gamma_{M+1} = \infty $.

\begin{theorem}\label{thm:fstack}
The (almost surely) unique solution to program \eqref{loss:lasso} is
\begin{equation} \label{eq:solution}
\hat\alpha_k = (1-\tau\hat\gamma_k)\mathbf{1}(\hat\gamma_k < \gamma) - (1-\tau\hat\gamma_{k+1})\mathbf{1}(\hat\gamma_{k+1} < \gamma), \quad k = 1, 2, \dots, M,
\end{equation}
where $ \gamma = \min\{ 1/\tau, 1/\lambda\} $, and, consequently, $ \sum_{k=1}^M \hat\alpha_k = (1-\tau\hat\gamma_1)\mathbf{1}(\hat\gamma_1 < \gamma) < 1 $. Furthermore, the stacked model can be written as
\begin{equation} \label{eq:stack}
\hat f_{\text{stack}}(\bx) = \sum_{k=1}^M (\hat\mu_{k}(\bx)-\hat\mu_{k-1}(\bx))(1-\tau\hat \gamma_k)\mathbf{1}(\hat\gamma_k< \gamma),
\end{equation}
and the data-selected best single model can be written as
\begin{equation} \label{eq:best_form}
\hat f_{\text{best}}(\bx) = \sum_{k=1}^M (\hat\mu_{k}(\bx)-\hat\mu_{k-1}(\bx))\mathbf{1}(\hat\gamma_k < 1/\lambda).
\end{equation}
\end{theorem}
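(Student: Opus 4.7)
The plan is to leverage Lemma \ref{lem:equi} to recast program \eqref{loss:lasso} as the modified isotonic regression \eqref{loss:1}, then solve the latter in closed form and back-transform to obtain \eqref{eq:solution}, \eqref{eq:stack}, and \eqref{eq:best_form}. The starting point is the pure weighted isotonic regression problem obtained by dropping the upper bound $\beta_M \leq 1$ and the discrete penalty, namely $\min \sum_k w_k(z_k - \beta_k)^2$ subject to $\beta_1 \leq \cdots \leq \beta_M$. The classical Ayer--Brunk--Ewing--Reid--Silverman min-max representation gives $\tilde\beta_k = \min_{k\leq i\leq M}\max_{0\leq j<k}\frac{\sum_{l=j+1}^{i} w_l z_l}{\sum_{l=j+1}^{i} w_l}$; substituting $w_l = \Delta R_l$ and $w_l z_l = (\tau\sigma^2/n)\Delta d_l$ makes both numerator and denominator telescope, yielding $\tilde\beta_k = \tau\hat\gamma_k$ with $\hat\gamma_k$ as in \eqref{eq:minmax}.

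To incorporate the upper bound and the $\ell_0$-style penalty, I would exploit the structural observation that isotonicity forces $\{k : \beta_k = 1\}$ to be a (possibly empty) terminal block $\{k^*, k^*+1, \ldots, M\}$. Conditional on $k^*$, the penalty reduces to the constant $\xi d_{k^*-1}$, and the remaining isotonic subproblem on $\{1, \ldots, k^*-1\}$ with strict upper bound below $1$ admits a min-max solution that agrees with $\min\{\tau\hat\gamma_k, 1\}$ on its range. Weighing the squared-error savings from reducing $\beta_k$ below $1$ against the penalty $\xi \Delta d_k$ of doing so, the optimal cutoff is $k^* = \min\{k : \hat\gamma_k \geq \gamma\}$ with $\gamma = \min\{1/\tau, 1/\lambda\}$. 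In the regime $\tau \geq \lambda$, $\xi = 0$ and the threshold $\gamma = 1/\tau$ is dictated by the upper bound alone; for $\tau < \lambda$, the penalty tightens it to $\gamma = 1/\lambda$. The resulting solution is $\hat\beta_k = \tau\hat\gamma_k$ when $\hat\gamma_k < \gamma$ and $\hat\beta_k = 1$ otherwise; almost-sure uniqueness follows because the boundary event $\hat\gamma_k = \gamma$ has measure zero by continuity of the $R_k$.

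Back-transforming via $\hat\alpha_k = \hat\beta_{k+1} - \hat\beta_k$ with $\hat\beta_{M+1} = 1$ and the convention $\hat\gamma_{M+1} = \infty$ yields \eqref{eq:solution}, while telescoping gives $\sum_k \hat\alpha_k = 1 - \hat\beta_1 = (1-\tau\hat\gamma_1)\mathbf{1}(\hat\gamma_1 < \gamma) < 1$ almost surely (using $\hat\gamma_1 > 0$ under the standing assumption). Formula \eqref{eq:stack} follows from summation by parts applied to $\sum_k \hat\alpha_k \hat\mu_k$, with $\hat\mu_0 \equiv 0$. For \eqref{eq:best_form}, unpacking the definition \eqref{eq:minmax} shows that $\hat\gamma_k < 1/\lambda$ if and only if $\min_{i\geq k}(R_i + \lambda\sigma^2 d_i/n) < \min_{j<k}(R_j + \lambda\sigma^2 d_j/n)$. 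Comparing with \eqref{eq:best} establishes $\{k : \hat\gamma_k < 1/\lambda\} = \{1, 2, \ldots, \hat m\}$, so that the telescoping sum $\sum_{k=1}^M(\hat\mu_k - \hat\mu_{k-1})\mathbf{1}(\hat\gamma_k < 1/\lambda)$ collapses to $\hat\mu_{\hat m} = \hat f_{\text{best}}$.

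The main obstacle will be rigorously justifying the threshold characterization of the cutoff $k^*$, since the discontinuous penalty interacts nontrivially with the isotonic constraint and precludes simple per-coordinate thresholding. In particular, altering $k^*$ also changes the restricted isotonic solution on $\{1, \ldots, k^*-1\}$, so a disciplined comparison of objective values across successive candidates for $k^*$, together with careful use of the nondecreasing property of $\hat\gamma_k$, is needed to confirm that $\gamma = \min\{1/\tau, 1/\lambda\}$ is indeed the correct threshold in both the $\tau \geq \lambda$ and $\tau < \lambda$ regimes.
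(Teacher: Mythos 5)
Your architecture matches the paper's: reduce to \eqref{loss:1} via Lemma \ref{lem:equi}, solve the weighted isotonic part by the min--max formula (which telescopes to $\tau\hat\gamma_k$ exactly as you say), observe that isotonicity forces $\{k:\beta_k=1\}$ to be a terminal block, and back-transform. Your derivation of \eqref{eq:best_form} is actually a nice shortcut: the equivalence $\hat\gamma_k<1/\lambda \iff \min_{i\geq k}(R_i+\lambda\sigma^2 d_i/n)<\min_{j<k}(R_j+\lambda\sigma^2 d_j/n)$ gives $\{k:\hat\gamma_k<1/\lambda\}=\{1,\dots,\hat m\}$ more directly than the paper's Lemma \ref{lem:mallow}, which routes through the change points of the greatest convex minorant. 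The case $\tau\geq\lambda$ (where $\xi=0$ and only the clipping at $1$ matters) is also fully covered by your appeal to the bounded isotonic solution, i.e., the paper's Lemma \ref{lem:opt}.

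The genuine gap is the step you yourself flag as ``the main obstacle'': for $\tau<\lambda$ you assert, but do not prove, that the optimal cutoff is $k^{*}=\min\{k:\hat\gamma_k\geq 1/\lambda\}$, and that on $\{1,\dots,k^{*}-1\}$ the restricted solution coincides with the \emph{unrestricted} isotonic fit $\tau\hat\gamma_k$. This is not a routine verification --- it is the technical core of the theorem and occupies roughly half of the paper's proof. The difficulty is exactly the one you name: moving the cutoff merges or splits blocks of the isotonic fit, so a per-block comparison of ``$\beta=1$ versus $\beta=$ block average'' (which would give the single-block threshold $\hat\gamma=1/\lambda$ via the factorization $-\frac{\Delta R}{\tau\lambda}(\tau\beta-\lambda)(\lambda\beta-\tau)$ as in \eqref{ineq:rq_large}) does not suffice. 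The paper handles this with two separate contradiction arguments: if the last non-unit index $r_q$ exceeds $\hat m$, raising the final block to $1$ strictly decreases the objective; if $r_q<\hat m$, lowering a block of ones to the appropriate minimax value strictly decreases it, and the latter requires the multi-block quadratic inequality \eqref{ineq:3}, proved by optimizing over the auxiliary variable $\tilde b$ and the block means $t_j$ subject to the ordering constraints. It also requires the observation \eqref{eq:hatbeta} that the restricted and unrestricted isotonic fits agree below a change point, plus Lemma \ref{lem:prob0} to rule out boundary ties and obtain almost-sure uniqueness. None of this is supplied in your proposal, so as written it establishes the theorem only for $\tau\geq\lambda$; to complete it you would need to carry out the block-merging comparison in full.
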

According to \eqref{eq:best_form}, the data-selected best single model can be written as a telescoping sum of predictive differences $ \hat\mu_k(\bx)-\hat\mu_{k-1}(\bx) $ across successive submodels, up to the selected model $ \hat \mu_{\hat m}$. In contrast, according to \eqref{eq:stack}, stacking additionally shrinks these predictive differences towards zero, and because $\{ \hat\gamma_k\} $ is an increasing sequence, larger models are shrunk more than smaller models.
The adaptive shrinkage factor $ 1-\tau\hat\gamma_k $ in \eqref{eq:stack} is the driving force behind its superior performance. So in summary, in the setting of nested regressions, \emph{stacking performs both model selection and shrinkage simultaneously.}

\subsection{Implementation}

Because we know the general form of the solution \eqref{eq:solution} in terms of $ \hat{\boldsymbol{\gamma}} $, a quantity free of $ \lambda $ and $ \tau $, we simply need solve for $ \hat{\boldsymbol{\gamma}} $ using the related program
\begin{equation} \label{loss:iso}
\begin{aligned}
& \text{minimize} \quad \sum_{k=1}^M w_k(z_k - \beta_k)^2 \\
&\text{subject to} \quad \beta_1 \leq \beta_2 \leq \cdots \leq \beta_M,
\end{aligned}
\end{equation}
which we recognize as a weighted isotonic regression problem. The solution admits a closed form expression $ \hat{\boldsymbol{\beta}} = \hat{\boldsymbol{\gamma}} $, the minimax sequence \eqref{eq:minmax}, from which the general solution \eqref{eq:solution} can be directly obtained. Program \eqref{loss:iso} can be solved in $O(M)$ time using the Pooled Adjacent Violators Algorithm (PAVA) \citep{barlow1972statistical}, thereby permitting $ M $ to be in the thousands without incurring worrisome computational burden. The \texttt{isoreg()} function from base R
will implement program \eqref{loss:iso}.

Concurrent work by \cite{peng2023model} also explores the connection between aggregation (when the weights sum to one) and isotonic regression in the nested setting. However, it appears that the author was unaware of the explicit expression \eqref{eq:minmax} for weighted isotonic regression, and so pursued a different direction by relaxing the monotonicity constraint in \eqref{loss:iso}, whereby the $ \beta_j $ are decoupled and belong to a fixed interval, such as $ [0, 1] $. The resulting estimator applies statistically independent shrinkage factors (see \eqref{eq:JS}) across the successive model subspace differences $\mathcal{A}_k\setminus \mathcal{A}_{k-1}$, $ k = 1, 2, \dots, M$, in contrast to stacking \eqref{eq:stack}, which applies statistically dependent shrinkage factors across adaptively chosen model subspace differences $\mathcal{A}_k\setminus \mathcal{A}_l$, where $ k > l$. These dependencies make our analysis considerably more delicate. \cite{peng2023model} also focuses on demonstrating asymptotic optimality relative to the oracle best convex combination model (see Section \ref{sec:aggregation}), and does not make comparisons with the data-selected best single model \eqref{eq:best} as we do in the present work.

\section{Model Selection and Shrinkage in Stacking}
\label{sec:model_selection_and_shrinkage}
 
According to Theorem \ref{thm:fstack}, the version of stacking we analyzed performs both model selection and shrinkage simultaneously.
As discussed after Theorem \ref{thm:main} and as will be proved in Appendix \ref{app:proofs} in the supplementary material, the improvement in prediction performance over the data-selected best single model derives almost entirely from shrinking the weights in the telescoping sum \eqref{eq:best_form}.
Since the shrinkage factors are optimized, they naturally resemble James-Stein shrinkage factors, leading to a similar formula for the population risk gap \eqref{eq:improve}.
In this section, we compare shrinkage from stacking with prior work on shrinkage and discuss the generality of this interpretation of stacking.

\citet{agarwal2022hierarchical} introduced a hierarchical shrinkage procedure to regularize decision tree models.
When viewing the tree model as a linear regression onto orthogonal basis elements $\psi_l$ indexed by the internal nodes of the tree, they showed that their procedure was equivalent to performing ridge regression instead of linear regression on this basis.
Since the amount of shrinkage in ridge regression is controlled by a single parameter, theirs is therefore a more constrained form of shrinkage compared to what arises from stacking nested subtrees.
The latter optimizes the amount of shrinkage over $M$ different parameters, each controlling the amount of shrinkage over a separate block of the regression coefficients.
On the other hand, hierarchical shrinkage allows the basis elements $\psi_l$ to be unnormalized, with $\|\psi_l\|^2 = N_l/n$, where $N_l$ is the number of samples contained in node $l$. The ridge regression shrinkage factor for the coefficient of $\psi_l$ is then $\frac{N_l}{N_l + \lambda}$, which means that splits on nodes with fewer samples are penalized more strongly.
A theoretical or empirical performance comparison of the two forms of shrinkage is left for future work.

\citet{agarwal2022hierarchical} also showed empirically that the best performing tree model after shrinkage is usually much deeper than the best performing tree model prior to shrinkage.
In other words, by allowing a more nuanced bias-variance tradeoff, shrinkage allows for a ``larger model'' and makes use of features whose inclusion would not be justified otherwise.
\citet{breiman1996stacking}'s empirical findings for stacked subtrees present a similar story. 
In one experiment he found that, among a collection of $M=50$ subtrees, the stacked tree had $33$ internal nodes (the \emph{dimension of the model} in our terminology), whereas the (underperforming) best single tree had only $22$. 
Breiman was surprised by this, and explained it as the prediction in the stacked tree ``gathering strength'' from the data among ancestor terminal nodes in the smaller stacked subtrees.

Our work formalizes Breiman's notion of ``gathering strength'' as shrinkage, and rigorously proves its benefits.
On the other hand, we are not yet able to show theoretically how the prediction performance of stacking benefits from allowing for a larger model.
The optimization program \eqref{loss:lasso} was explicitly designed so that the additive representations \eqref{eq:stack} and \eqref{eq:best_form} for $\hat f_{\text{stack}}$ and $\hat f_{\text{best}}$ truncate at the same level.
This property is crucial in our proof of Theorem \ref{thm:main} because it allows us to compare the search degrees of freedom of the two estimators, $ \text{sdf}(\hat f_{\text{stack}})$ and  $ \text{sdf}(\hat f_{\text{best}})$, which would otherwise be challenging.
Recall that these quantities arise because we use an extension of Stein's Lemma for discontinuous functions \citep{tibshirani2015degrees}.
We need this to cope with the fact that both $\hat f_{\text{stack}}$ and $\hat f_{\text{best}}$ are discontinuous with respect to the $y$-data---they perform hard thresholding on a data-dependent quantity, arising from the model selection mechanism. 
In general, bounding the search degrees of freedom (e.g., from best subset selection) is known to be quite challenging \citep{mikkelsen2018best, tibshirani2019excess}. 
Truncating the additive representations for $\hat f_{\text{stack}}$ and $\hat f_{\text{best}}$ at the same level places their discontinuities (with respect to the $y$-data) at exactly the same locations, which causes their search degrees of freedom 
to be positive and proportional to each other, that is, $ \text{sdf}(\hat f_{\text{stack}}) = (1-\tau/\lambda)_{+}\text{sdf}(\hat f_{\text{best}}) \geq 0 $. 
Nonetheless, we conjecture that stacking nested regressions without this artificial constraint does lead to a larger and more predictive model.
Some partial results in this direction are presented in the next section.

Furthermore, while we have analyzed stacking in the context of nested regressions, the benefits of stacking have been empirically observed in much more general settings without any nested structure.
The shrinkage effect of stacking likely continues to play some role in improving prediction performance, but it is surely not the only factor at play.
We hypothesize that stacking in these settings benefits from combining different models, some more complex than $\hat f_{\text{best}}$, where each captures separate nuances in the data.

\section{Size of the Stacked Model}

In this section, we will provide theoretical evidence of the aforementioned phenomenon observed by Breiman and show that, in addition to having a sum less than one, for certain stacking weights, the stacked model puts nonzero weight on some constituent models that have higher complexity than the data-selected best single model when $ \lambda = 2 $ (e.g., the model selected by AIC, SURE, or Mallows's $C_p$). To this end, note that the quantity inside the expectation in the right hand side of \eqref{loss:full} is an unbiased estimator of the expected prediction error of the stacked model with adaptive weights $\hat{\boldsymbol{\alpha}} $, where $ \hat{\boldsymbol{\alpha}} $ minimizes $ R(\boldsymbol{\alpha})+(2\sigma^2/n)\text{df}(\boldsymbol{\alpha}) $ subject to $ \boldsymbol{\alpha} \geq \mathbf{0} $. Dropping the negative term involving $ 0 \leq \hat{\boldsymbol{\alpha}}^{\Trans}\hat{\boldsymbol{\alpha}}_0 \ll \|\hat{\boldsymbol{\alpha}}\|_{\ell_0} $, we may thus use 
$$
R(\boldsymbol{\alpha})+\frac{2\sigma^2}{n}\text{df}(\boldsymbol{\alpha}) + \frac{4\sigma^2}{n}\|\boldsymbol{\alpha}\|_{\ell_0},
$$
to hopefully better estimate the population risk of an (adaptive) stacked model, and solve the program
\begin{equation} \label{loss:l0}
\begin{aligned}
& \text{minimize} \quad R(\boldsymbol{\alpha}) + \frac{2\sigma^2}{n}\text{df}(\boldsymbol{\alpha}) + \frac{4\sigma^2}{n}\|\boldsymbol{\alpha}\|_{\ell_0}  \\
& \text{subject to} \quad \boldsymbol{\alpha} \geq \mathbf{0}.
\end{aligned}
\end{equation}

We see that the objective function penalizes the number of included models through $\|\boldsymbol{\alpha}\|_{\ell_0}$ as well as the complexity of the included models through $ \text{df}(\boldsymbol{\alpha}) $.
The optimization problem \eqref{loss:l0} is equivalent to (weighted) reduced isotonic regression \citep{gao2020estimation, haiminen2008algorithms} in which one fits an isotonic sequence subject to a constraint on the number of distinct values it can assume, i.e.,
\begin{equation} \label{loss:l0-equiv}
\begin{aligned}
& \text{minimize} \quad \sum_{k=1}^M w_k(z_k - \beta_k)^2 + \frac{4\sigma^2}{n}\sum_{k=1}^M \mathbf{1}(\beta_k \neq \beta_{k+1}) \\
&\text{subject to} \quad \beta_1 \leq \beta_2 \leq \cdots \leq \beta_M \leq 1,
\end{aligned}
\end{equation}
where $ w_k = \Delta R_k > 0  $ and $ z_k = (\sigma^2/n)(\Delta d_k/\Delta R_k) > 0 $.

Reduced isotonic regression can be implemented in $ O(M^3) $ time using a two-stage approach consisting of Pooled Adjacent Violators Algorithm (PAVA) \citep{barlow1972statistical} followed by dynamic programming for the $K$-segmentation problem \citep{bellman1961approximation}, e.g., see the procedure of \citep{haiminen2008algorithms}. The \texttt{scar} package in R provides a function for reduced isotonic regression.

It turns out that the stacked model with weights from solving \eqref{loss:l0} will always assign nonzero weight to models with dimension at least as large as that of the data-selected best single model, agreeing with what Breiman observed using cross-validated data. 

\begin{theorem}\label{thm:complexity}
The dimension of the stacked model with weights from \eqref{loss:l0} is no less than the dimension of the data-selected best single model \eqref{eq:best} with $ \lambda = 2$; that is,
$$
\text{dim}(\hat f_{\text{stack}}) \geq \text{dim}(\hat f_{\text{best}}).
$$
Furthermore, the weights sum to less than one, i.e., $ \sum_{k=1}^M \hat\alpha_k < 1 $.
\end{theorem}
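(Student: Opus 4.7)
The plan is to recast program \eqref{loss:l0} as the reduced isotonic regression \eqref{loss:l0-equiv} via the substitution $\alpha_k = \beta_{k+1}-\beta_k$ with $\beta_{M+1}=1$, analogously to Lemma \ref{lem:equi}. In the $\beta$ variables, $\sum_{k=1}^M \hat\alpha_k = 1 - \hat\beta_1$, so the second conclusion reduces to showing $\hat\beta_1 > 0$; and $\text{dim}(\hat f_{\text{stack}}) = d_{j^\star}$ with $j^\star := \max\{k \in \{1, \ldots, M\} : \hat\beta_k < \hat\beta_{k+1}\}$ (conventionally $0$ if no such $k$ exists), so the first conclusion reduces to $j^\star \geq \hat m$.

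For $\hat\beta_1 > 0$, I would use the fact that any minimizer of \eqref{loss:l0-equiv} is piecewise constant on blocks, and the optimal value on each block equals either its $z$-average $\sum_{k \in B} w_k z_k / \sum_{k \in B} w_k$ or the clipped upper value $1$. Since every $z_k = (\sigma^2/n)\Delta d_k/\Delta R_k > 0$, the first block's unconstrained average is strictly positive, forcing $\hat\beta_1 > 0$; any feasible perturbation with $\hat\beta_1 = 0$ either strictly raises the least-squares cost on the first block or creates an additional transition, and is therefore suboptimal.

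For $j^\star \geq \hat m$, I would argue by contradiction. Assume $j^\star < \hat m$, so the final block of $\hat\beta$ spans $[l, M]$ for some $l \leq \hat m$ with common value $\hat\beta_l = \cdots = \hat\beta_M = 1$ (the terminal boundary $\beta_{M+1}=1$ is absorbed without a transition). Compare against the perturbation $\beta'$ that splits this block at $\hat m$: set $\beta'_k = a := (\sigma^2/n)(d_{\hat m}-d_{l-1})/(R_{l-1}-R_{\hat m})$ for $l \leq k \leq \hat m$, and $\beta'_k = 1$ for $k > \hat m$. The defining AIC inequality $R_{l-1}+2\sigma^2 d_{l-1}/n \geq R_{\hat m}+2\sigma^2 d_{\hat m}/n$ for $\lambda = 2$ gives $R_{l-1}-R_{\hat m}\geq 2\sigma^2(d_{\hat m}-d_{l-1})/n$, so $a \leq 1/2 < 1$ and the split is feasible. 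Writing $S_w = R_{l-1}-R_{\hat m}$ and $S_{wz} = (\sigma^2/n)(d_{\hat m}-d_{l-1})$, the resulting objective change is $-(S_w-S_{wz})^2/S_w + 4\sigma^2/n$, where the first term is the LS saving and the second is the added L0 penalty for the new transition.

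The hard part will be pushing this LS saving strictly past $4\sigma^2/n$. The AIC bound $S_w \geq 2S_{wz}$ alone only gives $(S_w-S_{wz})^2/S_w \geq S_w/4$, which is not automatically larger than $4\sigma^2/n$. Overcoming this requires bringing in the full optimality structure: I would invoke the AIC inequalities $R_j+2\sigma^2 d_j/n \geq R_{\hat m}+2\sigma^2 d_{\hat m}/n$ simultaneously over every candidate breakpoint $j \in [l-1, M]$, together with the pool-adjacent-violators optimality of the original block $[l, M]$ — namely, the fact that the solver \emph{did not} split this block at any interior index despite that option being available — to derive a sharpened inequality on $S_w - S_{wz}$. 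Combining these, one forces the LS saving to dominate $4\sigma^2/n$, contradicting the optimality of $\hat\beta$ and establishing $j^\star \geq \hat m$.
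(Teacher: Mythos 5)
Your setup coincides with the paper's: recast \eqref{loss:l0} as the reduced isotonic regression \eqref{loss:l0-equiv}, observe that $\sum_{k=1}^M\hat\alpha_k=1-\hat\beta_1$ and $\text{dim}(\hat f_{\text{stack}})=d_{j^\star}$ (your $j^\star$ is the paper's $r_q$), and derive $\hat\beta_1>0$ from the block-average structure of the reduced isotonic solution together with $z_k>0$ (the paper invokes \citep[Lemma 5.1]{gao2020estimation} for precisely this). The gap is in the dimension argument, and it is not a deferred technicality you can patch: the competitor you chose is the wrong one. Splitting the terminal all-ones block at $\hat m$ creates an extra transition and therefore costs an additional $4\sigma^2/n$ in the $\ell_0$ penalty, and the least-squares saving $(S_w-S_{wz})^2/S_w$ need not exceed that. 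Under your standing hypotheses the only available bound is the AIC inequality $S_w\geq 2S_{wz}$, which gives a saving of at least $S_w/4\geq S_{wz}/2=\frac{\sigma^2}{2n}(d_{\hat m}-d_{l-1})$; when $S_w$ is close to $2S_{wz}$ and $d_{\hat m}-d_{l-1}<8$ this falls strictly below $4\sigma^2/n$, and no collection of AIC inequalities at other breakpoints or PAVA facts about other blocks can help, since the quantity being bounded depends only on $(S_w,S_{wz})$ for the single block $[l,\hat m]$. The ``sharpened inequality'' you hope to extract does not exist for this perturbation. (There is also a secondary feasibility issue: you check $a\leq 1/2<1$ but not $a\geq\hat\beta_{l-1}$, which monotonicity of $\beta'$ requires.)

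The fix, and what the paper actually does, is to use a \emph{transition-preserving} competitor: keep $\hat{\boldsymbol{\beta}}$ on $[1,r_q]$, extend the penultimate value $\hat\beta_{r_q}$ over $(r_q,j]$, and keep the value $1$ on $(j,M]$. This has the same number of jumps (and the same penalty contribution) as $\hat{\boldsymbol{\beta}}$, so optimality yields a pure least-squares inequality $\bigl|1-c_j\bigr|\leq\bigl|\hat\beta_{r_q}-c_j\bigr|$ with $c_j=\frac{\sigma^2}{n}\frac{d_j-d_{r_q}}{R_{r_q}-R_j}$, hence $c_j\geq\frac{1+\hat\beta_{r_q}}{2}>\frac{1}{2}$ for every $j>r_q$, where the strict inequality uses $\hat\beta_{r_q}\geq\hat\beta_1>0$, i.e., the second half of the theorem feeds the first. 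Taking $j=\hat m$ and comparing with the AIC inequality $R_{r_q}-R_{\hat m}\geq\frac{2\sigma^2}{n}(d_{\hat m}-d_{r_q})$, equivalently $c_{\hat m}\leq\frac{1}{2}$, produces the contradiction, so $r_q\geq\hat m$. Your instinct to play the optimality of $\hat{\boldsymbol{\beta}}$ against the AIC optimality of $\hat m$ is correct, but the comparison must be routed through a perturbation that leaves the $\ell_0$ term untouched.
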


\section{Relationship to Aggregation and Oracle Inequalities}
\label{sec:aggregation}

\subsection{Aggregation}

Stacking can be viewed as a method for solving the problem of aggregation, which was first defined in \cite{juditsky2000functional} and \cite{nemirovski2000topics}.
Here, we are given data as in \eqref{eq:model}, nonparametric estimators $\hat\mu_1,\hat\mu_2,\ldots,\hat\mu_M$ of the regression function $f$, and the goal is, quoting \cite{tsybakov2003optimal}, ``to construct a new estimate of $f$ (called \emph{aggregate}) that mimics in a certain sense the behavior of the best of the estimators.''
In other words, the performance of the aggregate estimator $\hat f_{\text{agg}}$ is compared with that of an oracle best-in-class estimator.
While several different benchmarks were introduced, the two that have received the most attention are the oracle best single model\footnote{This is also referred to in the literature as the \emph{model selection oracle}.} and the oracle best convex combination model, which translate respectively into the following desired oracle inequalities (or their counterpart deviation inequalities that hold with high probability):
\begin{equation}
\label{eq:aggregation_bs}
    \mathbb{E}\Big[\big\|\mathbf{f}-\hat{\mathbf{f}}_{\text{agg}}\big\|^2\Big] \leq \min_{1 \leq k \leq M} \mathbb{E}\big[\|\mathbf{f}-\hat{\boldsymbol{\mu}}_k\|^2\big] + \Delta_{n,M}^{MS},
\end{equation}
\begin{equation}
\label{eq:aggregation_cvx}
    \mathbb{E}\Big[\big\|\mathbf{f}-\hat{\mathbf{f}}_{\text{agg}}\big\|^2\Big] 
    \leq \min_{\alpha_1 \geq 0,\, \alpha_2 \geq 0, \,\dots,\alpha_M \geq 0:\;\sum_{k=1}^M \alpha_k = 1} \mathbb{E}\Bigg[\Bigg\|\mathbf{f}-\sum_{k=1}^M \alpha_k \hat{\boldsymbol{\mu}}_k\Bigg\|^2\Bigg] + \Delta_{n,M}^{C}.
\end{equation}

Works studying the aggregation problem have sought to:
\begin{enumerate}[(i)]
    \item Obtain minimax lower bounds for the remainder terms $\Delta_{n,M}^{MS}$ and $\Delta_{n,M}^{C}$, called the \emph{price to pay for aggregation}, i.e., for not knowing the oracle model;
    \item Construct (and analyze) estimators  $\hat f_{\text{agg}}$ with price values that are minimax rate-optimal (in terms of $M$ and $n$).
\end{enumerate}
To clarify, $\hat f_{\text{agg}}$ is allowed to be any estimator, although the constructions proposed by various works have taken the form of convex combinations of $\hat\mu_1,\hat\mu_2,\ldots,\hat\mu_M$.

For the sake of analytic tractability, most of these works studied the setting where the estimators $\hat\mu_1,\hat\mu_2,\ldots\hat\mu_M$ are deterministic functions (i.e., independent of $\varepsilon_1,\varepsilon_2,\ldots,\varepsilon_n$), rather than data-dependent estimators (see \cite{juditsky2000functional,tsybakov2003optimal,bunea2007aggregation,dalalyan2008aggregation,lecue2009aggregation,rigollet2011exponential,rigollet2012sparse} as well as the references therein). 
This assumption is justified by sample splitting, whereby a fraction of the available data is used to fit $\hat\mu_1,\hat\mu_2,\ldots,\hat\mu_M$ and the remainder is used to perform aggregation, conditioning on the first subsample.
While not affecting the minimax rates, this sample splitting procedure is an inefficient use of data and rarely performed in practice.

Closer to our setting are works that study aggregation when $\hat\mu_1,\hat\mu_2, \ldots\hat\mu_M$ are linear smoothers on observed labels $y_1,y_2,\ldots,y_n$.
When the smoothing matrices are orthogonal projections, \cite{barron2006information} proposed an estimator based on exponential weights and showed that it achieved a price bound of
\begin{equation}
\label{eq:barron-bound}
\Delta_{n,M}^{MS} = O\left(\frac{\sigma^2\log(M)}{n}\right)    
\end{equation}
when compared with the oracle best single model.

If the estimators are \emph{ordered} linear smoothers \citep{kneip1994ordered}, which includes the nested linear least squares models examined in this paper, then \cite{chernousova2013ordered} have shown that the price has an improved upper bound of
\begin{equation}
\label{eq:bellec-bound}
    \Delta^{MS}_{n,M} = O\left(\frac{\sigma^2\log\left(1+(n/\sigma^2)\min_{1 \leq k \leq M} \mathbb{E}\big[\|\mathbf{f}-\hat{\boldsymbol{\mu}}_k\|^2\big]\right)}{n}\right).
\end{equation}
Meanwhile, a different estimator based on the $Q$-aggregation procedure for deterministic functions \citep{dai2012deviation, rigollet2012kullback}, was investigated by \cite{bellec2018optimal} and \cite{bellec2020cost}. \cite{bellec2018optimal} provided a deviation version of \eqref{eq:aggregation_bs} for general linear smoothers $\hat\mu_k$, under a mild condition on the operator norms of the smoothing matrices. \cite{bellec2020cost} showed that if the estimators are ordered linear smoothers, then the $\log(M)$ factor in \eqref{eq:barron-bound} can be completely removed, i.e., $\Delta^{MS}_{n,M} = O(\sigma^2/n) $. 

Furthermore, using a slightly different aggregation scheme, \cite{bellec2018optimal} (Proposition 7.2) also obtained a price bound of
\begin{equation}
    \label{eq:bellec}
    \Delta_{n,M}^C = O\left(\sigma^2\sqrt{\frac{\log(1+M/\sqrt{n})}{n}}\right)
\end{equation}
for the oracle best convex combination benchmark \eqref{eq:aggregation_cvx}.

\subsection{Stacking as a Solution to Aggregation}

Although the aggregation literature is extensive, researchers in this area seem to have been unaware of \cite{breiman1996stacking} and \cite{wolpert1992stacked}'s work.
Indeed, the stacking estimator or even any alternative inspired by it has never been previously studied as a candidate solution to the aggregation problem.
While this is not the main focus of our paper, we are able to add to the existing literature by proving a price upper bound for the stacking estimator with respect to the oracle best \emph{conical} combination model. Note that this is a stronger benchmark than the oracle best convex combination model in \eqref{eq:aggregation_cvx}, since, with stacking, we do not constrain the weights to sum to one.

\begin{theorem}\label{thm:l0}
The population risk of the stacked model with weights from \eqref{loss:lasso} when $ \tau = \lambda = 1 $ satisfies the following oracle inequality:
\begin{equation} \label{eq:stack-oracle}
\mathbb{E}\big[\|\mathbf{f}-\hat{\mathbf{f}}_{\text{stack}}\|^2\big] \leq \min_{\alpha_1\geq 0,\, \alpha_2 \geq 0,\, \dots,\, \alpha_M \geq 0}\mathbb{E}\Bigg[\Bigg\|\mathbf{f}-\sum_{k=1}^M\alpha_k \hat{\boldsymbol{\mu}}_k \Bigg\|^2\Bigg] + \frac{4\sigma^2\mathbb{E}[\|\hat{\boldsymbol{\alpha}}\|_{\ell_0}]}{n}.
\end{equation}
\end{theorem}

Theorem \ref{thm:l0} says that the price is controlled by the expected number of stacked models, or $ \mathbb{E}[\|\hat{\boldsymbol{\alpha}}\|_{\ell_0}] $.

Breiman observed that this quantity was surprisingly small. For example, in his experiments with stacked (nested) regressions obtained from forward selection ($M=40$), he found that $ \mathbb{E}[\|\hat{\boldsymbol{\alpha}}\|_{\ell_0}] \approx 3.1 $.  The size of $ \|\hat{\boldsymbol{\alpha}}\|_{\ell_0} $ is equal to the number of unique elements of the isotonic sequence $\hat{\boldsymbol{\gamma}}$ that are less than one, defined in \eqref{eq:minmax} below. Finding useful bounds (other than $\mathbb{E}[\|\hat{\boldsymbol{\alpha}}\|_{\ell_0}] \leq M$) appears to be nontrivial. While previous shape-constrained estimation literature has shown that similar isotonic minimax sequences can have, on average, $o(M)$ distinct elements \citep{meyer2000shape}, we have yet to establish this for $ \hat{\boldsymbol{\gamma}} $. 

Nonetheless, the naive upper bound $\mathbb{E}[\|\hat{\boldsymbol{\alpha}}\|_{\ell_0}] \leq M$ already leads to an interesting conclusion about the minimax rate for $\Delta_{n,M}^{C}$. Note that when $\hat\mu_1,\hat\mu_2,\ldots,\hat\mu_M$ are instead deterministic, it is well-known that the minimax optimal rate for $\Delta_{n,M}^{C}$ is \begin{equation} \label{eq:d-deviation}
\sigma^2\left(\frac{M}{n} \wedge \sqrt{{\frac{\log(1+M/\sqrt{n})}{n}}}\right);
\end{equation}
see \citep{bunea2007aggregation} as well as Theorem 5.3 in \citep{rigollet2011exponential}. 

Determining the minimax rate for more general linear smoothers, which include deterministic functions, is an open question according to \cite{bellec2018optimal}. \cite{bellec2018optimal} (see Proposition 7.2) established a price bound $\Delta_{n,M}^{C}$ of order $\sigma^2\sqrt{\smash[b]{\log(1+M/\sqrt{n})/n}}$, which according to \eqref{eq:d-deviation} is optimal when $M > \sqrt{n}$. However, he was unable to achieve the $\sigma^2M/n$ rate when $M \leq \sqrt{n}$. Combining Bellec's result, which is applicable to nested models, with our price bound $\Delta_{n,M}^{C} = 4\sigma^2\mathbb{E}[\|\hat{\boldsymbol{\alpha}}\|_{\ell_0}]/n \leq 4\sigma^2M/n$ from \eqref{eq:stack-oracle} demonstrates that the price
from aggregating nested models is at most the optimal price \eqref{eq:d-deviation} from aggregating deterministic functions. While \eqref{eq:d-deviation} is achievable, we do not know if it is the optimal price for nested models, since nested models cannot include more than one deterministic function.

\subsection{Oracle Versus Data-selected Model}\label{sec:oracle_vs_single}

In contrast to the aggregation literature's focus on oracle inequalities, our main result 
compares the stacking estimator with a data-dependent quantity: the data-selected best single model (selected by an AIC-like model selection criterion).
Rather than being an intentionally weaker benchmark, our choice follows that of \cite{breiman1996stacking}, who was more interested in demonstrating the practical benefits of stacking.
From this perspective, the data-selected best single model, being used in practice, is more relevant than an oracle model that cannot be implemented.

Despite potential performance limitations in certain settings, AIC and BIC remain widely used across many applied disciplines, which means that data analysts often encounter situations where they have to choose between performing model selection via these criteria or using an aggregation approach (via stacking or otherwise).
\emph{Our results give actionable advice} on how to make that choice under a nested regressions scenario:
We show that the stacked model always has smaller population risk than the data-selected best single model and furthermore that the performance gap scales inversely with the sample size and the SNR.
This carefully quantifies the trade-off between the two desiderata of prediction accuracy and interpretability.

By framing performance evaluation in terms of oracle inequalities, the aggregation literature has struggled to make the same point.
\cite{lecue2009aggregation} seem to obtain a negative price bound, $\Delta^C_{n,M} \leq 0$, that holds with high probability instead of in expectation (c.f., their value of $\beta$ in the proof of Theorem 4.2 on page 606), but their result requires a lower bound on the diameter of the set of ``almost minimizers'' in $\lbrace\hat\mu_1,\hat\mu_2,\ldots,\hat\mu_M \rbrace$, and it is unclear when this condition is satisfied, if ever. Furthermore, these high probability bounds cannot be translated back into bounds that hold in expectation without incurring an additional positive price.
\cite{rigollet2012sparse} show that any estimator restricted to be a single element of $\hat\mu_1,\hat\mu_2,\ldots,\hat\mu_M$ cannot have a price bound \eqref{eq:aggregation_bs} satisfying the minimax rate (see Theorem 2.1 therein).
However, this is not a head-to-head comparison and is less convincing because minimax rate comparisons are asymptotic and moreover are based on a worst case analysis, c.f., our Theorem \ref{thm:main} which holds for \emph{any} regression function $ f $, noise level $ \sigma^2 $, and sample size $ n $.
Most importantly, both of these results were established only for deterministic functions and the oracle benchmark, which we have already discussed as being less practically informative.
We are also unaware of any result from the aggregation literature showing a dependence of price bounds on the SNR.

Second, our choice of benchmark allows us to make scientifically interesting claims about how stacking improves prediction performance that would be difficult to prove otherwise.
Recall that we show that stacking reduces the population risk relative to model selection by performing adaptive shrinkage.
As explained in Section \ref{sec:model_selection_and_shrinkage}, our ability to make this precise argument relies on the stacking estimator and the data-selected best single model having the same points of discontinuity.
This calculation is hence extremely delicate and difficult to extend to a comparison with an oracle.
A minimax framework is also inconvenient for demonstrating the benefits of shrinkage, which tends to make constant factor improvements rather than alter rates.
In addition, our results regarding shrinkage corroborates empirical observations that ensemble strategies such as stacking tend to be most beneficial in low SNR settings \citep{mentch2020randomization} and adds to the discussion regarding this phenomenon.

Setting aside the question of which benchmark (oracle or data-dependent) is more meaningful, it is also unclear which one is more conservative. Even within the nested regression setting, it is still not known whether the inequality
\begin{equation} \label{eq:compare1}
\min_{1 \leq k \leq M} \mathbb{E}\big[\|\mathbf{f}-\hat{\boldsymbol{\mu}}_k\|^2\big] \leq \mathbb{E}\big[\|\mathbf{f}-\hat{\mathbf{f}}_{\text{best}}\|^2\big],
\end{equation}
holds in general.
In fact, \eqref{eq:compare1} has only been established for specific (worst-case) constructions when $M=2$, where the gap can be of order $ \sigma/\sqrt{n}$; see \cite{bellec2018optimal} and the references therein. In the other direction, when $\lambda = 2 $ and the base estimators are ordered linear smoothers, \cite{kneip1994ordered} showed the bound
$$
\mathbb{E}\big[\|\mathbf{f}-\hat{\mathbf{f}}_{\text{best}}\|^2\big] \leq \min_{1 \leq k \leq M} \mathbb{E}\big[\|\mathbf{f}-\hat{\boldsymbol{\mu}}_k\|^2\big] + \frac{C\sigma^2\sqrt{1+(n/\sigma^2)\min_{1 \leq k \leq M} \mathbb{E}\big[\|\mathbf{f}-\hat{\boldsymbol{\mu}}_k\|^2\big]}}{n},
$$
where $C$ is a universal positive constant, and thus the data-selected best single model is never worse than the oracle by an order $\sigma/\sqrt{n}$ error, i.e., for this estimator, $\Delta^{MS}_{n,M} = O(\sigma/\sqrt{n})$. However, empirically, we often find that $\mathbb{E}\big[\|\mathbf{f}-\hat{\mathbf{f}}_{\text{best}}\|^2\big]$ and $\min_{1 \leq k \leq M} \mathbb{E}\big[\|\mathbf{f}-\hat{\boldsymbol{\mu}}_k\|^2\big]$ are much closer than these particular constructions and bounds would suggest (see Section \ref{sec:numerical} for numerical evidence). 

Proving \eqref{eq:compare1} poses an even greater challenge than the already difficult problem of determining whether the excess optimism (the difference between apparent and true error) is positive. To see why, let us revisit \cite{tibshirani2019excess, mikkelsen2018best}. Using sophisticated tools from differential geometry, they established that the excess optimism of subset regression estimators is positive, i.e.,
\begin{equation} \label{eq:compare2}
\mathbb{E}\Big[\hat R_{\hat m}\Big] = \mathbb{E}\Bigg[\|\mathbf{y}-\hat{\boldsymbol{\mu}}_{\hat m}\|^2 + \frac{2\sigma^2d_{\hat m}}{n}-\sigma^2\Bigg] \leq \mathbb{E}\big[\|\mathbf{f}-\hat{\boldsymbol{\mu}}_{\hat m}\|^2\big] = \mathbb{E}\big[\|\mathbf{f}-\hat{\mathbf{f}}_{\text{best}}\|^2\big],
\end{equation}
where $\hat m$ is the index of the data-selected best single model, defined in \eqref{eq:best}.
Since $ \mathbb{E}\big[\hat R_{\hat m}\big] \leq \min_{1 \leq k \leq M} \mathbb{E}\big[\|\mathbf{f}-\hat{\boldsymbol{\mu}}_k\|^2\big] $, we note that \eqref{eq:compare1} implies \eqref{eq:compare2}, but not vice versa.

\section{Comparisons with Other Aggregation Estimators}

While stacking enjoys widespread use in practical data analysis, other aggregation estimators such as \cite{barron2006information}'s exponential weights method and \cite{bellec2018optimal}'s $Q$-aggregation method are less popular, despite their theoretical guarantees.
It is difficult to explain this phenomenon from the lens of price bounds alone, especially since these other methods already achieve minimax rates.
By considering our specialized setting of nested regressions, however, we are able to perform detailed calculations that reveal why these other methods may be inferior.
In Section \ref{sec:numerical}, we support these arguments using numerical simulations.

\subsection{Aggregation with Exponential Weights}

\cite{barron2006information} start with an unbiased estimator of the population risk, $\hat R_k = R_k+2\sigma^2d_k/n-\sigma^2$, and form
exponential weights $\check\alpha_k \propto \pi_k\exp\big(-\frac{n\beta\hat R_k}{2\sigma^2}\big)$, where $\beta > 0 $ and $\pi_k$ is a deterministic sequence in the probability simplex, usually taken to be uniform, i.e., $\pi_k = 1/M$. These weights can be shown to minimize the separable objective
\begin{equation} \label{eq:boltzmann}
\sum_{k=1}^M \alpha_k\hat R_k + \frac{2\sigma^2}{n\beta}\sum_{k=1}^M \alpha_k\log(\alpha_k/\pi_k)
\end{equation}
over the probability simplex, similar to how the Gibbs distribution maximizes entropy, $ \sum_{k=1}^M \alpha_k\log(1/\alpha_k) $, subject to a constraint on the linear function $ \sum_{k=1}^M \alpha_k\hat R_k $.
Note that to achieve the bound \eqref{eq:barron-bound}, they set $\beta = 1/2$.

Because this aggregation estimator is based on exponential weights that optimize an objective \eqref{eq:boltzmann} not designed to estimate the population risk of the aggregation estimator, we expect its performance to be worse than stacking, which more directly targets the population risk of the combined models. 

Specifically, the entropy term in the separable objective \eqref{eq:boltzmann} encourages spreading weight across models, giving non-zero weights to all models, even poor performers. This may not be optimal if a small subset of models performs much better than the rest. Stacking does not have this built-in bias towards using all models, and can aggressively downweight or exclude poor models—the stacking weights $\hat\alpha_k $ in \eqref{eq:solution} are zero whenever the isotonic solution $\hat\gamma_k$ in \eqref{eq:minmax} is less than $\gamma$ or constant across two successive values. 

Moreover, the separable objective only accounts for the empirical risks, $R_k = \|\mathbf{y}-\hat{\boldsymbol{\mu}}_k\|^2$, of each model and, as such, fails to account for interactions, $\langle \mathbf{y}-\hat{\boldsymbol{\mu}}_k,\mathbf{y}-\hat{\boldsymbol{\mu}}_l \rangle  $, $k\neq l$, between the different models being combined. In other words, it treats the contribution of each model independently. Stacking, on the other hand, optimizes a quadratic form that includes these interaction terms between the models, enabling more adaptive shrinkage and allowing it to better capture how the models complement or overlap with each other. 

We confirm empirically that stacking outperforms aggregation with exponential weights in Section \ref{sec:numerical}, with the latter having performance similar to (and sometimes worse than) the data-selected best single model.

\begin{remark}
    Despite solving a more complex optimization problem, stacking can be solved (order-wise) in the same amount of time as forming the aggregation estimator with exponential weights, namely, $O(M)$.
\end{remark}

\subsection{$Q$-Aggregation}
\cite{bellec2018optimal}, inspired by the $Q$-aggregation procedure for deterministic functions \citep{dai2012deviation, rigollet2012kullback}, considers minimizing the quadratic objective  
\begin{equation} \label{eq:bellec-program}
R(\boldsymbol{\alpha}) + \frac{2\sigma^2}{n}\text{df}(\boldsymbol{\alpha})+ \eta \sum_{m=1}^M \alpha_m
\Bigg\|\hat{\boldsymbol{\mu}}_m-\sum_{k=1}^M \alpha_k \hat{\boldsymbol{\mu}}_k\Bigg\|^2, \quad \eta > 0,
\end{equation}
over the probability simplex. Note that to achieve the bound \eqref{eq:barron-bound}, he sets $\eta = 1/2$.

Bellec's quadratic program \eqref{eq:bellec-program}, which in general takes $O(M^3)$ operations to compute via interior-point methods, can be reduced to $O(M)$ for nested regressions. Simple algebra (see Appendix \ref{app:proofs} in the supplementary material) shows that the program \eqref{eq:bellec-program} is equivalent to
\begin{equation}
\begin{aligned}\label{loss:q-agg}
& \text{minimize} \quad \sum_{k=1}^M w_k(z_k - \beta_k)^2 \\
&\text{subject to} \quad  0 = \beta_1 \leq \beta_2 \leq \cdots\leq \beta_M \leq 1,
\end{aligned}
\end{equation}
where $ w_k = \Delta R_k $ and $ z_k = (1/(1-\eta))(\eta/2 +(\sigma^2/n)(\Delta d_k/\Delta R_k))$. Setting $$ \check\gamma_1 = 0, \quad \check\gamma_k = \frac{\sigma^2}{n}\min_{k\leq i\leq M}\max_{1\leq j < k}\frac{d_i - d_j}{R_j-R_i}, \quad k = 2, 3, \dots, M, $$ c.f., $\hat\gamma_k$ in \eqref{eq:minmax},
the solution has a closed form expression:
\begin{equation} \label{eq:q-sol}
\check\beta_k = 1-\phi\Bigg(\frac{1-\eta/2-\check\gamma_k}{1-\eta}\Bigg), \quad k = 1, 2, \dots, M,
\end{equation}
where $ \phi(z) = \min\{1, \max\{0, z\}\} $ is the clip function for $ z $ at $0$ and $1$. Thus, the solution to the program \eqref{eq:bellec-program} is
$$
\check\alpha_k = \check\beta_{k+1}-\check\beta_k = \phi\Bigg(\frac{1-\eta/2-\check\gamma_k}{1-\eta}\Bigg) - \phi\Bigg(\frac{1-\eta/2-\check\gamma_{k+1}}{1-\eta}\Bigg), \quad k = 1, 2, \dots, M.
$$

Similar to \eqref{eq:solution}, the isotonic vector $\check{\boldsymbol{\gamma}}$ can be computed with $O(M)$ operations via PAVA.
However, as with Leung and Barron's solution to \eqref{eq:boltzmann}, this solution differs from the stacking solution \eqref{eq:solution} in multiple ways. First, it is continuous in the labels $\mathbf{y}$. Second, there is a bias term, $-\eta/2$, in the shrinkage factor. Third, due to the nature of the clip function, $ \check\beta_k $ is more likely to be pushed into the corners $ \{0, 1\} $ whenever $\check\gamma_k \leq \eta/2$ \emph{or} $\check\gamma_k \geq 1-\eta/2$, as opposed to only when $\hat\gamma_k \geq \gamma$ for \eqref{eq:solution}, resulting in $ \check\alpha_k = \check\beta_{k+1} - \check\beta_k = 0 $. Thus, the weights will tend to be sparser than the stacking weights \eqref{eq:solution}. These differences suggest that Bellec's estimator that optimizes \eqref{eq:bellec-program} may not fully capitalize on the benefits of shrinkage and the more targeted adaptivity offered by stacking. 

\section{Bayesian Model Averaging}
There is an interesting connection between stacked regressions and Bayesian model averaging that we now explore. To see this, we construct a prior distribution on the regularization parameter $\lambda$. Let $ \Lambda $ have a mixed discrete-continuous distribution with Pareto-like distribution function $ F_{\Lambda}(\lambda') = (1-\tau/\lambda')\mathbf{1}(\lambda' \geq 1/\gamma) $, where $ \tau $ and $\gamma = \min\{1/\tau, 1/\lambda \}$ are as in Theorem \ref{thm:fstack}. The density function can be written as $ f_{\Lambda}(\lambda') = (1-\tau\gamma)\delta(\lambda'-1/\gamma) + \tau/(\lambda')^2\mathbf{1}(\lambda' > 1/\gamma) $, where $ \delta $ is the Dirac delta function. Note that this distribution does not have a finite mean. Draw $\Lambda$ from this prior distribution and set $$
\hat m_{\Lambda} \in \argmin_{k = 0, 1, \dots, M} \; \|\bold{y} - \hat{\boldsymbol{\mu}}_k\|^2 + \Lambda\frac{\sigma^2 d_k}{n}.
$$
Using the form \eqref{eq:best_form} of the data-selected best single model with regularization parameter $ \Lambda = \lambda $, we have the posterior predictive mean
\begin{equation} \label{eq:ensemble}
\hat f_{\text{Bayes}}(\bx) = \mathbb{E}_{\Lambda}\big[\hat\mu_{\hat m_{\Lambda}}(\bx)\big] = \sum_{k=1}^M(\hat\mu_k(\bx)-\hat\mu_{k-1}(\bx))\mathbb{P}_{\Lambda}\big(\hat\gamma_k < 1/\Lambda\big),
\end{equation}
where $ \mathbb{P}_{\Lambda} $ and $ \mathbb{E}_{\Lambda} $ denotes the probability measure and expectation operator of the prior, respectively. It is easy to see that $ \mathbb{P}_{\Lambda}\big(\hat\gamma_k < 1/\Lambda\big) = F_{\Lambda}(1/\hat\gamma_k-) = (1-\tau\hat\gamma_k)\mathbf{1}(\hat\gamma_k < \gamma) $, so that according to \eqref{eq:stack}, $ \hat f_{\text{Bayes}} $ and $ \hat f_{\text{stack}} $ coincide exactly.

As can be seen from comparing \eqref{eq:ensemble} and \eqref{eq:stack}, the posterior predictive mean and stacking both reduce the variance of the data-selected best single model in the same way, but do so from different perspectives. Recall the form of the data-selected best single model \eqref{eq:best_form} in which the successive predictive differences $ \hat\mu_k(\bx)-\hat\mu_{k-1}(\bx) $ across successive submodels are multiplied by $ \mathbf{1}(\hat\gamma_k < 1/\lambda) $ and then summed up to the selected model index $ \hat m $. The posterior predictive mean smooths the jump discontinuity in the indicator function $ \mathbf{1}(\hat\gamma_k < 1/\lambda) $ by averaging the indicator functions $\mathbf{1}(\hat\gamma_k < 1/\Lambda)$ resulting from random perturbations around the data-selected best single model at $\Lambda = \lambda $. On the other hand, stacking shrinks $ \mathbf{1}(\hat\gamma_k < 1/\lambda) $ towards zero by minimizing a regularized version of the empirical risk over a set of weights.

% \cite{lugosi2004complexity} propose a model selection method for binary classification using data-dependent penalties based on localized Rademacher complexities, with accompanying oracle inequalities. Their approach differs from stacking in that it selects a single best model rather than combining multiple models. However, like stacking, their method aims to improve upon standard complexity penalties by using more targeted, data-dependent estimates of model complexity.

\section{Numerical Experiments} \label{sec:numerical}
In this section, we present a
numerical evaluation of the performance of the stacked model with weights from \eqref{loss:lasso}. We compare its performance against two benchmarks: the data-selected best single model \eqref{eq:best} and the oracle best single model $\hat\mu_{m^*}$, where $m^* \in \argmin_{1 \leq k \leq M}\mathbb{E}[\|\mathbf{f}-\hat{\boldsymbol{\mu}}_k\|^2]$, as well as two aggregation estimators: the Leung and Barron estimator \citep{barron2006information} and the Bellec estimator \citep{bellec2018optimal}.

We consider subset regression with a fixed design, where the orthogonal data matrix $\bX \in \mathbb{R}^{n\times d}$ satisfies $\bX^\top \bX = \mathbf{I}$. We set $n = 1000$ and $d = 150$. Nested regressions are constructed by regressing (without intercept) on the first $3, 6, \dots, 150$ columns of $\bX$. In other words, we set $d_k = 3k$ for $k=1,2,\dots,50$. For Figure \ref{fig:linear_first35} and Figure \ref{fig:linear_first60}, the function $f$ is linear and depends on the first 35 and 60 covariates, respectively. Let the covariates be denoted as $x_1, x_2, \dots, x_{150}$. For Figure \ref{fig:linear_first35}, the coefficients are 35 values randomly selected from a uniform distribution over $[-5, 5]$, denoted  as $c_1, c_2, \dots, c_{35}$. We set the true function as $f_1 = \alpha\sum_{j=1}^{35} c_jx_j$, where $\alpha$ controls the signal strength $\|\mathbf{f}_1\|$. For Figure \ref{fig:linear_first60}, the true function $f_2$ is constructed linearly in the same manner, this time using 60 randomly selected coefficients. Figure \ref{fig:nonlinear_first35} and Figure \ref{fig:nonlinear_first60} use nonlinear functions. The true function in Figure \ref{fig:nonlinear_first35} is defined as  $f_3 = f_1 + \alpha\big(0.3c_1x_1^2 + 0.1c_1\sin(\pi x_1x_2)\big)$. Similarly, the true function in Figure \ref{fig:nonlinear_first60}, is defined as  $f_4 = f_2 + \alpha\big(0.3c_1x_1^2 + 0.1c_1\sin(\pi x_1x_2)\big)$. In all cases, the noise level $\sigma$ is set to be 3, and the signal strength $\|\mathbf{f}\|$ ranges from 1 to 5 in increments of 0.5. For stacking, we use $\tau=1$ and $\lambda=2$, so the data-selected best single model corresponds to the model selected by the AIC criterion. For the Leung and Barron estimator, we use $\beta = 1/2$ and take $\pi_k$ to be uniform, as they recommend for subset regressions (see Section V, Part A in \citep{barron2006information}). We conducted $3,000$ replications for each experiment, using the same noise realizations $\varepsilon_1, \varepsilon_2, \dots, \varepsilon_n$ across different choices of $f$ for additional stability.

The figures corroborate our theory presented in Theorem \ref{thm:main}, showing that stacking performs particularly well when $\|\mathbf{f}\|$ is small, outperforming other benchmarks, including the oracle best single model. As $\|\mathbf{f}\|$ increases, the advantage of stacking diminishes, with other estimators showing comparable performance. Nevertheless, stacking consistently achieves the lowest MSE across nearly all cases when compared to the Leung and Barron estimator, the Bellec estimator, and the data-selected best single model. Remarkably, when $\|\mathbf{f}\|$ is small, stacking even surpasses the oracle best single model—something neither the Leung and Barron nor the Bellec estimator manages to achieve. Additional experiments show similar patterns with non-Gaussian and random design data; see Appendix \ref{app:numeric_additional} in the supplementary material.

\begin{figure}[htbp]
    \centering
    \begin{minipage}[b]{0.48\textwidth}
        \centering
        \includegraphics[width=\textwidth]{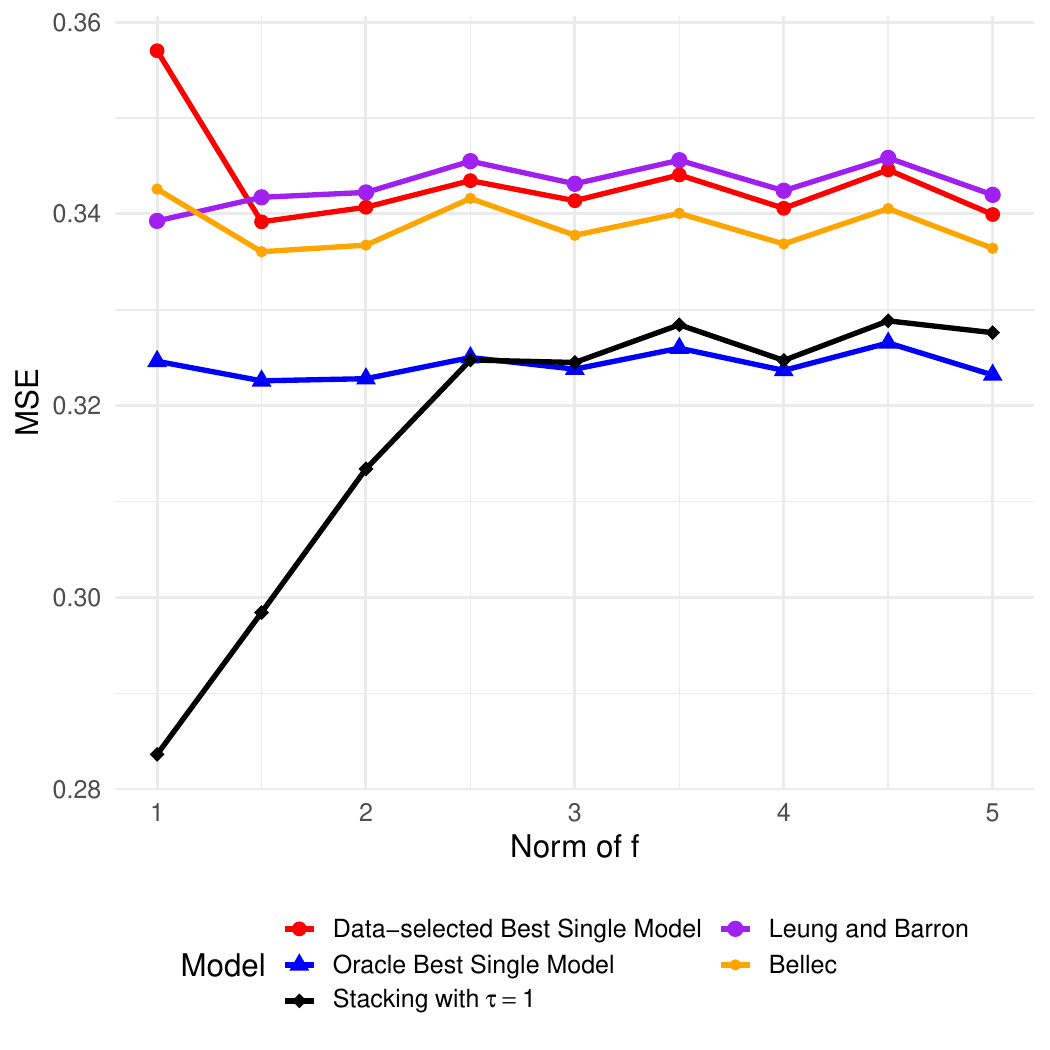}
        \caption{\small MSE comparison across different methods, where the function $f$ is linear and depends on the first 35 covariates.}
        \label{fig:linear_first35}
    \end{minipage}
    \hfill
    \begin{minipage}[b]{0.48\textwidth}
        \centering
        \includegraphics[width=\textwidth]{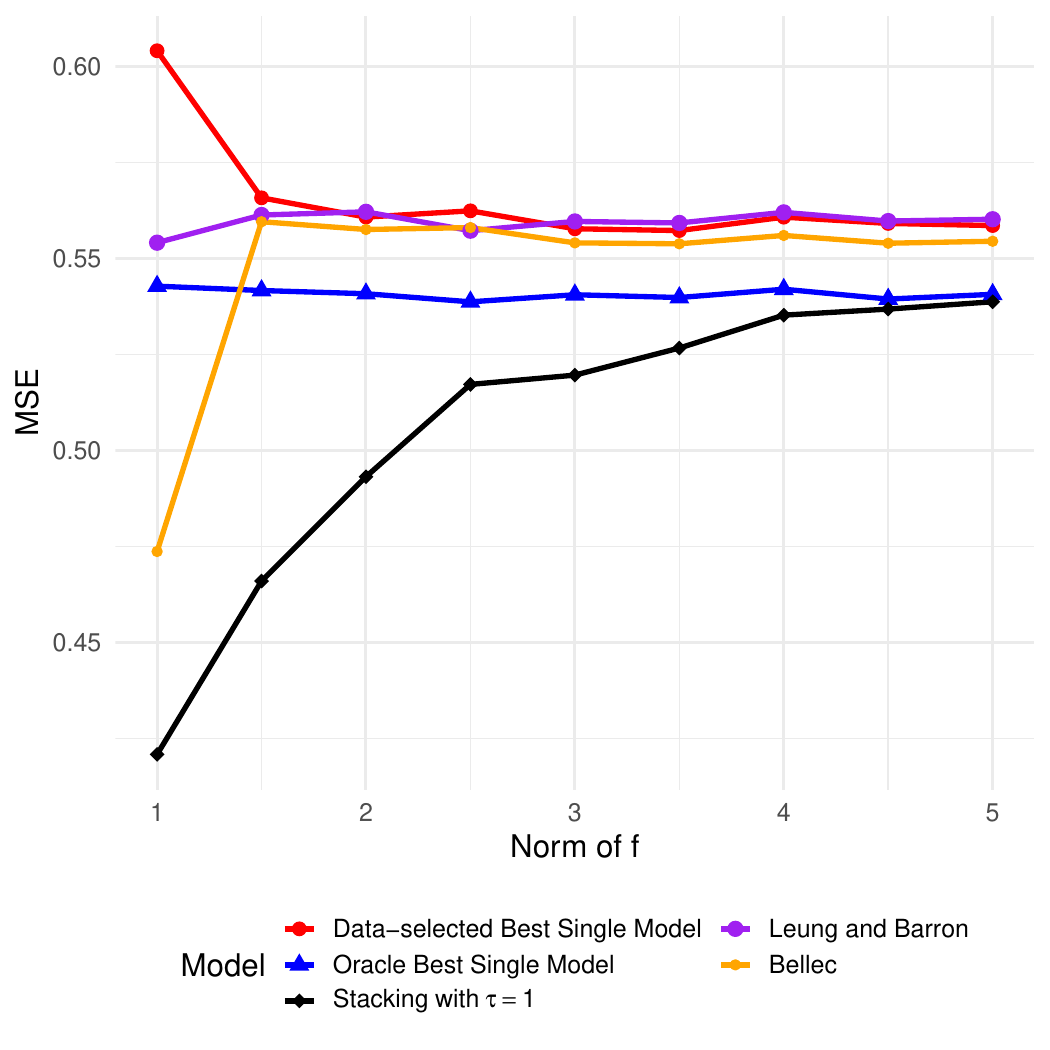}
        \caption{\small MSE comparison across different methods, where the function $f$ is linear and depends on the first 60 covariates.}
        \label{fig:linear_first60}
    \end{minipage}
\end{figure}

\begin{figure}[htbp]
    \centering
    \begin{minipage}[b]{0.48\textwidth}
        \centering
        \includegraphics[width=\textwidth]{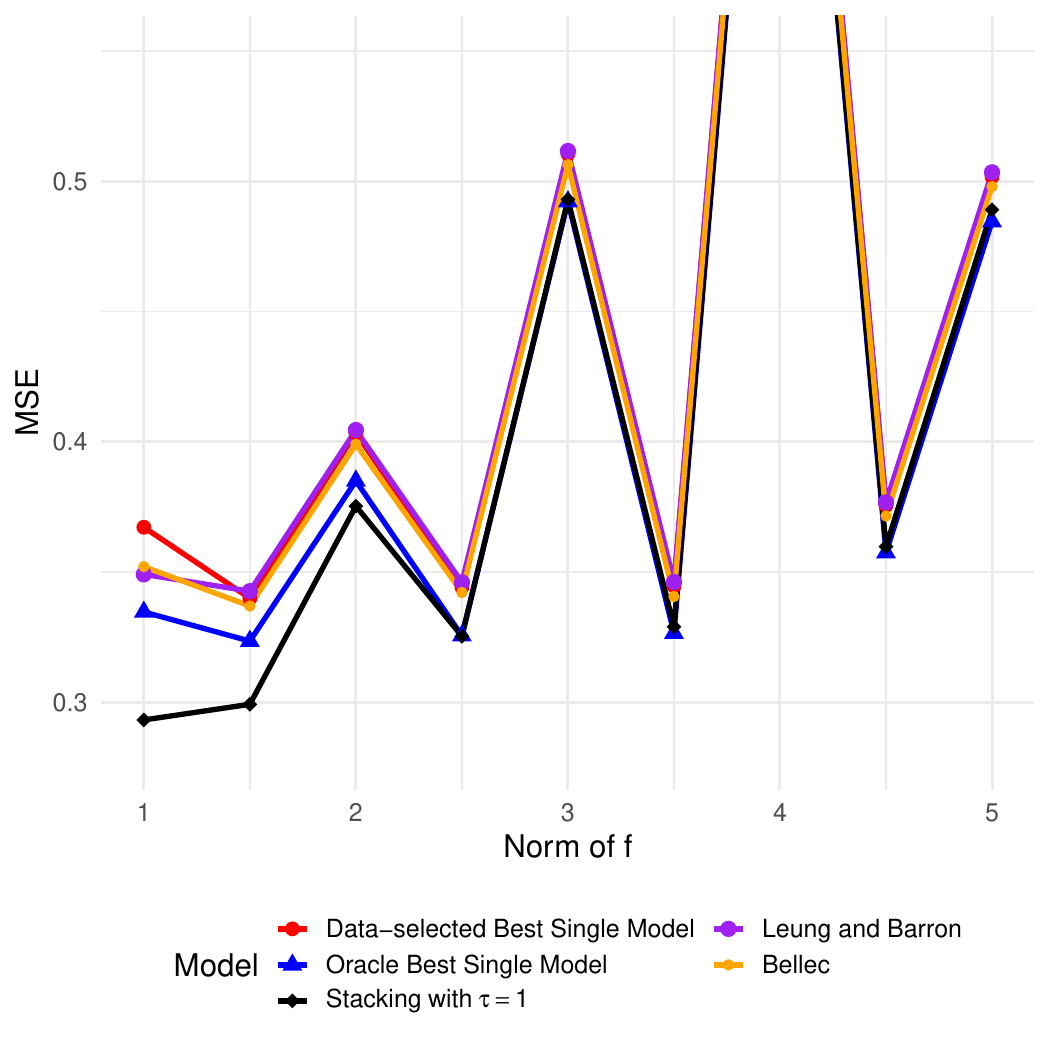}
        \caption{\small MSE comparison across different methods, where the function $f$ is nonlinear and depends on the first 35 covariates.}
        \label{fig:nonlinear_first35}
    \end{minipage}
    \hfill
    \begin{minipage}[b]{0.48\textwidth}
        \centering
        \includegraphics[width=\textwidth]{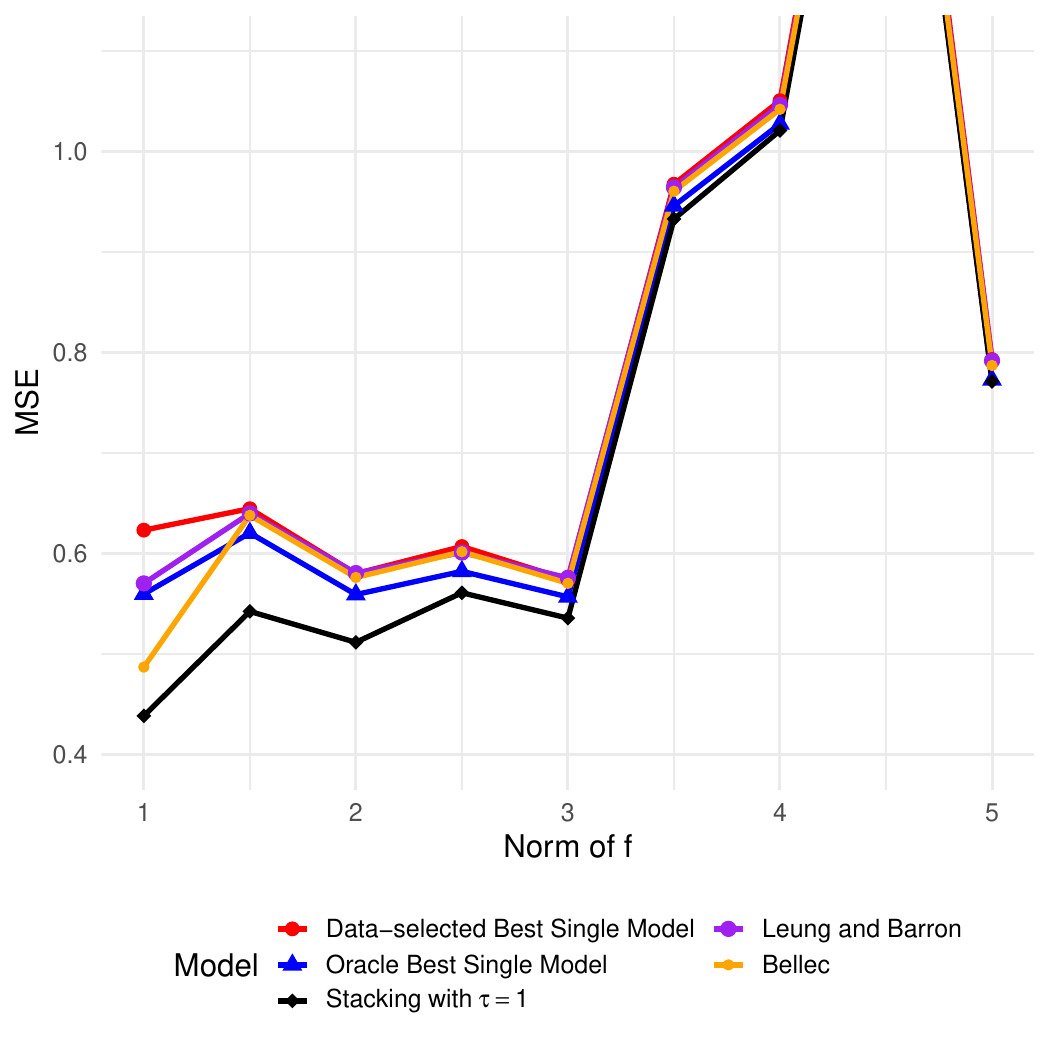}
        \caption{\small MSE comparison across different methods, where the function $f$ is nonlinear and depends on the first 60 covariates.}
        \label{fig:nonlinear_first60}
    \end{minipage}
\end{figure}

\section{Discussion and Conclusion}

The findings from this research provide compelling evidence for the enhanced predictive accuracy of stacked regressions. As we continually seek cheap and simple ways to improve model performance, ensemble methods like stacking emerge as valuable tools. Our research reiterates the seminal works of \cite{wolpert1992stacked} and \cite{breiman1996stacking}, both of whom touched upon the potential of stacking.

The fact that the stacked model provably outperforms the data-selected best single model for \emph{any} regression function $ f $, noise level $ \sigma^2 $, and sample size $ n $ is of course noteworthy. However, it is important to consider the conditions under which this improvement can be rigorously shown. Perhaps most limiting was that we stack models resulting from least squares projections onto nested subspaces, that the design is non-random, and that the errors are Gaussian. For tractable theoretical analysis, it is likely the fixed design and Gaussian assumption cannot be dropped, but the structure of the models can most likely be extended. For example, Breiman also considered stacking ridge regressions and reached similar though not quite as strong conclusions on its merits. More generally, ridge and nested regression models are both instances of ordered linear smoothers \citep{kneip1994ordered}, i.e., families of models that can be written as
$$
\hat\mu_k(\bx) = \sum_{l=1}^n c_{k,l}\langle \mathbf{y}, \boldsymbol{\psi}_l \rangle \psi_l(\bx), \quad 0 \leq c_{k,l} \leq c_{k+1,l} \leq 1.
$$
Interestingly, the representation \eqref{eq:best_form} for the data-selected best single model still holds if $ d_k $ is replaced by $ \text{df}(\hat\mu_k) = \sum_{l=1}^nc_{k,l}  $. However, it is not true anymore that the stacking estimator has the form \eqref{eq:stack}, unless $ c_{k,l} \in \{0, 1\} $.

Finally, we mention that there have been some attempts to bypass the fixed design and Gaussian assumptions, but only for aggregating deterministic functions \citep{dalalyan2008aggregation, lecue2009aggregation, rigollet2011exponential}. We do not see these as major limitations in our analysis of stacking, since the main point of our work is to theoretically understand some of Breiman's empirical observations—and the great practical success stacking has enjoyed since its inception. Furthermore, the computational aspects of our stacking estimator do not depend on these assumptions, and the estimator can be applied to nested models fit to arbitrary data. Numerical experiments in Appendix \ref{app:numeric_additional} of the supplementary material provide evidence that stacking retains its superior performance in more general data settings.

\appendix
\renewcommand{\theequation}{\thesection.\arabic{equation}}
\renewcommand{\thefigure}{\thesection.\arabic{figure}}
\renewcommand{\theremark}{\thesection.\arabic{remark}}

\section{Proofs}\label{app:proofs} In this section, we provide full proofs of all the main results and other statements in the main text. We first establish some notation. Given a sequence of indices $s_0=0,s_1,\dots,s_u=M$, we define $\Delta d_{s_l} = d_{s_l} - d_{s_{l-1}}$ and  $\Delta R_{s_l} = R_{s_{l-1}} - R_{s_{l}}$.

\begin{proof}[Proof of Lemma \ref{lem:equi}]
The proof is based on the following lemma.
    \begin{lemma}\label{lem:training_error}
    Suppose $h(\bx)$ has the form
    \begin{align*}
        h(\bx) = \sum_{k=1}^M(\hat \mu_k(\bx) - \hat \mu_{k-1}(\bx))c_k.
    \end{align*}
    Then, 
    \begin{align*}
        \bnorm{\by-\mathbf{h}}^2 = R_0 +\sum_{k=1}^M\drk (c_k^2-2c_k).
    \end{align*}
\end{lemma}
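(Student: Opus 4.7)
The plan is to expand $\|\by - \mathbf{h}\|^2 = \|\by\|^2 - 2\langle \by, \mathbf{h}\rangle + \|\mathbf{h}\|^2$, compute each of the three pieces in terms of the quantities $R_k$ and $\Delta R_k$, and then collect. The first term is immediate: $\|\by\|^2 = R_0$ by definition.

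The key structural observation that does all the work is that the increments
\[
\Delta\boldsymbol{\hat\mu}_k \;:=\; \boldsymbol{\hat\mu}_k - \boldsymbol{\hat\mu}_{k-1} \;=\; \sum_{l \in A_k \setminus A_{k-1}}\langle \by, \boldsymbol{\psi}_l\rangle\, \boldsymbol{\psi}_l
\]
lie in mutually orthogonal subspaces. This follows because the nestedness $A_{k-1}\subseteq A_k$ makes the sets $A_k\setminus A_{k-1}$ pairwise disjoint, and the $\{\boldsymbol{\psi}_l\}$ are orthonormal. Using this, together with the Pythagorean identity $\|\by\|^2 = R_k + \|\boldsymbol{\hat\mu}_k\|^2$ (valid because $\boldsymbol{\hat\mu}_k$ is the orthogonal projection of $\by$ onto $\mathcal{A}_k$), I would establish the two basic identities
\[
\langle \by,\, \Delta\boldsymbol{\hat\mu}_k\rangle \;=\; \Delta R_k, \qquad \|\Delta\boldsymbol{\hat\mu}_k\|^2 \;=\; \Delta R_k.
\]
The first identity comes from $\langle \by,\boldsymbol{\hat\mu}_k\rangle = \|\boldsymbol{\hat\mu}_k\|^2 = R_0 - R_k$ (projection property) and taking differences across $k$. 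The second identity comes from expanding $\|\Delta\boldsymbol{\hat\mu}_k\|^2 = \sum_{l\in A_k\setminus A_{k-1}}\langle \by,\boldsymbol{\psi}_l\rangle^2$ and recognizing this as $\|\boldsymbol{\hat\mu}_k\|^2 - \|\boldsymbol{\hat\mu}_{k-1}\|^2 = R_{k-1}-R_k$.

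Plugging $\mathbf{h} = \sum_k c_k \Delta\boldsymbol{\hat\mu}_k$ into the expansion, the cross term becomes
\[
\langle \by, \mathbf{h}\rangle \;=\; \sum_{k=1}^M c_k \langle \by, \Delta\boldsymbol{\hat\mu}_k\rangle \;=\; \sum_{k=1}^M c_k\, \Delta R_k,
\]
while the quadratic term collapses by pairwise orthogonality of the increments to
\[
\|\mathbf{h}\|^2 \;=\; \sum_{k=1}^M c_k^2\, \|\Delta\boldsymbol{\hat\mu}_k\|^2 \;=\; \sum_{k=1}^M c_k^2\, \Delta R_k.
\]
Combining the three pieces yields $\|\by-\mathbf{h}\|^2 = R_0 + \sum_{k=1}^M \Delta R_k(c_k^2 - 2c_k)$, as claimed. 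There is no real obstacle here; the whole lemma rests on the orthogonal-increment structure afforded by the nested orthonormal basis, and once that is pinned down, the computation is routine.
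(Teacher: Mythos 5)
Your proposal is correct and follows essentially the same route as the paper's proof: both expand $\|\by-\mathbf{h}\|^2$ using the pairwise orthogonality of the increments $\boldsymbol{\hat\mu}_k-\boldsymbol{\hat\mu}_{k-1}$ (from the disjoint index sets $A_k\setminus A_{k-1}$ and orthonormality of the $\boldsymbol{\psi}_l$) and the identities $\langle\by,\boldsymbol{\hat\mu}_k-\boldsymbol{\hat\mu}_{k-1}\rangle=\|\boldsymbol{\hat\mu}_k-\boldsymbol{\hat\mu}_{k-1}\|^2=\Delta R_k$. The only cosmetic difference is that you obtain these identities via the Pythagorean/projection property and telescoping, whereas the paper computes them directly as $\sum_{l\in A_k\setminus A_{k-1}}\langle\by,\boldsymbol{\psi}_l\rangle^2$.
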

\begin{proof}[Proof of Lemma \ref{lem:training_error}]
Note that for any $k \neq l$,
\begin{align*}
 \langle \boldsymbol{\hat\mu}_k - \boldsymbol{\hat\mu}_{k-1}, \boldsymbol{\hat\mu}_{l} - \boldsymbol{\hat\mu}_{l-1} \rangle =  \Bigg\langle{\sum_{i \in A_k \backslash A_{k-1}} \iprod{\by}{\boldsymbol{\psi}_i}\psi_i}, {\sum_{j \in A_l \backslash A_{l-1}} \langle \by,\boldsymbol{\psi}_j \rangle\psi_j} \Bigg\rangle = 0,
\end{align*}
since  $\langle \boldsymbol{\psi}_i,\boldsymbol{\psi}_j\rangle=0$ for any $i \neq j$, and
\begin{align*}
\| \boldsymbol{\hat\mu}_k - \boldsymbol{\hat\mu}_{k-1}\|^2 = \sum_{j \in A_k\backslash A_{k-1}}\langle \by, \boldsymbol{\psi}_j\rangle^2 = \drk.
\end{align*}
In addition, 
\begin{align*}
\langle \by, \boldsymbol{\hat\mu}_k - \boldsymbol{\hat\mu}_{k-1} \rangle = \Bigg\langle \by, \sum_{j \in A_k \backslash A_{k-1}} \langle \by,\boldsymbol{\psi}_j \rangle\psi_j \Bigg\rangle = \sum_{j \in A_k\backslash A_{k-1}}\langle y, \boldsymbol{\psi}_j\rangle^2 = \drk.
\end{align*}
Thus,
\begin{align*}
\bnorm{\by-\mathbf{h}}^2 &=   \bnorm{\by-\sum_{k=1}^M(\boldsymbol{\hat\mu}_k - \boldsymbol{\hat\mu}_{k-1})c_k}^2\\
&= \|\by\|^2 + \sum_{k=1}^M c_k^2\| \boldsymbol{\hat\mu}_k - \boldsymbol{\hat\mu}_{k-1}\|^2 - 2\sum_{k=1}^Mc_k \langle \by,  \boldsymbol{\hat\mu}_k - \boldsymbol{\hat\mu}_{k-1} \rangle\\
&= R_0 +\sum_{k=1}^M\drk (c_k^2-2c_k). \qedhere
\end{align*}
\end{proof}

Let $c_k = \sum_{i=k}^M\alpha_i $, for $ k = 1,2, \dots, M$. 
Define $\beta_k = 1 - c_k$, with $\beta_{M+1} = 1$.
Note that $\alpha_k = \beta_{k+1}- \beta_k$.
Using summation by parts, we have that 
\begin{align*}
    \sum_{k=1}^M\alpha_k\hat\mu_k(\bx) = \sum_{k=1}^M (\hat\mu_k(\bx) - \hat\mu_{k-1}(\bx))c_k,\quad \sum_{k=1}^M\alpha_kd_k = \sum_{k=1}^Mc_k\ddk,
\end{align*}
where $\hat\mu_0 \equiv 0$. In addition, note that if there exists  some $1\leq a \leq M$ such that $\alpha_a>0=\alpha_{a+1}=\cdots\alpha_{M}$, then we have $c_a> 0 =c_{a+1}=\cdots =c_M$.  Thus,
\begin{align*}
    \text{dim}(\boldsymbol{\alpha}) = \sum_{k=1}^M\ddk\textbf{1}(c_k \neq 0)=\sum_{k=1}^M\ddk\textbf{1}(\beta_k \neq 1).
\end{align*}

Therefore, with Lemma \ref{lem:training_error} we can establish the equivalence between the solution of \eqref{loss:lasso} and the solution of the following programs:
\begin{align*}
    &\text{minimize} \;\; R_0 - \sum_{k=1}^M\drk(2c_k-c_k^2) + \frac{2\tau\sigma^2}{n} \sum_{k=1}^Mc_k\ddk +  
  \xi\sum_{k=1}^M\ddk\textbf{1}(c_k \neq 0) \\
  &\text{subject to} \;\; c_1 \geq c_2 \geq\cdots\geq c_M \geq 0 \\
  \Leftrightarrow~&\text{minimize} \;\; \sum_{k=1}^M\drk\Bigg(c_k^2 -2c_k+c_k\frac{2\tau\sigma^2}{n}\frac{\ddk}{\drk} \Bigg)+ \xi\sum_{k=1}^M\ddk\textbf{1}(c_k \neq 0) \\
  &\text{subject to} \;\; c_1 \geq c_2 \geq\cdots\geq c_M \geq 0 \\
  \Leftrightarrow~&\text{minimize} \;\; \sum_{k=1}^M\drk\Bigg(c_k - \Bigg(1-\frac{\tau\sigma^2}{n}\frac{\Delta d_k}{\Delta R_k}\Bigg)\Bigg)^2+ \xi\sum_{k=1}^M\ddk\textbf{1}(c_k \neq 0) \\
  &\text{subject to} \;\; c_1 \geq c_2 \geq\cdots\geq c_M \geq 0\\
  \Leftrightarrow~&\text{minimize} \;\;\sum_{k=1}^M w_k(z_k - \beta_k)^2 + \xi\sum_{k=1}^M\ddk\textbf{1}(\beta_k \neq 1) \\ & \text{subject to} \;\; \beta_1 \leq \beta_2 \leq \cdots \leq \beta_M \leq 1,
\end{align*}
where $ w_k = \Delta R_k > 0  $, $ z_k = (\tau\sigma^2/n)(\Delta d_k/\Delta R_k) > 0 $, and $ \xi = (\sigma^2/n)((\lambda-\tau)^2_{+}/\lambda) $. \qedhere
\end{proof}

\begin{proof}[Proof of Theorem \ref{thm:fstack}]
We first show the formula for the data-selected best single model \eqref{eq:best_form}. It can be directly obtained from the following lemma, which characterizes the selected index $\hat m$. 
\begin{lemma}\label{lem:mallow}
Let $\tilde m = \max\big\{k\in \{0,1,\dots,M\}: \hat \gamma_{k} < 1/\lambda\big\}$  where $\{\hat \gamma_k\}$ is defined in \eqref{eq:minmax}. Then, for any $0\leq j< \tilde m$,
\begin{align*}
    R_{\tilde m} + \frac{\lambda\sigma^2}{n}d_{\tilde m} < R_j + \frac{\lambda\sigma^2}{n}d_j.
\end{align*}
and for any $\tilde m < j \leq M$
\begin{align*}
    R_{\tilde m} + \frac{\lambda\sigma^2}{n}d_{\tilde m} \leq R_j + \frac{\lambda\sigma^2}{n}d_j.
\end{align*}
This shows $\hat m = \tilde m$.
\end{lemma}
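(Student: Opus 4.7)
My plan is to translate the definition of $\tilde m$ through the minimax sequence \eqref{eq:minmax} into direct comparisons among the values $M(k) \defeq R_k + (\lambda\sigma^2/n) d_k$ for $k = 0, 1, \dots, M$. The first step is an elementary rewriting: clearing denominators in $(d_i - d_j)/(R_j - R_i)$ against $n/(\lambda\sigma^2)$ (using $R_j > R_i$ for $j < i$, which the excerpt already establishes), one obtains the two equivalent reformulations
\begin{equation*}
\hat\gamma_k < 1/\lambda \iff \exists\, i \geq k \text{ with } M(i) < M(j) \text{ for all } 0 \leq j < k,
\end{equation*}
and, dually,
\begin{equation*}
\hat\gamma_k \geq 1/\lambda \iff \forall\, i \geq k, \; \exists\, j < k \text{ with } M(j) \leq M(i).
\end{equation*}
Both are immediate from unfolding the $\min$--$\max$ quantifiers.

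For the strict inequality ($0 \leq j < \tilde m$), I would apply the first equivalence with $k = \tilde m$, producing a witness $i^\star \geq \tilde m$ satisfying $M(i^\star) < M(j)$ for every $j < \tilde m$. If $i^\star = \tilde m$, the claim is immediate. Otherwise $i^\star > \tilde m$, so in particular $\tilde m < M$ and by maximality $\hat\gamma_{\tilde m+1} \geq 1/\lambda$; invoking the dual equivalence at $k = \tilde m + 1$ on the index $i^\star$ yields some $j^\star \leq \tilde m$ with $M(j^\star) \leq M(i^\star)$. The strict bound $M(i^\star) < M(j)$ for $j < \tilde m$ rules out $j^\star < \tilde m$, forcing $j^\star = \tilde m$, and hence $M(\tilde m) \leq M(i^\star) < M(j)$.

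For the weak inequality ($\tilde m < j \leq M$), I would apply the dual equivalence again at $k = \tilde m+1$, now with $i = j$, obtaining some $j' \leq \tilde m$ with $M(j') \leq M(j)$. If $j' = \tilde m$ we are done; otherwise $j' < \tilde m$, and the strict inequality just proved gives $M(\tilde m) < M(j') \leq M(j)$. Edge cases ($\tilde m = 0$ and $\tilde m = M$) fall out of the same argument, since the boundary conventions $\hat\gamma_0 = 0$ and $\hat\gamma_{M+1} = \infty$ make one side of the argument vacuous.

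The only real subtlety is ruling out $j^\star < \tilde m$ (and analogously $j' < \tilde m$) in the witnesses produced by the dual equivalence, since one cannot read off from the $\min$--$\max$ formula alone which index achieves the outer minimum in $\hat\gamma_{\tilde m}$. The maximality property of $\tilde m$, namely $\hat\gamma_{\tilde m+1} \geq 1/\lambda$, is precisely the ingredient that closes this gap and pins the witness down to $\tilde m$ itself.
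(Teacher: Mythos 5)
Your proof is correct, and it takes a genuinely different route from the paper's. The paper proves the lemma by invoking the geometric characterization of the isotonic solution as the left slope of the greatest convex minorant: it introduces the change points $s_0 < s_1 < \cdots < s_u$ of $\hat{\boldsymbol{\gamma}}$, uses the identity $\hat\gamma_{s_l} = \frac{\sigma^2}{n}\frac{d_{s_l}-d_{s_{l-1}}}{R_{s_{l-1}}-R_{s_l}}$ together with the facts that $\hat\gamma_{s_a}$ attains the inner maximum over $j < s_a$ and $\hat\gamma_{s_a+1}$ attains the inner minimum over $i > s_a$, and then converts $\hat\gamma_{\tilde m} < 1/\lambda \leq \hat\gamma_{\tilde m+1}$ into the two inequalities by adding and subtracting $R_j$. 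Your argument bypasses all of that machinery: after clearing denominators (valid since $R_j > R_i$ for $j < i$ holds almost surely), the condition $\hat\gamma_k < 1/\lambda$ becomes a pure existential/universal statement about the criterion values $M(k) = R_k + (\lambda\sigma^2/n)d_k$, and the two inequalities follow from quantifier bookkeeping plus the maximality of $\tilde m$; the one genuine subtlety (pinning the witness $j^\star$ to equal $\tilde m$) is handled correctly via the strict bound already in hand. Your version is more elementary and self-contained for this lemma, and arguably makes the equivalence $\hat m = \tilde m$ more transparent; the paper's version is less economical here but sets up the change-point apparatus (the $s_l$'s and the slope identity) that is reused throughout the proofs of Theorems \ref{thm:fstack} and \ref{thm:main}, which your approach would not provide.
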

\begin{proof}[Proof of Lemma \ref{lem:mallow}] 
A well-have thatn geometric property of isotonic regression is that the solution is the left slope of the \emph{greatest convex minorant} of the \emph{cumulative sum diagram} \citep{barlow1972statistical}. Let $0 = s_0 < s_1 <\dots <s_u = M$ be the values of $k$ for which $\hat \gamma_{s_k+1}>\hat \gamma_{s_k}$ ($u \geq k \geq 0$), and $\hat \gamma_{0} = 0$ and $\hat \gamma_{M+1} = \infty$, implying that $s_0, s_1, \dots, s_u$ are the change points of the isotonic solution. Then, according to this property,
\begin{align}\label{eq:tau_r_l}
        \hat \gamma_{s_l} = \frac{\sigma^2}{n}\frac{d_{s_{l}} - d_{s_{l-1}}}{R_{s_{l-1}} - R_{s_{l}}}, \quad l = 1,2,\dots, u.
\end{align}
The definition of $\tilde m$ implies it should be one of the change points. We assume $\tilde m = s_a$. According to \eqref{eq:tau_r_l} and the definition of $ \hat\gamma_{s_a}$, we have that for  $0\leq j < \tilde m$, 
\begin{align*}
    \frac{\sigma^2}{n}\frac{d_{s_{a}} - d_{s_{a-1}}}{R_{s_{a-1}} - R_{s_{a}}} =  \hat\gamma_{s_a}  = \max_{1\leq i\leq s_a-1}\frac{\sigma^2}{n}\frac{d_{s_a} - d_i}{R_{i} - R_{s_a}} \geq \frac{\sigma^2}{n}\frac{d_{s_a} - d_j}{R_{j} - R_{s_a}}.
\end{align*}
Thus, 
\begin{align*}
    R_{s_a} + \frac{\lambda\sigma^2}{n}d_{s_a} &\leq R_j + R_{s_a} - R_j  + \frac{\lambda\sigma^2}{n}d_j + \lambda \hat\gamma_{s_a}(R_j - R_{s_a})\\
    &=  R_j + \frac{\lambda\sigma^2}{n}d_j + (R_j-R_{s_a})(\lambda  \hat\gamma_{s_a}-1) \\
    &< R_j + \frac{\lambda\sigma^2}{n}d_j.
\end{align*}
where the last inequality holds since $ \hat \gamma_{s_a} =  \hat \gamma_{\tilde m} < 1/\lambda$. Similarly, for any $\tilde m < j \leq M$,
\begin{align*}
    \frac{\sigma^2}{n}\frac{d_{s_{a+1}} - d_{s_{a}}}{R_{s_{a}} - R_{s_{a+1}}} = 
 \hat \gamma_{s_a+1} = \min_{s_a+1 \leq i \leq M}\frac{\sigma^2}{n}\frac{d_i - d_{s_a}}{R_{s_a}-R_i} \leq \frac{\sigma^2}{n}\frac{d_j - d_{s_a}}{R_{s_a}-R_j}.
\end{align*}
Thus, 
\begin{align*}
    R_{s_a} + \frac{\lambda\sigma^2}{n}d_{s_a} &\leq R_j + R_{s_a} - R_j  + \frac{\lambda\sigma^2}{n}d_j -\lambda \hat \gamma_{s_a+1}( R_{s_a}-R_j) \\
    &=  R_j + \frac{\lambda\sigma^2}{n}d_j + (R_{s_a}-R_j)(1-\lambda  \hat \gamma_{s_a+1}) \\
    &< R_j + \frac{\lambda\sigma^2}{n}d_j.
\end{align*}
where the last inequality holds since $ \hat \gamma_{s_a+1} =  \hat \gamma_{\tilde m+1} \geq 1/\lambda $ and $R_{s_a}- R_j > 0$.   \qedhere
\end{proof}

Next we show \eqref{eq:stack}.  According to Lemma \ref{lem:equi}, we can focus on program \eqref{loss:1}.  We need the following lemmas: Lemma \ref{lem:opt} solves the weighted isotonic regression problem with bounded constraints and Lemma \ref{lem:prob0} paves the way for establishing  almost sure uniqueness. 

\begin{lemma}\label{lem:opt}
Let $w_k > 0$ ($k=1,2,\dots,M$). Then the solution to the program
\begin{equation}
\begin{aligned}\label{optim:truncate}
    & \text{minimize} \quad \sum_{k=1}^{M}w_k\pth{z_k -\mu_k}^2 \\
    & \text{subject to} \quad a\leq \mu_1 \leq \cdots \leq \mu_M \leq b, 
\end{aligned}    
\end{equation}
is $\tilde\mu_k = \phi_{a,b}\Big(\max\limits_{1\leq i\leq k}\min\limits_{k\leq j \leq M}  \frac{\sum_{l=i}^j w_lz_l}{\sum_{l=i}^j w_l}\Big)$ for $k  = 1, 2, \dots, M$, where $ \phi_{a,b}(z) = \min\{b, \max\{a, z\}\} $ is the clip function for $ z $ at $a$ and $b$. In particular, when $a=0$ and $b = \infty$, we have $\tilde\mu_k = \Big(\max\limits_{1\leq i\leq k}\min\limits_{k\leq j \leq M}  \frac{\sum_{l=i}^j w_lz_l}{\sum_{l=i}^j w_l}\Big)_+$ for $k  = 1, 2, \dots, M$.
\end{lemma}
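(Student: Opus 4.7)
The statement characterizes the solution of a weighted isotonic regression with two-sided box constraints via clipping of the unconstrained min-max estimator. My plan is to first solve the unconstrained case and then lift the argument to the bounded case via a KKT analysis.

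I would begin by noting that the objective is strictly convex in $\mu$ (since every $w_k>0$) and the feasible set is a nonempty closed convex polytope, so a unique minimizer $\tilde\mu$ exists. Next I would treat the case $a=-\infty$, $b=+\infty$: the solution reduces to the classical min-max identity $\hat\mu_k=\max_{1\le i\le k}\min_{k\le j\le M}\bar z_{i,j}$ with $\bar z_{i,j}=(\sum_{l=i}^{j}w_l z_l)/(\sum_{l=i}^{j}w_l)$. This is the standard characterization of the PAVA solution as the left slope of the greatest convex minorant of the weighted cumulative sum diagram---the same geometric fact already invoked inside the proof of Lemma~\ref{lem:mallow}---and it follows by direct KKT verification or by induction on the number of distinct PAVA blocks.

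For the bounded case, define $\tilde\mu_k=\phi_{a,b}(\hat\mu_k)$. Feasibility is immediate, since $\phi_{a,b}$ is nondecreasing and maps into $[a,b]$. To show optimality I would verify the KKT conditions of the bounded program at $\tilde\mu$. Because $\hat\mu$ is nondecreasing, $\{k:\hat\mu_k\le a\}$ is an initial segment $\{1,\dots,k_0\}$ and $\{k:\hat\mu_k\ge b\}$ is a terminal segment $\{k_1,\dots,M\}$, so $\tilde\mu$ equals $a$ on the prefix, coincides with $\hat\mu$ in the middle, and equals $b$ on the suffix. On the interior the unconstrained Lagrange multipliers $\nu_k\ge0$ already meet the stationarity conditions. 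On the prefix I would extend them to boundary multipliers $\eta_k$ by adding a correction term that accounts for the clipping gap $a-\hat\mu_l$; since this gap is nonnegative for $l\le k_0$, the resulting $\eta_k$ remain nonnegative. The suffix is handled symmetrically, and complementary slackness is automatic because $\tilde\mu$ is constant on each boundary block. The special case $a=0$, $b=\infty$ then follows by specializing $\phi_{0,\infty}(z)=(z)_+$.

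The main obstacle is the construction of feasible boundary Lagrange multipliers that dovetail with the interior multipliers inherited from the unconstrained problem. The key analytic fact to check is that the unconstrained multipliers $\nu_k$ plus the clipping correction remain nonnegative at every index in the pinned blocks---equivalently, a one-sided inequality on the weighted averages within each boundary block. Once this is verified, KKT gives optimality, and uniqueness (from strict convexity) closes the argument. A fallback route, should the bookkeeping prove cumbersome, is a direct convexity argument: any feasible competitor to $\tilde\mu$ can be perturbed block by block, and strict convexity of each weighted squared term rules out any strict improvement.
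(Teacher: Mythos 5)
Your proposal is correct and follows essentially the same route as the paper: invoke the classical min--max/PAVA characterization for the purely isotonic problem, clip the resulting solution to $[a,b]$, and verify optimality by extending the unconstrained KKT multipliers into the pinned prefix and suffix blocks, where nonnegativity of the new multipliers follows inductively from the sign of the clipping gap. No substantive difference from the paper's argument.
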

\begin{proof}[Proof of Lemma \ref{lem:opt}]
We first consider another program
\begin{equation}
    \begin{aligned}\label{optim:original}
    & \text{minimize} \quad \sum_{k=1}^{M}w_k\pth{z_k -\mu_k}^2 \\
    & \text{subject to} \quad  \mu_1\leq \mu_2 \leq \cdots \leq \mu_M, 
\end{aligned}
\end{equation}
A well-have thatn result \citep[Equation 2.8]{barlow1972isotonic} shows the solution of \eqref{optim:original}, denoted by $\boldsymbol{\mu}^*$, is $\mu_k^* = \max\limits_{1\leq i\leq k}\min\limits_{k\leq j \leq M}  \frac{\sum_{l=i}^j w_lz_l}{\sum_{l=i}^j w_l}$ for $1\leq k \leq M$. Let $\boldsymbol{\eta} =(\eta_0,\eta_1,\dots, \eta_M)$ and $\boldsymbol{\lambda} = (\lambda_1,\lambda_2,\dots,\lambda_{M-1}) $ with each entry of $\boldsymbol{\eta}$ and $\boldsymbol{\lambda}$ nonnegative.  We denote the Lagrangians of \eqref{optim:truncate} and \eqref{optim:original} respectively by 
\begin{align*}
    L_1(\boldsymbol{\mu},\boldsymbol{\eta}) = \sum_{k=1}^M w_k(z_k-\mu_k)^2 - \eta_0(\mu_1-a) + \sum_{k=1}^{M-1}\eta_k(\mu_k-\mu_{k+1})+\eta_M(\mu_M-b)
\end{align*}
and 
\begin{align*}
    L_2(\boldsymbol{\mu},\boldsymbol{\lambda}) = \sum_{k=1}^M w_k(z_k-\mu_k)^2  + \sum_{k=1}^{M-1}\lambda_k(\mu_k-\mu_{k+1}).
\end{align*}
Because the program \eqref{optim:original} is convex, there exists $\boldsymbol{\lambda}^* = (\lambda_1^*,\lambda_2^*,\dots,\lambda_{M-1}^*)$ such that $(\boldsymbol{\mu}^*,\boldsymbol{\lambda}^*)$ satisfies the Karush–Kuhn–Tucker (KKT) conditions, i.e., 
\begin{align}
    \nabla L_2(\boldsymbol{\mu}^*, \boldsymbol{\lambda}^*) = 0 &\Rightarrow -2w_k(z_k-\mu_k^*)-\lambda_{k-1}^*+\lambda_k^* = 0, \quad k = 1,2,\dots, M \label{stationary_P1}\\
     \boldsymbol{\lambda}^* &\geq 0, \\
     \mu_1^*\leq\mu_2^*\leq&\cdots\leq \mu_M^*,\\
    \lambda_k^*(\mu_k^* - \mu_{k+1}^*) &=0, \quad k = 1,2,\dots, M-1, \label{slackness_P1}
\end{align}
where we let $\lambda_0^*=\lambda_M^*=0$.  Let $m_1 = \max\{k \in [M]:\mu_k^*<a\}$ and $m_2 = \max\{k \in [M]:\mu_k^*\leq b\}$. If $\mu_k^*\geq a$ for any $k=1,2,\dots,M$, then let $m_1=0$. Since $\{\mu_k^*\}$ is not decreasing and $a\leq b$, we have $m_1\leq m_2$.  By \eqref{slackness_P1} and the facts that $\mu_{m_1+1}^* - \mu_{m_1}^* > 0$ and $\mu_{m_2+1}^* - \mu_{m_2}^* > 0$,  we have that $\lambda_{m_1}^*=\lambda_{m_2}^* = 0$. [Here we define $\mu_{M+1}^* = \infty$.] Now  we construct an $\boldsymbol{\eta}^* = (\eta_0^*,\eta_1^*,\dots,\eta_{M}^*)$ satisfying
\begin{align*}
    \eta_k^* = 
    \begin{cases}
    2w_{k}z_{k}-2w_k b + \eta_{k-1}^* &\quad k =m_2+1,m_2+2,\dots,M\\
    \lambda_k^* &\quad k =m_1,m_1+1,\dots, m_2 \\
    -2w_{k+1}z_{k+1}+2w_{k+1}a + \eta_{k+1}^* &\quad k =0,1,\dots,m_1-1.
    \end{cases}
\end{align*}
In what follows, we verify that $(\boldsymbol{\tilde\mu}, \boldsymbol{\eta}^*)$ satisfies the KKT conditions for \eqref{optim:truncate}. Since $\mu_k^*$ is increasing, it is easy to see $a\leq \tilde\mu_1 \leq \cdots \tilde\mu_M\leq b$, which confirms the primal feasibility. Note that 
$\eta_k^*(\tilde \mu_k - \tilde\mu_{k+1}) = \lambda_k^*(\mu_k^* - \mu_{k+1}^*) = 0$ for $m_2>k > m_1$. Since $\eta_{m_1}^*=\lambda_{m_1}^*=\eta_{m_2}^*=\lambda_{m_2}^*=0$ and $\tilde\mu_k = a$ ($k\leq m_1$), and $\tilde\mu_k = b$ ($k > m_2$), we also have that $\eta_k^*(\tilde \mu_k - \tilde\mu_{k+1}) = 0$ for $k\leq m_1$ and $k\geq m_2$. These confirm the complementary slackness. Furthermore,  for $m_2\geq k>m_1$,
\begin{align*}
    \frac{\partial}{\partial \mu_k} L_1(\boldsymbol{\tilde\mu}, \boldsymbol{\eta}^*) &= -2w_k(z_k-\tilde\mu_k)-\eta_{k-1}^*+\eta_k^* = -2w_k(z_k-\mu_k^*)-\lambda_{k-1}^*+\lambda_k^* = 0,
\end{align*}
and for $k \leq m_1$, 
\begin{align*}
     \frac{\partial}{\partial \mu_k} L_1(\boldsymbol{\tilde\mu}, \boldsymbol{\eta}^*) &= -2w_k(z_k-\tilde\mu_k)-\eta_{k-1}^*+\eta_k^* = -2w_kz_k+2w_ka-\eta_{k-1}^*+\eta_k^* = 0,
\end{align*}
where we use the definition of $\eta^*_k$ and $\tilde\mu_k=a$ ($k\leq m_1$) for the above equality. Similarly, for $k>m_2$, we also have $\frac{\partial}{\partial \mu_k} L_1(\boldsymbol{\tilde\mu}, \boldsymbol{\eta}^*)=0$. These lead to $\nabla L_1(\boldsymbol{\tilde\mu}, \boldsymbol{\eta}^*) = 0$. Lastly, to show $\boldsymbol{\eta}^* \geq 0$, it suffices to show $\eta^*_k \geq 0$ for $k\leq m_1-1$ and $k\geq m_2+1$ since $\eta^*_k = \lambda^*_k \geq 0$ ($m_2\geq k\geq m_1$). Note that 
\begin{align}\label{ineq:eta}
    \eta^*_{m_1-1} = -2w_{m_1}z_{m_1}+2w_{m_1}a + \eta^*_{m_1} \geq -2w_{m_1}z_{m_1} + \eta^*_{m_1} + 2w_{m_1} \mu^*_{m_1} = \lambda^*_{{m_1}-1} \geq 0,
\end{align}
where we use  facts that $\mu^*_{m_1}<a$ and $\eta_{m_1}^* = \lambda_{m_1}^*$, and the stationary condition \eqref{stationary_P1} for program \eqref{optim:original}. In addition, 
\begin{align*}
    \eta_{{m_1}-2}^* = -2w_{{m_1}-1}z_{{m_1}-1}+2w_{m_1-1}a + \eta^*_{m_1-1} \geq -2w_{m_1-1}z_{m_1-1} + \lambda^*_{m_1-1} + 2w_{m_1-1}\mu^*_{m_1-1} = \lambda^*_{m_1-2} \geq 0, 
\end{align*}
where for the first inequality we use \eqref{ineq:eta} and the  fact that $\mu^*_{m_1-1}\leq \mu^*_{m_1}<0$. For the last equality we use \eqref{stationary_P1} again. Therefore, with an induction argument we can establish $\eta_k^*\geq 0$ for $k\leq m_1-1$. Similarly, we have that $\eta_k^*\geq 0$ for $k\geq m_2+1$, which implies $\boldsymbol{\eta}^*\geq 0$. In conclusion, we show that $(\boldsymbol{\tilde\mu}, \boldsymbol{\eta}^*)$ satisfies the KKT conditions for program \eqref{optim:truncate}. Since \eqref{optim:truncate} is a convex program, we have that $\boldsymbol{\tilde\mu}$ and $\boldsymbol{\eta}^*$ should be its primal and dual optimizers, which completes the proof. \qedhere

\end{proof}
\begin{lemma}\label{lem:prob0}
    Given any constant $c>0$. The probability that there exists an  index $1\leq k \leq M$ such that  
    $
    \hat\gamma_k = c
    $ is zero. 
    
\end{lemma}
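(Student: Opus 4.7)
The plan is to exploit the fact that $\hat\gamma_k$ is a min-max over a finite collection of explicit ratios whose denominators are continuous random variables. This will reduce the claim to a union bound over finitely many null events.

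First, I would observe that for each fixed $k$, the quantity
\[
\hat\gamma_k = \frac{\sigma^2}{n}\min_{k\leq i\leq M}\max_{0\leq j<k}\frac{d_i-d_j}{R_j-R_i}
\]
is a min-max over finitely many indices, so the value $\hat\gamma_k$ must actually equal one of the ratios $(\sigma^2/n)(d_i-d_j)/(R_j-R_i)$ for some $0\leq j<k\leq i\leq M$. Therefore
\[
\{\hat\gamma_k = c\} \;\subseteq\; \bigcup_{0\leq j<i\leq M}\!\!\Bigl\{ R_j - R_i = \frac{\sigma^2}{nc}(d_i-d_j)\Bigr\}.
\]
Since there are only finitely many pairs $(i,j)$ and finitely many $k$, it suffices to show that each event on the right has probability zero.

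Second, I would argue that $R_j - R_i$ has a continuous distribution for each fixed pair $0\leq j<i\leq M$. By the orthonormality of $\{\boldsymbol{\psi}_l\}$ and nestedness $A_j\subseteq A_i$, we have $R_j - R_i = \sum_{l\in A_i\setminus A_j}\langle \mathbf{y},\boldsymbol{\psi}_l\rangle^2$, and under the Gaussian model \eqref{eq:model} the footnote gives $(n/\sigma^2)(R_j-R_i)\sim\chi^2(d_i-d_j,\,\theta_{ij})$, a noncentral chi-squared distribution with positive integer degrees of freedom $d_i-d_j\geq 1$. This law is absolutely continuous with respect to Lebesgue measure on $(0,\infty)$, so for any fixed target value $t = (\sigma^2/nc)(d_i-d_j)>0$ we have $\mathbb{P}(R_j - R_i = t) = 0$.

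Finally, combining the last two steps by a union bound over the finitely many pairs $(i,j)$ and indices $k\in\{1,\dots,M\}$ gives
\[
\mathbb{P}\Bigl(\exists\, k:\; \hat\gamma_k = c\Bigr) \;\leq\; \sum_{k=1}^M\sum_{0\leq j<i\leq M}\mathbb{P}\Bigl(R_j-R_i = \tfrac{\sigma^2}{nc}(d_i-d_j)\Bigr) \;=\; 0.
\]
I do not anticipate a serious obstacle here; the only point to be careful about is to invoke the fixed-design, independent-subspaces setting (recall the honesty condition described in the paper) so that the $d_i,d_j$ are non-random constants and the noncentral chi-squared law of the footnote applies cleanly.
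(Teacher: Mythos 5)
Your proof is correct and follows essentially the same route as the paper's: reduce the event $\{\hat\gamma_k=c\}$ to finitely many events of the form $\{(\sigma^2/n)(d_i-d_j)/(R_j-R_i)=c\}$, note that $(n/\sigma^2)(R_j-R_i)$ is noncentral chi-squared and hence has a continuous law, and conclude by a union bound. Your write-up is in fact slightly more explicit than the paper's about why the min-max must coincide with one of the individual ratios.
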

\begin{proof}[Proof of Lemma \ref{lem:prob0}]
We show that given any constant $c>0$, the probability of there existing two indices $0\leq a<b\leq M$ such that $\frac{\sigma^2}{n}\frac{d_b-d_a}{R_a-R_b} = c$ is zero. To this end, given indices $a$ and $b$, note that $(n/\sigma^2)(R_a -R_b)$ follows a noncentral chi-squared distribution with v degrees of freedom and noncentrality parameter $(n/\sigma^2)(\|\mathbf{f}-\mathbf{f}_a\|^2-\|\mathbf{f}-\mathbf{f}_b\|^2)$. Therefore, we have that $\mathbb{P}(\frac{\sigma^2}{n}\frac{d_b-d_a}{R_a-R_b} = c)=0$. The lemma can thus be obtained by noticing that the countable union of measure-zero sets has  measure zero.
\end{proof}

Now we start to prove \eqref{eq:stack}.  We begin by considering the  case that $\lambda \leq \tau$. In this case, program \eqref{loss:1} is reduced to the bounded isotonic regression program
\begin{equation} 
\begin{aligned}
& \text{minimize} \quad \sum_{k=1}^M \drk\bigg(\frac{\tau\sigma^2}{n}\frac{\ddk}{\drk} - \beta_k\bigg)^2  \\
&\text{subject to} \quad \beta_1 \leq \beta_2 \leq \cdots \leq \beta_M \leq 1.
\end{aligned}
\end{equation}
From Lemma \ref{lem:opt}, we have that its solution is 
$\{\tau\hat \gamma_k\mathbf{1}(\hat\gamma_k\leq 1/\tau)+\mathbf{1}(\hat\gamma_k> 1/\tau),\; k=1,2,\dots,M\}$, where $\hat\gamma_k$ is defined in \eqref{eq:minmax}. Then the solution to program \eqref{loss:lasso} is
\begin{align*} 
\hat\alpha_k 
&= (1-\tau\hat\gamma_k)\mathbf{1}(\hat\gamma_k \leq 1/\tau) - (1-\tau\hat\gamma_{k+1})\mathbf{1}(\hat\gamma_{k+1} \leq 1/\tau)\\
&= (1-\tau\hat\gamma_k)_{+} - (1-\tau\hat\gamma_{k+1})_{+}, \quad k = 1, 2, \dots, M.
\end{align*}      
We next consider the  case that $\lambda > \tau$.  Recall that in the proof of Lemma \ref{lem:mallow}, we assume $s_0=0,s_1,\dots,s_u=M$ correspond to the change points of $\hat{\boldsymbol{\gamma}}$ and also assume $\hat m =\tilde m = s_a$ where $0\leq a \leq M$. Let $\hat{\boldsymbol{\beta}}$ be the solution to program \eqref{loss:1} and suppose $r_q = \max\big\{k\in \{0,1,\dots,M\}: \hat\beta_k < 1\big\}$. 
Note that we want to show almost surely $\hat\alpha_k = (1-\tau\hat\gamma_k)\mathbf{1}(\hat\gamma_k < 1/\lambda) - (1-\tau\hat\gamma_{k+1})\mathbf{1}(\hat\gamma_{k+1} < 1/\lambda)$, which implies that $\hat\alpha_k=0$ and $\hat\beta_k=1$ when $k>s_a$.  
Hence, we first show that almost surely,  $r_q = \hat m = s_a$. The general idea of the proof is that if $r_q \neq s_a$, we can always construct an alternative valid solution with a smaller objective value, thereby establishing a contradiction.

Suppose $r_q > s_a$. Note that  when given $r_q$, program \eqref{loss:1} is reduced to the problem
\begin{equation} 
\begin{aligned}\label{prog:reduced}
& \text{minimize} \quad \sum_{k=1}^{r_q}\drk\bigg(\frac{\tau\sigma^2}{n}\frac{\ddk}{\drk} - \beta_k\bigg)^2 +  \frac{(\lambda-\tau)^2}{\lambda}\frac{\sigma^2}{n}\sum_{k=1}^{r_q}\ddk\mathbf{1}(\beta_k \neq 1) \\
&\text{subject to} \quad \beta_1 \leq \beta_2 \leq \cdots \leq \beta_{r_q} \leq 1.
\end{aligned}
\end{equation}
Suppose the change points that correspond to its solution $\hat{\boldsymbol{\beta}}$ are $r_0=0,r_1,\dots,r_q$ and also  suppose $s_{b-1}<r_q\leq s_{b}$ where $b > a$. 
The solution $\hat{\boldsymbol{\beta}}$ has the form $\hat\beta_k = \frac{\tau\sigma^2}{n}\frac{d_{r_l}-d_{r_{l-1}}}{R_{r_{l-1}}-R_{r_l}}$ when $r_{l-1}<k\leq r_l$ $(l \leq q)$.
Note that for $s_{b-1}<k\leq r_q$,
\begin{align*}
    \hat\beta_k = \frac{\tau\sigma^2}{n}\min_{k\leq j\leq r_q}\max_{1\leq i\leq k}\frac{d_j-d_{i-1}}{R_{i-1}-R_j} \geq \tau\hat\gamma_k \geq \tau\hat\gamma_{s_a+1}\geq \frac{\tau}{\lambda}
\end{align*}
where we use the fact that $k \geq s_{b-1}+1\geq s_a+1$. Thus, leveraging Lemma \ref{lem:prob0}, we conclude almost surely $\tau/\lambda<\hat\beta_{r_q} < 1$. Let $\Delta R_{r_l} = R_{r_{l-1}} - R_{r_l}$ and $\Delta d_{r_l} = d_{r_l}-d_{r_{l-1}}$ for any $l \leq q$.  Note that
\begin{equation}\label{ineq:rq_large}
\begin{aligned}
    &\sum_{k=r_{q-1}+1}^{r_q}\drk\bigg(\frac{\tau\sigma^2}{n}\frac{\ddk}{\drk}-\hat\beta_k\bigg)^2+\frac{(\lambda-\tau)^2}{\lambda}\frac{\sigma^2}{n}(d_{r_q}-d_{r_{q-1}}) - \sum_{k=r_{q-1}+1}^{r_q}\drk\bigg(\frac{\tau\sigma^2}{n}\frac{\ddk}{\drk}-1\bigg)^2 \\
    &=~\Delta R_{r_q}\bigg(-\hat\beta_{r_q}^2+\frac{(\lambda-\tau)^2}{\tau\lambda}\hat\beta_{r_q}-1+2\hat\beta_{r_q}\bigg)\\
    &=~-\frac{\Delta R_{r_q}}{\tau\lambda}(\tau\hat\beta_{r_q}-\lambda)(\lambda\hat\beta_{r_q}-\tau)>0.
\end{aligned}
\end{equation}
This means if we change $\hat\beta_k$ to 1 for any $r_{q-1}<k\leq r_q$, then we will get a smaller objective value of program \eqref{loss:1}. This establishes a contradiction, leading to the conclusion that $r_q \leq s_a$.

Suppose $r_q < s_a$. Again, we assume $s_{b-1}<r_q\leq s_{b}$ where $b\leq a$. If there exists $b<a$ such that $r_q = s_b$, then for any $s_b+1\leq k \leq s_{b+1}$, we consider changing $\hat\beta_k$ from 1 to $\check{\beta}= \frac{\tau\sigma^2}{n}\frac{d_{s_{b+1}}-d_{s_{b}}}{R_{s_{b}}-R_{s_{b+1}}}$.  Following a similar inequality as \eqref{ineq:rq_large}, we have that 
\begin{align*}
&\sum_{k=s_{b}+1}^{s_{b+1}}\drk\bigg(\frac{\tau\sigma^2}{n}\frac{\ddk}{\drk}-\check\beta\bigg)^2+\frac{(\lambda-\tau)^2}{\lambda}\frac{\sigma^2}{n}(d_{s_{b+1}}-d_{s_{b}}) - \sum_{k=s_{b}+1}^{s_{b+1}}\drk\bigg(\frac{\tau\sigma^2}{n}\frac{\ddk}{\drk}-1\bigg)^2\\
&=~ -\frac{ R_{s_{b}} -  R_{s_{b+1}}}{\tau\lambda}(\tau\check\beta-\lambda)(\lambda\check\beta-\tau)<0,
\end{align*}
where in the last inequality we use the fact that $\check\beta<\tau/\lambda$ since $b<a$. This means changing $\hat\beta_k$ $(s_b+1\leq k \leq s_{b+1})$ from 1 to $\check\beta$ can get a smaller objective value, which leads to a contradiction.

Thus, we can assume $s_{b-1}<r_q< s_{b}$ where $b\leq a$.  When $\hat\beta_{r_q}>\tau/\lambda$, with a similar argument as above, we can change $\hat\beta_k$ to 1 for any $r_{q-1}<k\leq r_q$ and get a smaller objective value of program \eqref{loss:1}. We thus only need to focus on the case of $\hat\beta_{r_q}\leq\tau/\lambda$, and almost surely we can assume $\hat\beta_{r_q}<\tau/\lambda$. For any $k\leq s_b$ we let $s_{c-1}<k\leq s_c$ where $c \leq b$. Then, the solution $\hat\beta_k$ to program \eqref{prog:reduced} satisfies
\begin{equation}\label{eq:hatbeta}
    \tau\hat\gamma_k=\frac{\tau\sigma^2}{n}\min_{k\leq j\leq M}\max_{1\leq i\leq k}\frac{d_j-d_{i-1}}{R_{i-1}-R_j}\leq\hat\beta_k = \frac{\tau\sigma^2}{n}\min_{k\leq j\leq r_q}\max_{1\leq i\leq k}\frac{d_j-d_{i-1}}{R_{i-1}-R_j}\leq \frac{\tau\sigma^2}{n}\max_{1\leq i\leq k}\frac{d_{s_c}-d_{i-1}}{R_{i-1}-R_{s_c}} = \tau\hat\gamma_k,
\end{equation}
which means $ \hat\beta_k=\tau\hat\gamma_k $ for any $k\leq s_{b-1}$ and also $r_l=s_l$ for any $l\leq b-1$. Let $\tilde \beta = \frac{\tau\sigma^2}{n}\frac{d_{s_b}-d_{s_{b-1}}}{R_{s_{b-1}}-R_{s_{b}}}$. We next show that if we change $\hat\beta_k$ to $\tilde \beta$ for any $s_{b-1}<k\leq s_b$, then we obtain a smaller objective value of program \eqref{loss:1}. It is worth noting that $\tilde\beta \geq \hat\beta_k$ for any $k\leq s_{b-1}$ since they are the solutions of isotonic regression, and $\tilde \beta < \tau/\lambda<1$. Hence, the replacement of $\hat\beta_k$ $(s_{b-1}<k\leq s_b)$ with $\tilde \beta$ is indeed valid. Thus, it suffices to show 
\begin{equation} \label{ineq:base}
\begin{aligned}
    &\sum_{k=r_{b-1}+1=s_{b-1}+1}^{r_q}w_k(z_k-\hat\beta_k)^2+\frac{(\lambda-\tau)^2}{\lambda}\frac{\sigma^2}{n}(d_{r_q}-d_{r_{b-1}}) + \sum_{k=r_q+1}^{s_b}w_k(z_k-1)^2\\
    &>~ \sum_{k=r_{b-1}+1=s_{b-1}+1}^{s_b}w_k(z_k-\tilde \beta)^2 + \frac{(\lambda-\tau)^2}{\lambda}\frac{\sigma^2}{n}(d_{s_b}-d_{s_{b-1}})
\end{aligned}
\end{equation}
where $ w_k = \Delta R_k > 0  $ and $ z_k = (\tau\sigma^2/n)(\Delta d_k/\Delta R_k) > 0 $, and we have constraints $\tilde \beta<\hat\beta_{r_{b}}<\cdots<\hat\beta_{r_q}<\tau/\lambda$.  By taking $\hat\beta_k = \frac{\tau\sigma^2}{n}\frac{d_{r_l}-d_{r_{l-1}}}{R_{r_{l-1}}-R_{r_l}}$ when $r_{l-1}<k\leq r_l$ $(l \leq q)$ and $\tilde \beta$ into \eqref{ineq:base}, we have that \eqref{ineq:base} is equivalent to 
\begin{equation}\label{ineq:2}
    \sum_{u=b}^q\frac{(\sum_{k=r_{u-1}+1}^{r_{u}}w_kz_k)^2}{\sum_{k=r_{u-1}+1}^{r_{u}}w_k} \leq \frac{(\sum_{k=s_{b-1}+1}^{s_{b}}w_kz_k)^2}{\sum_{k=s_{b-1}+1}^{s_{b}}w_k} + \sum_{k=r_q+1}^{s_b}w_k - \frac{\lambda^2+\tau^2}{\lambda\tau}\sum_{k=r_q+1}^{s_b}w_kz_k.
\end{equation}
For simplicity of notation, let $p=q-b$, $t_i = \frac{\sum_{k=r_{b+i-1}+1}^{r_{b+i}}w_kz_k}{\sum_{k=r_{b+i-1}+1}^{r_{b+i}}w_k}$  and $c_i = \frac{\sum_{k=r_{b+i-1}+1}^{r_{b+i}}w_k}{\sum_{k = r_{b-1}+1}^{s_b} w_k}$ where $i=0,1,\dots,p$. In addition, let $\tilde b = \frac{\sum_{k=r_q+1}^{s_b}w_kz_k}{\sum_{k=r_q+1}^{s_b}w_k}$ and $\tilde c = \frac{\sum_{k=r_q+1}^{s_b}w_k}{\sum_{k = r_{b-1}+1}^{s_b} w_k}$. Then \eqref{ineq:2} becomes equivalent to 
\begin{align}\label{ineq:3}
    c_0t_0^2+c_1t_1^2+\cdots+c_pt_p^2 \leq (c_0t_0+c_1t_1+\cdots+c_pt_p+\tilde c \tilde b)^2 + \tilde c - \frac{\lambda^2+\tau^2}{\lambda\tau}\tilde b \tilde c,
\end{align}
and we have constraints $\tilde \beta<\hat\beta_{r_{b}}<\cdots<\hat\beta_{r_q}<\tau/\lambda$ equivalent to $c_0+c_1+\dots+c_p+\tilde c = 1$ and 
\begin{equation}\label{ineq:constraint}
    c_0t_0+c_1t_1+\cdots+c_pt_p+\tilde c \tilde b < t_0 < \dots < t_p \leq \tau/\lambda.
\end{equation}
Note that since $r_q<s_b$, we have $0<\tilde c\leq 1$. Then \eqref{ineq:constraint} can be rewritten as 
\begin{equation}\label{ineq:cst2}
    \tilde b < \frac{t_0 - (c_0t_0+c_1t_1+\dots+c_pt_p)}{\tilde c} \quad \text{and} \quad t_0< t_1< \dots< t_p < \tau/\lambda.
\end{equation}
Note that we can write the right hand side of \eqref{ineq:3} as a quadratic function of $\tilde b$, which we denote by $g(\tilde b)$. Then,
\begin{equation}
    g(\tilde b) = \tilde c^2\tilde b^2 + 2\tilde c (c_0t_0+c_1t_1+\dots+c_pt_p - \frac{\lambda^2+\tau^2}{2\lambda\tau})\tilde b  + (c_0t_0+c_1t_1+\dots+c_pt_p)^2 + \tilde c.
\end{equation}
Since $t_0 < \tau/\lambda<1\leq \frac{\lambda^2+\tau^2}{2\lambda\tau}$ and $\tilde b \leq \frac{t_0 - (c_0t_0+c_1t_1+\dots+c_pt_p)}{\tilde c}$, we have that $g(\tilde b) \geq g\Big(\frac{t_0 - (c_0t_0+c_1t_1+\dots+c_pt_p)}{\tilde c}\Big)$. Then to establish \eqref{ineq:3}, it suffices to show 
\begin{align}
    c_0t_0^2+c_1t_1^2+\cdots+c_pt_p^2 &< g\Big(\frac{t_0 - (c_0t_0+c_1t_1+\dots+c_pt_p)}{\tilde c}\Big)\\
    &= t_0^2+\tilde c - \frac{\lambda^2+\tau^2}{\lambda\tau}(t_0-c_0t_0-c_1t_1-\dots-c_pt_p),
\end{align}
which is equivalent to 
\begin{align}
    \sum_{j=1}^p c_jh(t_j) < (1-c_0)t_0^2-\frac{\lambda^2+\tau^2}{\lambda\tau}(1-c_0)t_0+\tilde c,
\end{align}
where we let $h(z) = z^2-\frac{\lambda^2+\tau^2}{\lambda\tau}z$. Note that $h(z)$ is also a quadratic function and $t_0 < t_j< \tau/\lambda$ for any $j=1,2,\dots,p$. Then 
\begin{align*}
    \sum_{j=1}^p c_jh(t_j) < \sum_{j=1}^p c_jh(t_1).
\end{align*}
In addition, note that
\begin{align*}
    \sum_{j=1}^p c_jh(t_0) < (1-c_0)t_0^2-\frac{\lambda^2+\tau^2}{\lambda\tau}(1-c_0)t_0+\tilde c
\end{align*}
is equivalent to 
\begin{align*}
    \frac{\tilde c}{\lambda\tau}(\tau t_0-\lambda)(\lambda t_0-\tau) > 0.
\end{align*}
This is satisfied by the constraint that $t_0 < \tau/\lambda$, and thus we prove \eqref{ineq:3}. Overall, we have that changing $\hat\beta_k$ to $\tilde \beta$ for any $s_{b-1}<k\leq s_b$ can lead to a smaller objective value of program \eqref{loss:1}, which gives a contradiction. Therefore, we proved almost surely $r_q = s_a$. In this case, with a similar argument as \eqref{eq:hatbeta}, we have that $\hat\beta_k = \tau\hat\gamma_k$ for any $k\leq r_q$. Thus, 
\begin{equation*}
    \hat\alpha_k = (1-\tau\hat\gamma_k)\mathbf{1}(\hat\gamma_k < 1/\lambda) - (1-\tau\hat\gamma_{k+1})\mathbf{1}(\hat\gamma_{k+1} < 1/\lambda), \quad k = 1, 2, \dots, M, 
\end{equation*}
and  this is the almost surely unique solution. \qedhere
\end{proof}

\begin{proof}[Proof of Theorem \ref{thm:main}]
To prove this theorem, we need the following lemma, which is an extension of Stein's Lemma for discontinuous functions.
\begin{lemma}[Corollary 1 in \cite{tibshirani2015degrees}]\label{lem:tib}
Let $X\sim N(\mu, \sigma^2)$, where $ \mu \in \mathbb{R}$ and $ \sigma > 0 $. Let $h$ be a piecewise absolutely continuous function, with discontinuity set $\{\delta_1,\delta_2,\dots,\delta_m\}$, and derivative $h'$ satisfying $\mathbb{E}[|h'(X)|] < \infty$. Then, 
\begin{align*}
    \frac{1}{\sigma^2}\mathbb{E}[(X-\mu)h(X)] = \mathbb{E}[h'(X)] + \frac{1}{\sigma}\sum_{k=1}^m\phi\bigg(\frac{\delta_k-\mu}{\sigma}\bigg)[h(\delta_k)_+ - h(\delta_k)_-],
\end{align*}
where $\phi$ is the standard normal density function, and $h(x)_+ = \lim_{t\downarrow x}h(t)$ and $h(x)_- = \lim_{t\uparrow x}h(t)$.
\end{lemma}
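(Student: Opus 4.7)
The plan is to derive the identity via piecewise integration by parts against the Gaussian density, exploiting the familiar identity $(x-\mu)\varphi_{\mu,\sigma}(x) = -\sigma^2 \varphi_{\mu,\sigma}'(x)$ for $\varphi_{\mu,\sigma}(x) = (1/\sigma)\phi((x-\mu)/\sigma)$, the density of $N(\mu,\sigma^2)$. This single identity converts the left-hand side into
$$\E[(X-\mu)h(X)] = -\sigma^2 \int_{-\infty}^{\infty} h(x)\varphi_{\mu,\sigma}'(x)\,dx,$$
reducing the claim to a purely analytic statement, and is the only place where Gaussianity enters.

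Next I would chop the real line at the discontinuity points, setting $\delta_0 = -\infty$ and $\delta_{m+1} = +\infty$ so that $h$ is absolutely continuous on each open interval $(\delta_k, \delta_{k+1})$. On each such piece, integration by parts (valid for absolutely continuous functions via the Lebesgue product rule) gives
$$\int_{\delta_k}^{\delta_{k+1}} h(x) \varphi_{\mu,\sigma}'(x)\, dx = h(\delta_{k+1})_- \varphi_{\mu,\sigma}(\delta_{k+1}) - h(\delta_k)_+ \varphi_{\mu,\sigma}(\delta_k) - \int_{\delta_k}^{\delta_{k+1}} h'(x) \varphi_{\mu,\sigma}(x)\,dx.$$
Summing across $k = 0, 1, \ldots, m$, the interior boundary values at $\delta_1,\dots,\delta_m$ collect into $\sum_{k=1}^{m}[h(\delta_k)_- - h(\delta_k)_+]\varphi_{\mu,\sigma}(\delta_k)$, the interior integrals combine into $\E[h'(X)]$, and (modulo the analytic step below) the contributions at $\pm\infty$ vanish. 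Rearranging, dividing by $\sigma^2$, and writing $\varphi_{\mu,\sigma}(\delta_k) = (1/\sigma)\phi((\delta_k-\mu)/\sigma)$ recovers the advertised formula, with the sign of $h(\delta_k)_+ - h(\delta_k)_-$ coming out correctly because $h(\delta_k)_+$ enters each piece as a lower-limit boundary value and $h(\delta_k)_-$ as an upper-limit boundary value.

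The main obstacle is justifying the vanishing of the boundary terms at $\pm\infty$ from only the hypothesis $\E[|h'(X)|] < \infty$. I would handle this by truncation: perform the computation above on a compact interval $[-N,N]$, apply dominated convergence (using the Gaussian weighting and the assumption $\E|h'(X)|<\infty$) to conclude $\int_{-N}^N h'(x)\varphi_{\mu,\sigma}(x)\,dx \to \E[h'(X)]$, and show that the products $h(\pm N)\varphi_{\mu,\sigma}(\pm N)$ tend to zero along a suitable sequence $N_j\to\infty$. For the latter, the representation $h(x) = h(x_0) + \int_{x_0}^x h'(t)\,dt$ combined with the fact that $\varphi_{\mu,\sigma}$ is eventually monotone decreasing and decays faster than any polynomial allows one to dominate $|h(x)|\varphi_{\mu,\sigma}(x)$ by a tail expression involving $\int_{|t|\geq |x_0|}|h'(t)|\varphi_{\mu,\sigma}(t)\,dt$, which is finite; a Chebyshev-type argument then forces a subsequence along which $|h(\pm N_j)|\varphi_{\mu,\sigma}(\pm N_j)\to 0$, and passing to the limit along that subsequence closes the argument. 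This careful tail analysis is the only non-routine step; everything else is algebra once the pieces are in place.
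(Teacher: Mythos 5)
Your proof is correct, but note that the paper does not actually prove Lemma~\ref{lem:tib}: it is imported verbatim as Corollary~1 of \cite{tibshirani2015degrees}, so there is no in-paper argument to compare against. What you have written is a self-contained derivation that essentially reconstructs the standard proof of the extended Stein identity in that reference: convert $(x-\mu)\varphi_{\mu,\sigma}(x)$ to $-\sigma^2\varphi_{\mu,\sigma}'(x)$, integrate by parts on each interval of absolute continuity, and observe that the interior boundary terms assemble into the jump sum $\sum_k [h(\delta_k)_+ - h(\delta_k)_-]\varphi_{\mu,\sigma}(\delta_k)$ with the correct sign. Your treatment of the tails is sound and is the only delicate point; in fact you can get the full limit $|h(x)|\varphi_{\mu,\sigma}(x)\to 0$ rather than just a subsequence: writing $h(x)=h(x_0)+\int_{x_0}^x h'(t)\,\intd t$ on the last piece and splitting the integral at a level $A$ beyond which $\int_A^\infty |h'(t)|\varphi_{\mu,\sigma}(t)\,\intd t<\epsilon$, the eventual monotonicity of $\varphi_{\mu,\sigma}$ gives $\varphi_{\mu,\sigma}(x)\int_{x_0}^x|h'|\leq \varphi_{\mu,\sigma}(x)\int_{x_0}^A|h'| + \epsilon$, and the first term vanishes as $x\to\infty$. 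The same estimate shows $\mathbb{E}[|(X-\mu)h(X)|]<\infty$, a point worth recording explicitly since the lemma's hypotheses only mention $\mathbb{E}[|h'(X)|]<\infty$. With those two small remarks your argument is complete and could stand in place of the citation.
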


    We first characterize $\mathbb{E}\big[\| \bff - \bhfbest\|^2\big]$.  By the usual population risk decomposition formula,
\begin{align*}
    \mathbb{E}\big[\|\bff-\bhfbest\|^2\big] = \mathbb{E}\big[\|\by-\bhfbest\|^2\big] - \sigma^2 + \frac{2\sigma^2}{n} \text{df}(\hat f_{\text{best}}).
\end{align*}
For simplicity of notation, we let $\tilde \beta_k = \mathbf{1}(\hat\gamma_k \geq 1/\lambda)$, where $k=1,2,\dots,M$. Then $\hat f_{\text{best}}(\bx) = \sum_{k=1}^M (\hat\mu_{k}(\bx)-\hat\mu_{k-1}(\bx))(1-\tilde\beta_k)$.  To compute $\text{df}(\hfbest)$, note that 
\begin{align}\label{eq:compute_dof}
     \text{df}(\hfbest)=\frac{1}{\sigma^2}\sum_{i=1}^n \text{cov}\left(\hfbest(\bx_i), y_i\right) = \frac{1}{\sigma^2}\sum_{i=1}^n\sum_{k=1}^M \sum_{l \in A_k\backslash A_{k-1}}\cov\pth{\biprodl \psi_l(\bx_i) (1-\tilde\beta_k), y_i}.
\end{align}
In addition, 
\begin{align*}
    &\frac{1}{n}\sum_{i=1}^n\cov\pth{\biprodl \psi_l(\bx_i)(1-\tilde\beta_k), y_i}\\
    &=~ \frac{1}{n}\sum_{i=1}^n \E\qth{\biprodl \psi_l(\bx_i)y_i (1-\tilde\beta_k)} - \frac{1}{n}\sum_{i=1}^n \E\qth{\biprodl \psi_l(\bx_i)f(\bx_i) (1-\tilde\beta_k)}\\
    &=~ \E \qth{\biprodl^2 (1-\tilde\beta_k)} - \E \qth{\biprodl\iprod{f}{\psi_l} (1-\tilde\beta_k)}\\
    &=~\cov\pth{\biprodl, \biprodl (1-\tilde\beta_k)}.
\end{align*}
Since $\biprodl \sim N(\iprod{\bff}{\boldsymbol{\psi}_l}, \sigma^2/n)$ and $\biprodl (1-\tilde\beta_k)$ is a piecewise absolutely continuous function in $\biprodl$, we can leverage Lemma \ref{lem:tib}. Specifically, we have
\begin{equation}\label{eq:fbest_df}
\begin{aligned}
    \text{df}(\hfbest)&=\frac{1}{\sigma^2}\sum_{i=1}^n\sum_{k=1}^M\sum_{l \in A_k\backslash A_{k-1}}\cov\pth{\biprodl \psi_l(\bx_i) (1-\tilde\beta_k), y_i}\\
    &=\frac{n}{\sigma^2}\sum_{k=1}^M\sum_{l \in A_k\backslash A_{k-1}} \cov\pth{\biprodl, \biprodl (1-\tilde\beta_k)}\\
    &=\sum_{k=1}^M\sum_{l \in A_k\backslash A_{k-1}}  \E \qth{\frac{\partial}{\partial \biprodl}\biprodl(1-\tilde \beta_k)}+\text{sdf}(\hfbest) \\
    &= \sum_{k=1}^M\sum_{l \in A_k\backslash A_{k-1}}\E\qth{1-\hat \beta_k + \biprodl \frac{\partial}{\partial \biprodl}(1-\tilde \beta_k)}+\text{sdf}(\hfbest) \\
    &=\E\qth{\sum_{k=1}^M\ddk(1-\tilde\beta_k)} + \text{sdf}(\hfbest),
\end{aligned}
\end{equation}
where we denote $\text{sdf}(\hfbest)$ as the population risk arising from the discontinuous points of $\hfbest$, also called the \emph{search degrees of freedom} \citep{tibshirani2015degrees}. Note that in the third equality above, it is necessary to first condition on $\iprod{\by}{\boldsymbol{\psi}_{l'}}$ for all $l' \neq l$, and subsequently absorb the conditional expectation. For the sake of simplicity in notation, we will omit this step in the following discussion.  The last equality holds since in each piece, we have $\frac{\partial}{\partial \biprodl}(1-\hat \beta_k) = 0$. Therefore, with Lemma \ref{lem:training_error} and \eqref{eq:fbest_df}, we have that
\begin{align*}
    \mathbb{E}\big[\|\bff-\bhfbest\|^2\big] &= \mathbb{E}\big[\|\by-\bhfbest\|^2\big] - \sigma^2 + \frac{2\sigma^2}{n} \text{df}(\hat f_{\text{best}})\\
    &= \E\qth{R_0+\sum_{k=1}^M\drk\pth{(1-\tilde\beta_k)^2-2(1-\tilde\beta_k)}+\sum_{k=1}^M\frac{2\sigma^2}{n}\ddk(1-\tilde\beta_k)} + \frac{2\sigma^2}{n}\text{sdf}(\hfbest) - \sigma^2\\
    &=\E\qth{R_0+\sum_{k=1}^M\drk\pth{\tilde\beta_k^2-1  }+\sum_{k=1}^M\frac{2\sigma^2}{n}\ddk(1-\tilde\beta_k)} + \frac{2\sigma^2}{n}\text{sdf}(\hfbest) - \sigma^2.
\end{align*}
We defer the characterization of $\text{sdf}(\hfbest)$ to a later point. To compute $\mathbb{E}\big[\|\bff-\bhfstack\|^2\big]$ we need the following lemma. 
\begin{lemma}\label{lem:p_ab_con}
    The following results hold.
\begin{enumerate}[(i)]
\item For any $1\leq k \leq M$ and $l \in A_M$, given $\iprod{\by}{\boldsymbol{\psi}_{l'}}$ for all $ l' \neq l $, we have $\hat\gamma_k$ is nonincreasing with $|\biprodl|$.

\item For any $1\leq k \leq M$ and $l \in A_M$, given $\iprod{\by}{\boldsymbol{\psi}_{l'}}$ for all $ l' \neq l $, we have $(1-\tau\hat\gamma_k)\mathbf{1}(\hat\gamma_k < \gamma)$ is piecewise absolutely continuous with $\biprodl$.
\end{enumerate}
\end{lemma}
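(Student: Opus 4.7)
The plan is to exploit the explicit orthonormal-expansion formula $R_j - R_i = \sum_{l' \in A_i \setminus A_j} \iprod{\by}{\boldsymbol{\psi}_{l'}}^2$ (valid for $j < i$, with the convention $A_0 = \emptyset$) and to track precisely when the coordinate $\biprodl$ enters each ratio. Conditional on $\iprod{\by}{\boldsymbol{\psi}_{l'}}$ for all $l' \neq l$, let $k^\ast$ be the unique index with $l \in A_{k^\ast} \setminus A_{k^\ast - 1}$. Then $R_j - R_i = c_{j,i} + \biprodl^2 \cdot \mathbf{1}(j < k^\ast \leq i)$ for nonnegative constants $c_{j,i}$; in either case $R_j - R_i$ is a nondecreasing function of $|\biprodl|$.

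For part (i), fix any $0 \leq j < k \leq i \leq M$. Since $d_i > d_j$ by strict nesting and $R_j - R_i > 0$ (by the standing assumption $R_k < R_{k-1}$) is nondecreasing in $|\biprodl|$, the ratio $(d_i - d_j)/(R_j - R_i)$ is nonincreasing in $|\biprodl|$. Pointwise maxima over $j \in \{0, \dots, k-1\}$ and then minima over $i \in \{k, \dots, M\}$ preserve this monotonicity, so $\hat\gamma_k = (\sigma^2/n)\min_{k \leq i \leq M}\max_{0 \leq j < k}(d_i - d_j)/(R_j - R_i)$ is nonincreasing in $|\biprodl|$.

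For part (ii), first observe that each ratio is continuous in $\biprodl$ (the denominator never vanishes), so the min-max $\hat\gamma_k$ is continuous in $\biprodl$. The outer map $z \mapsto (1-\tau z)\mathbf{1}(z < \gamma)$ is continuous except for a single jump at $z = \gamma$, of magnitude $|1 - \tau\gamma|$ (which is $0$ when $\gamma = 1/\tau$ and $\tau/\lambda - 1$ when $\gamma = 1/\lambda$ with $\tau > \lambda$). Combining continuity with the monotonicity from part (i), the set $\{\biprodl \in \mathbb{R} : \hat\gamma_k(\biprodl) < \gamma\}$ is of the form $\{|\biprodl| > c\}$ for some $c \in [0, \infty]$ (allowing the degenerate cases of all of $\mathbb R$ or the empty set). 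Consequently $(1-\tau\hat\gamma_k)\mathbf{1}(\hat\gamma_k < \gamma)$ has at most two discontinuities, at $\biprodl = \pm c$, and is absolutely continuous (in fact a continuous rational function of $\biprodl$) on each of the at most three resulting pieces.

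The only real obstacle is the bookkeeping of which $(i,j)$ pairs cause $R_j - R_i$ to depend on $\biprodl$, but the nested subspace structure reduces this to the single condition $j < k^\ast \leq i$, after which both parts follow from elementary monotonicity-preservation arguments. No deeper ingredients (e.g., detailed structure of the isotonic solution, or a priori bounds on the breakpoints) are required.
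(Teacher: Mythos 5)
Your proof is correct, and part (i) takes a genuinely cleaner route than the paper's. The paper argues dynamically: it isolates the block index $a$ with $l \in A_a\setminus A_{a-1}$, notes that $\biprodl^2$ sits in every denominator of the min--max defining $\hat\gamma_a$, and then tracks how $\hat\gamma_a$ decreases toward $\hat\gamma_k$ and drags it down once they meet. You instead observe that \emph{every} ratio $(d_i-d_j)/(R_j-R_i)$ appearing in the min--max for $\hat\gamma_k$ is individually nonincreasing in $|\biprodl|$ (the numerator is a positive constant and the denominator is $c_{j,i}+\biprodl^2\mathbf{1}(j<k^\ast\leq i)$), and that pointwise max and min preserve monotonicity. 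This avoids any reasoning about which pair attains the min--max and is the more robust argument. Part (ii) follows the paper's route essentially verbatim: continuity of $\hat\gamma_k$ plus the monotonicity from (i) localizes the discontinuities of $(1-\tau\hat\gamma_k)\mathbf{1}(\hat\gamma_k<\gamma)$ to at most the two points $\pm c$, with the jump vanishing when $\gamma=1/\tau$. Two small imprecisions worth fixing: on each of the resulting pieces $\hat\gamma_k$ is not a single rational function but a continuous, \emph{piecewise} rational function (the active $(i,j)$ pair in the min--max changes at finitely many breakpoints); this is still enough for absolute continuity on compacts, and the paper handles it explicitly via the expansion of $\hat\gamma_k$ into finitely many rational branches. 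Also, your claim that the denominator ``never vanishes'' should be qualified: if $A_i\setminus A_j=\{l\}$ then $R_j-R_i=\biprodl^2$ vanishes at $\biprodl=0$, so $\hat\gamma_k$ may diverge there; this is harmless because the composed function is then identically zero near the origin, but it deserves a sentence (the paper glosses over this point as well).
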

\begin{proof}[Proof of Lemma \ref{lem:p_ab_con}]
    To prove part (i), suppose that $l \in A_a \backslash A_{a-1}$ where $1\leq a \leq M$. If $\hat\gamma_a<\hat\gamma_k$, then as $|\biprodl|$ increases, since $l \in A_a \backslash A_{a-1}$ and $\biprodl^2$ appears in the denominator $ R_{j-1}-R_i $ of $\hat\gamma_a = \frac{\sigma^2}{n}\min_{a\leq i\leq M}\max_{1\leq j\leq a}\frac{d_i-d_{j-1}}{R_{j-1}-R_i}$, the $\hat\gamma_a$ will keep decreasing while $\hat\gamma_k$ remains unchanged. If $\hat\gamma_a \geq \hat\gamma_k$, then as $|\biprodl|$ increases, $\hat\gamma_a$ keeps decreasing and $\hat\gamma_k$ still remains unchanged until $\hat\gamma_a$ reaches  $\hat\gamma_k$. After this point,  they remain equal and continue to decrease.

    To prove part (ii), by symmetry it suffices to show the piecewise absolutely continuity on $\biprodl \in [0,\infty)$. Let $\hat\tau_k = (1-\tau\hat\gamma_k)\mathbf{1}(\hat\gamma_k < \gamma)$.  We first consider the case of $\lambda > \tau$. Since $\hat\gamma_k$ is the solution of vanilla isotonic regression, it is continuous. By part (i), we have that there is at most one discontinuous point when $\biprodl\geq 0$. We denote it by $t$ and note that it satisfies $\hat\gamma_k = 1/\lambda$ when $\biprodl=t$.  Then from part (i) we have that $\hat\tau_k=0$ when $\biprodl\leq t$ and thus it is absolutely continuous. When $\biprodl>t$, note that we can write $\hat\gamma_k = \frac{\sigma^2}{n}\frac{d_c-d_b}{R_b-R_c}$, where $1\leq b<c\leq M$. Here $a,b$ are two change points of $\hat{\boldsymbol{\gamma}}$, and they are both functions of $\biprodl$. Since $a,b$ have only finite number of choices, we can thus rewrite $\hat\gamma_k$ as
    \begin{align}\label{eq:gamma_expansion}
        \hat\gamma_k = \sum_{i=1}^{K}\frac{\sigma^2}{n}\frac{d_{c_i}-d_{b_i}}{R_{b_i}-R_{c_i}}\mathbf{1}(t_{i-1}<\biprodl\leq t_i) + \frac{\sigma^2}{n}\frac{d_{c_{K+1}}-d_{b_{K+1}}}{R_{b_{K+1}}-R_{c_{K+1}}}\mathbf{1}(t_{K}<\biprodl),
    \end{align}
    where $1\leq b_i<c_i\leq M$ for any $i=1,2,\dots,K+1$ and $t=t_0<t_1<\cdots<t_K$. Here $b_i, c_i$ are constants independent of $\biprodl$ for any $i$, and $K$ is also a constant independent of $\biprodl$. To clarify, it is important to note that $b_i, c_i$, and $K$ should depend on $\iprod{\by}{\boldsymbol{\psi}_{l'}}$ where $l' \neq l$. However, in this particular lemma, we are provided with  $\{\iprod{\by}{\boldsymbol{\psi}_{l'}}\}_{l'\neq l}$. Therefore, for the purposes of this lemma, we treat $b_i, c_i$, and $K$ as constants. Note that $\frac{\sigma^2}{n}\frac{d_{c_i}-d_{b_i}}{R_{b_i}-R_{c_i}}$ is absolutely continuous. Hence, we have that $\hat\gamma_k$ is absolutely continuous when   $\biprodl>t$, and so is $(1-\tau\hat\gamma_k)$. Therefore, we proved $\hat\tau_k$ is piecewise absolutely continuous when $\lambda > \tau$.  In the case of $\lambda\leq \tau$, we have $\hat\tau_k = (1-\tau\hat\gamma_k)\mathbf{1}(\tau\hat\gamma_k < 1)$ is continuous. With a similar argument as before, we also have that that $\hat\tau_k$ is absolutely continuous. \qedhere
\end{proof}
Now we start to compute $\mathbb{E}\big[\|\bff-\bhfstack\|^2\big]$. Since
\begin{align*}
    \mathbb{E}\big[\|\bff-\bhfstack\|^2\big] = \mathbb{E}\big[\|\by-\bhfstack\|^2\big] + \frac{2\sigma^2}{n} \text{df}(\hfstack)  - \sigma^2,
\end{align*}
we need to compute $\text{df}(\hfstack)$. 
In the case of $\lambda > \tau$, recall that in the proof of Theorem \ref{thm:fstack}, we have that $\hat\beta_k = \tau\hat\gamma_k$ for any $k\leq s_a = \tilde m$ where $\tilde m = \max\big\{k\in \{0,1,\dots,M\}: \hat \gamma_{k} < 1/\lambda\big\}$ and $\hat\beta_k=1$ for any $k>\tilde m$.  
Using a similar argument as in the computation of $\text{df}(\hfbest)$, we have
\begin{equation} \label{eq:fstack_df}
\begin{aligned}
   \text{df}(\hfstack)&=\frac{1}{\sigma^2}\sum_{i=1}^n\sum_{k=1}^M\sum_{l \in A_k\backslash A_{k-1}}\cov\pth{\biprodl \psi_l(\bx_i) (1-\hat\beta_k), y_i}\\
    &=\frac{n}{\sigma^2}\sum_{k=1}^M\sum_{l \in A_k\backslash A_{k-1}} \cov\pth{\biprodl, \biprodl (1-\hat\beta_k)}\\
    &=\sum_{k=1}^M\sum_{l \in A_k\backslash A_{k-1}}  \E \qth{\frac{\partial}{\partial \biprodl}\biprodl(1-\hat \beta_k)}+\text{sdf}(\hfstack) \\
    &= \sum_{k=1}^M\sum_{l \in A_k\backslash A_{k-1}}\E\qth{1-\hat \beta_k + \biprodl \frac{\partial}{\partial \biprodl}(1-\hat \beta_k)}+\text{sdf}(\hfstack),
\end{aligned}
\end{equation}
where $\text{sdf}(\hfstack)$ is the search degrees of freedom of $\hfstack$. To compute $\mathbb{E}\qth{\frac{\partial}{\partial \biprodl}\hat \beta_k}$, according to \eqref{eq:gamma_expansion} and by the symmetry of the case when $\biprodl<0$, we have
\begin{equation} \label{eq:deriv}
\begin{aligned}
    &\mathbb{E}\qth{\frac{\partial}{\partial \biprodl}\hat \beta_k}\\
    &=~ \mathbb{E}\qth{\Bigg(\frac{\partial}{\partial \biprodl}\hat\beta_k\Bigg) \Bigg(\sum_{i=1}^K\mathbf{1}(t_{i-1}<|\biprodl|\leq t_i)+\mathbf{1}(t_i<|\biprodl|)\Bigg)}\\
    &=~\sum_{i=1}^K\mathbb{E}\Bigg[\Bigg(\frac{\partial}{\partial \biprodl}\frac{\tau\sigma^2}{n} \frac{d_{c_i}-d_{b_i}}{R_{b_i}-R_{c_i}}\Bigg)\mathbf{1}(t_{i-1}<|\biprodl|\leq t_i) \Bigg]\\
    & \qquad +\mathbb{E}\Bigg[\Bigg(\frac{\partial}{\partial \biprodl} \frac{\tau\sigma^2}{n}\frac{d_{c_{K+1}}-d_{b_{K+1}}}{R_{b_{K+1}}-R_{c_{K+1}}}\Bigg)\mathbf{1}(t_{K}<|\biprodl|) \Bigg].
\end{aligned}
\end{equation}
Note that in \eqref{eq:fstack_df}, we have that $l\in A_k\backslash A_{k-1}$ which means $\biprodl^2$ always appears in the denominator $ R_{j-1}-R_i $ of $\hat\gamma_k = \frac{\sigma^2}{n}\min_{k\leq i\leq M}\max_{1\leq j\leq k}\frac{d_i-d_{j-1}}{R_{j-1}-R_i}$. Thus, in \eqref{eq:deriv}, we have that $\biprodl$ should also always appear in $R_{b_i}-R_{c_i}$ for any $i=1,2,\dots,K+1$. Recall that $s_0=0,s_1,\dots,s_u=M$ correspond to the change points of $\hat{\boldsymbol{\gamma}}$. Suppose $a(k)$ is the index for which $s_{a(k)-1} < k \leq s_{a(k)}$. Here $s_0,s_1,\dots,s_u$, and $a(k)$ are all functions of $\biprodl$. Then we can continue from \eqref{eq:deriv} and obtain
\begin{align}
    \mathbb{E}\qth{\frac{\partial}{\partial \biprodl}\hat \beta_k} &= \sum_{i=1}^K\mathbb{E}\Bigg[-2\biprodl\frac{\tau\sigma^2}{n} \frac{d_{c_i}-d_{b_i}}{(R_{b_i}-R_{c_i})^2}\mathbf{1}(t_{i-1}<|\biprodl|\leq t_i) \Bigg]\\
    & \quad + \mathbb{E}\Bigg[-2\biprodl\frac{\tau\sigma^2}{n} \frac{d_{c_{K+1}}-d_{b_{K+1}}}{(R_{b_{K+1}}-R_{c_{K+1}})^2}\mathbf{1}(t_{K}<|\biprodl|) \Bigg]\\
    &=\mathbb{E}\Bigg[-2\biprodl\frac{\tau\sigma^2}{n} \frac{d_{s_{a(k)}}-d_{s_{a(k)-1}}}{(R_{s_{a(k)-1}}-R_{s_{a(k)}})^2}\mathbf{1}(t_{0}<|\biprodl|) \Bigg]\\
    &=\mathbb{E}\Bigg[-2\biprodl\frac{\tau\sigma^2}{n} \frac{d_{s_{a(k)}}-d_{s_{a(k)-1}}}{(R_{s_{a(k)-1}}-R_{s_{a(k)}})^2}\mathbf{1}(k \leq  s_a) \Bigg].
\end{align}
Continuing from \eqref{eq:fstack_df}, we have
\begin{align}
    &\text{df}(\hfstack)\\
    &=~\E\qth{\sum_{k=1}^M\ddk(1-\hat\beta_k) +2\sum_{k=1}^{s_a} \sum_{l \in A_k\backslash A_{k-1}} \biprodl^2 \frac{\tau\sigma^2}{n}\frac{d_{s_{a(k)}}-d_{s_{a(k)-1}}}{\pth{R_{s_{a(k)-1}}-R_{s_{a(k)}}}^2}  }+\text{sdf}(\hfstack)\\
    &=~\E\qth{\sum_{k=1}^M\ddk(1-\hat\beta_k) +2\sum_{l=1}^{a} \sum_{l \in A_{s_l}\backslash A_{s_{l-1}}} \biprodl^2 \frac{\tau\sigma^2}{n}\frac{d_{s_{l}}-d_{s_{l-1}}}{\pth{R_{s_{l-1}}-R_{s_{l}}}^2} }+\text{sdf}(\hfstack)\\
    &=~\E\qth{\sum_{k=1}^M\ddk(1-\hat\beta_k) + 2\sum_{l=1}^a\hat\beta_{s_l}}+\text{sdf}(\hfstack).\label{eq:compute_dof_final}
\end{align}
In the case of $\lambda\leq \tau$, let $\tilde a$ satisfy $s_{\tilde a} = \max\big\{k\in \{0,1,\dots,M\}: \hat \gamma_{k} < 1/\tau\big\}$. Since $\hat\beta_k$ is continuous in this case, it should hold that  $\text{sdf}(\hfstack) = 0$. Thus,  with a similar argument as above, we have
\begin{align}\label{df_lambda_small}
    \text{df}(\hfstack) = \E\qth{\sum_{k=1}^M\ddk(1-\hat\beta_k) + 2\sum_{l=1}^{\tilde a}\hat\beta_{s_l}}.
\end{align}
Next we start to characterize $\text{sdf}(\hfbest)$ and $\text{sdf}(\hfstack)$. Given $1\leq k \leq M$ and $l\in A_k\backslash A_{k-1}$, we have that both $\hat\beta_k$ and $\tilde\beta_k$ should have the same discontinuous points with respect to $\biprodl$. In addition, suppose $t\geq0$ is the discontinuous point. Then $-t$ is also the  discontinuous point. By part (i) of Lemma \ref{lem:p_ab_con}, we have that $\hat\gamma_k$ is nonincreasing with $|\biprodl|$ and thus, both $\hat\beta_k$ and $\tilde\beta_k$ have at most two discontinuous points. To indicate the dependence with $l$ and $k$, we denote the two discontinuous points by $t_{l,k}$ and $-t_{l,k}$. Again since $\hat\gamma_k$ is nonincreasing with $|\biprodl|$, we have that $\tilde\beta_k = \mathbf{1}(|\biprodl|\leq t_{l,k})$ and $\hat\beta_k = \mathbf{1}(|\biprodl|\leq t_{l,k}) + \tau\hat\gamma_k\mathbf{1}(|\biprodl|> t_{l,k})$. Therefore, applying Lemma \ref{lem:tib} to $h(\biprodl) = \biprodl(1-\tilde\beta_k)$,  we have
\begin{align*}
    \text{sdf}(\hfbest) = \frac{\sqrt{n}}{\sigma}\sum_{k=1}^M\sum_{l \in A_k \backslash A_{k-1}}\mathbb{E}\Bigg[t_{l,k}\Bigg(\phi\Bigg(\frac{\sqrt{n}(t_{l,k} - \mu_l)}{\sigma}\Bigg)+\phi\Bigg(\frac{\sqrt{n}(-t_{l,k} - \mu_l)}{\sigma}\Bigg)\Bigg)\Bigg]> 0,
\end{align*}
where $\mu_l = \iprod{\bff}{\boldsymbol{\psi}_l}$, c.f., \citep{mikkelsen2018best, tibshirani2015degrees, tibshirani2019excess}. Also, when $\lambda > \tau$, applying Lemma \ref{lem:tib} to $h(\biprodl) = \biprodl(1-\hat\beta_k)$, we obtain
\begin{align*}
    \text{sdf}(\hfstack) &= \frac{\sqrt{n}}{\sigma}\sum_{k=1}^M\sum_{l \in A_k \backslash A_{k-1}}\mathbb{E}\Bigg[t_{l,k}(1-\tau/\lambda)\Bigg(\phi\Bigg(\frac{\sqrt{n}(t_{l,k} - \mu_l)}{\sigma}\Bigg)+\phi\Bigg(\frac{\sqrt{n}(-t_{l,k} - \mu_l)}{\sigma}\Bigg)\Bigg)\Bigg]\\
    &=(1-\tau/\lambda)\text{sdf}(\hfbest).
\end{align*}
When $\lambda \leq \tau$, according to \eqref{df_lambda_small}, we already have that that $\text{sdf}(\hfstack) = 0$.  Now we are ready to show $\mathbb{E}\big[\|\bff-\bhfstack\|^2\big]<\mathbb{E}\big[\|\bff-\bhfbest\|^2\big]$. In the case of $\lambda > \tau$, recall that $0=s_0,s_1,\dots,s_u=M$ correspond to the change points of $\hat{\boldsymbol{\gamma}}$ and $\tilde\beta_k = 0$ when $k\leq s_a$ and 1 otherwise. Thus, we have that 
\begin{align*}
    \mathbb{E}\big[\|\bff-\bhfbest\|^2\big] &= \E\qth{R_0+\sum_{k=1}^M\drk\pth{\tilde\beta_k^2-1  }+\sum_{k=1}^M\frac{2\sigma^2}{n}\ddk(1-\tilde\beta_k)} + \frac{2\sigma^2}{n}\text{sdf}(\hfbest) - \sigma^2\\
    &=\E\qth{R_0+\sum_{l=1}^a\Delta R_{s_l}\pth{\tilde\beta_{s_l}^2-1  }+\sum_{l=1}^a\frac{2\sigma^2}{n}\Delta d_{s_l}(1-\tilde\beta_{s_l}) } + \frac{2\sigma^2}{n}\text{sdf}(\hat f_{\text{best}}) -\sigma^2,
\end{align*}
where $\tilde\beta_{s_l} =0$ for any $1\leq l\leq a$.  With a similar argument, we have
\begin{align*}
    &\mathbb{E}\big[\|\bff-\bhfstack\|^2\big]\\
    &=~ \E\qth{R_0+\sum_{l=1}^a\Delta R_{s_l}\pth{\hat\beta_{s_l}^2-1  }+\sum_{l=1}^a\frac{2\sigma^2}{n}\Delta d_{s_l}(1-\hat\beta_{s_l}) + \frac{4\sigma^2}{n}\sum_{l=1}^a\hat\beta_{s_l} } + \frac{2\sigma^2}{n}\text{sdf}(\hat f_{\text{stack}}) -\sigma^2,
\end{align*}
where $\hat\beta_{s_l} = \tau\hat\gamma_{s_l} = \frac{\tau\sigma^2}{n}\frac{d_{s_{l}} - d_{s_{l-1}}}{R_{s_{l-1}} - R_{s_{l}}}$ for any $1\leq l \leq a$. Define the quadratic function $h_l(z) =\Delta R_{s_l}z^2 -\frac{2\sigma^2}{n}(\Delta d_{s_l}-2)z $. It is evident that if $\Delta d_{s_l} \geq \min_k \Delta d_k \geq 4/(2-\tau)$, then $h_l(\hat\beta_{s_l}) \leq h_l(0)$ holds for all $1\leq l \leq a$. Also, we have proved $\text{sdf}(\hfstack) = (1-\tau/\lambda)\text{sdf}(\hfbest)<\text{sdf}(\hfbest)$, since $ \text{sdf}(\hfbest) > 0 $.  Thus, $\mathbb{E}\big[\|\bff-\bhfstack\|^2\big]<\mathbb{E}\big[\|\bff-\bhfbest\|^2\big]$ when $\lambda > \tau$. In addition, the population risk gap $ \mathbb{E}\big[\|\bff-\bhfbest\|^2\big]- \mathbb{E}\big[\|\bff-\bhfstack\|^2\big] $ is equal to
\begin{equation} \label{eq:gap}
\frac{\sigma^4\tau(2-\tau)}{n^2}\mathbb{E}\Bigg[\sum_{l=1}^a \frac{\Delta d_{s_l}(\Delta d_{s_l}-4/(2-\tau))}{\Delta R_{s_l}}\Bigg] + \frac{2\tau}{\lambda}\frac{\sigma^2}{n}\text{sdf}(\hat f_{\text{best}}).
\end{equation}
\begin{remark}
The value of $ \tau $ that maximizes this expression is
$$
\tau^* = \min\Bigg\{\frac{\mathbb{E}\big[\sum_{\ell=1}^a \Delta d_{s_l}(\Delta d_{s_l}-2)\big]+\frac{n}{\sigma^2\lambda}\text{sdf}(\hat f_{\text{best}})}{\mathbb{E}\big[\sum_{\ell=1}^a \Delta^2 d_{s_l}\big]},\; \lambda \Bigg\}.
$$
\end{remark}
Using the Cauchy-Schwarz inequality, we can further lower bound the first term in \eqref{eq:gap} by
\begin{equation}
\begin{aligned} \label{eq:improvement}
\mathbb{E}\Bigg[\sum_{l=1}^a \frac{(\Delta d_{s_l}-4/(2-\tau))^2}{\Delta R_{s_l}}\Bigg] 
&\geq~ \mathbb{E}\Bigg[\frac{\big(\sum_{l=1}^a(\Delta d_{s_l}-4/(2-\tau))\big)^2}{\sum_{l=1}^a\Delta R_{s_l}}\Bigg] \\
 & =~ 
\mathbb{E}\Bigg[\frac{(d_{s_a}-4a/(2-\tau))^2}{R_0-R_{s_a}}\Bigg].
\end{aligned}
\end{equation}
Next, note that
\begin{equation}
\begin{aligned} 
\label{eq:improvement2}
\mathbb{E}\Bigg[\frac{(d_{s_a}-4a/(2-\tau))^2}{R_0-R_{s_a}}\Bigg] 
& \geq~  \mathbb{E}\Bigg[\frac{(d_{s_a}-4s_a/(2-\tau))^2}{R_0-R_{s_a}}\Bigg] \\ 
& \geq~ \mathbb{E}\Bigg[\min_{1 \leq k\leq M}\frac{(d_k-4k/(2-\tau))^2}{R_0-R_k}\Bigg] \\
& =~ \mathbb{E}\Bigg[\Bigg(\max_{1 \leq k \leq M}\frac{(n/\sigma^2)(R_0-R_{k})}{(d_k-4k/(2-\tau))^2}\Bigg)^{-1}\Bigg] \\
& \geq~ \Bigg(\mathbb{E}\Bigg[\max_{1 \leq k \leq M}\frac{(n/\sigma^2)(R_0-R_{k})}{(d_k-4k/(2-\tau))^2}\Bigg]\Bigg)^{-1},
\end{aligned}
\end{equation}
where we apply Jensen's inequality in the last line. Our next task is to upper bound,
$$
\mathbb{E}\Bigg[\max_{1 \leq k\leq M}\frac{(n/\sigma^2)(R_0-R_k)}{(d_k-4k/(2-\tau))^2}\Bigg],
$$
which we do using a martingale argument. To this end, let $ \theta_k = \mathbb{E}[(n/\sigma^2)(R_0-R_k)] = d_k + (n/\sigma^2)(\|\mathbf{f}\|^2-\|\mathbf{f} - \mathbf{f}_k\|^2) $ and $ \Delta \theta_k = \theta_{k}-\theta_{k-1} $. Then,
\begin{equation}
\begin{aligned}
\label{ineq:exp_max_final}
&\mathbb{E}\Bigg[\max_{1 \leq k\leq M}\frac{(n/\sigma^2)(R_0-R_k)}{(d_k-4k/(2-\tau))^2}\Bigg]\\ 
&\quad \leq \mathbb{E}\Bigg[\max_{1 \leq k\leq M}\frac{|(n/\sigma^2)(R_0-R_k)-\theta_k|}{(d_k-4k/(2-\tau))^2}\Bigg] + \max_{1 \leq k\leq M}\frac{\theta_k}{(d_k-4k/(2-\tau))^2}.
\end{aligned}
\end{equation}
Let $ X_i = (n/\sigma^2)(R_{i-1}-R_i)-\Delta \theta_i $ and $ b_i = (d_i-4i/(2-\tau))^2 $, so that $ ((n/\sigma^2)(R_0-R_k)-\theta_k)/(d_k-4k/(2-\tau))^2 = (1/b_k)\sum_{i=1}^k X_i $, where the $X_i$ are mean zero and independent. Since $ b_i $ is increasing, we have by \citep[Lemma 1]{shorack1976inequalities} that, almost surely,
$$
\max_{1 \leq k\leq M}\Bigg|\frac{1}{b_k}\sum_{i=1}^k X_i\Bigg| \leq 2\max_{1 \leq k\leq M}\Bigg|\sum_{i=1}^k \frac{X_i}{b_i}\Bigg|.
$$
Since the process $ \big|\sum_{i=1}^k \frac{X_i}{b_i}\big| $ forms a positive submartingale, Doob's maximal inequality \citep[Theorem 4.4.4]{Durrett_2019} implies that
$$
\mathbb{E}\Bigg[\max_{1 \leq k\leq M}\Bigg|\sum_{i=1}^k \frac{X_i}{b_i}\Bigg|\Bigg] \leq 2\sqrt{\mathbb{E}\Bigg[\Bigg(\sum_{i=1}^M \frac{X_i}{b_i}\Bigg)^2\Bigg]} = 2\sqrt{\sum_{i=1}^M \frac{\mathbb{E}\big[X^2_i\big]}{b^2_i}}.
$$
Since $ X_i+\Delta \theta_i \sim \chi^2(\Delta d_i,\,(n/\sigma^2)\Delta \|\mathbf{f}-\mathbf{f}_i\|^2) $, where $ \Delta \|\mathbf{f}-\mathbf{f}_i\|^2 = \|\mathbf{f}-\mathbf{f}_{i-1}\|^2 -\|\mathbf{f}-\mathbf{f}_{i}\|^2 $, we have $  \mathbb{E}\big[X^2_i\big] = 2(\Delta d_i+2(n/\sigma^2)\Delta \|\mathbf{f}-\mathbf{f}_i\|^2) $. This means that
\begin{align}
\mathbb{E}\Bigg[\max_{1 \leq k\leq M}\frac{|(n/\sigma^2)(R_0-R_k)-\theta_k|}{(d_k-4k/(2-\tau))^2}\Bigg] \leq 4\sqrt{\sum_{i=1}^M \frac{2(\Delta d_i+2(n/\sigma^2)\Delta \|\mathbf{f}-\mathbf{f}_i\|^2)}{(d_i-4i/(2-\tau))^4}}. \label{ineq:expectation_max}
\end{align}
Putting everything together,
$$
\mathbb{E}\Bigg[\max_{1 \leq k\leq M}\frac{(n/\sigma^2)(R_0-R_k)}{(d_k-4k/(2-\tau))^2}\Bigg] \leq 4\sqrt{\sum_{i=1}^M \frac{2(\Delta d_i+2(n/\sigma^2)\Delta \|\mathbf{f}-\mathbf{f}_i\|^2)}{(d_i-4i/(2-\tau))^4}} + \max_{1 \leq k\leq M}\frac{\theta_k}{(d_k-4k/(2-\tau))^2}.
$$
Next, decompose
$$
\sum_{i=1}^M \frac{2(\Delta d_i+2(n/\sigma^2)\Delta \|\mathbf{f}-\mathbf{f}_i\|^2)}{(d_i-4i/(2-\tau))^4} = 
2\sum_{i=1}^M \frac{\Delta d_i}{(d_i-4i/(2-\tau))^4} + \frac{4n}{\sigma^2}\sum_{i=1}^M \frac{\Delta \|\mathbf{f}-\mathbf{f}_i\|^2}{(d_i-4i/(2-\tau))^4}.
$$
Note that
$$
\sum_{i=1}^M \frac{\Delta \|\mathbf{f}-\mathbf{f}_i\|^2}{(d_i-4i/(2-\tau))^4} \leq \sum_{i=1}^M \frac{\Delta \|\mathbf{f}-\mathbf{f}_i\|^2}{(d_1-4/(2-\tau))^4} = \frac{\|\mathbf{f}\|^2-\|\mathbf{f}-\mathbf{f}_M\|^2}{(d_1-4/(2-\tau))^4} \leq \frac{5^4\|\mathbf{f}\|^2}{d_1^4},
$$
where for the last inequality, we use the assumption that $ d_k \geq d_{k-1} + 5/(2-\tau) $, which implies  that $ d_k - 4k/(2-\tau) \geq (1/5)d_k $. In addition,
\begin{align*}
    \sum_{i=1}^M \frac{\Delta d_i}{(d_i-4i/(2-\tau))^4} &\leq 5^4\sum_{i=1}^M \frac{\Delta d_i}{d_i^4} \\
    &=  5^4\bigg( \frac{1}{d_1^3} + \sum_{i=2}^M \frac{\Delta d_i}{d^4_i}\bigg)\\
    &\leq  5^4\bigg( \frac{1}{d_1^3} + \int_{d_1}^{d_M}\frac{dx}{x^4}\bigg) \leq \frac{1250}{d_1^3}.
\end{align*}
Therefore, according to \eqref{ineq:expectation_max}, we obtain
\begin{align}
\mathbb{E}\Bigg[\max_{1 \leq k\leq M}\frac{|(n/\sigma^2)(R_0-R_k)-\theta_k|}{(d_k-4k/(2-\tau))^2}\Bigg] \leq \frac{200\sqrt{d_1+ n(\|\mathbf{f}\|^2/\sigma^2)}}{d_1^2}. \label{ineq:exp_max1}
\end{align}
Similarly,
\begin{align}
\max_{1 \leq k\leq M}\frac{d_k + (n/\sigma^2)(\|\mathbf{f}\|^2-\|\mathbf{f}-\mathbf{f}_k\|^2)}{(d_k-4k/(2-\tau))^2} \leq 25\frac{d_1 + n(\|\mathbf{f}\|^2/\sigma^2)}{d_1^2}. \label{ineq:exp_max2}
\end{align}
Thus, by substituting inequalities \eqref{ineq:exp_max1} and \eqref{ineq:exp_max2} into \eqref{ineq:exp_max_final}, we obtain
\begin{equation}
\begin{aligned}
\label{eq:improve_final}
& \mathbb{E}\Bigg[\max_{1 \leq k\leq M}\frac{(n/\sigma^2)(R_0-R_k)}{(d_k-4k/(2-\tau))^2}\Bigg] \\ & \quad \leq \frac{200\sqrt{d_1+ n(\|\mathbf{f}\|^2/\sigma^2)}}{d_1^2} + 25\frac{d_1 + n(\|\mathbf{f}\|^2/\sigma^2)}{d_1^2} \leq 225\frac{d_1+n(\|\mathbf{f}\|^2/\sigma^2)}{d_1^2}.
\end{aligned}
\end{equation}
Combining \eqref{eq:improvement}, \eqref{eq:improvement2}, and \eqref{eq:improve_final} establishes \eqref{eq:main_inequality} when $ \lambda > \tau $.
In the case of $\lambda \leq \tau$, by \eqref{df_lambda_small} we have
\begin{equation} \label{eq:lambda_small}
\begin{aligned}
    &\mathbb{E}\big[\|\bff-\bhfstack\|^2\big]\\
    &=~ \E\qth{R_0+\sum_{l=1}^{\tilde a}\Delta R_{s_l}\pth{\hat\beta_{s_l}^2-1  }+\sum_{l=1}^{\tilde a}\frac{2\sigma^2}{n}\Delta d_{s_l}(1-\hat\beta_{s_l}) + \frac{4\sigma^2}{n}\sum_{l=1}^{\tilde a}\hat\beta_{s_l} } -\sigma^2,
\end{aligned}
\end{equation}
where we recall $\tilde a$ satisfies $s_{\tilde a} = \max\big\{k\in \{0,1,\dots,M\}: \hat \gamma_{k} < 1/\tau\big\}$.  Again, we have that $h_l(\hat\beta_{s_l}) \leq h_l(0)$ for any $1\leq l \leq \tilde a$ when $\Delta d_{s_l} \geq \min_k \Delta d_k \geq 4/(2-\tau)$. Note that $\tilde a \leq a$ and for any $\tilde a<l \leq a$, according to the definition of $\tilde a$, we have $\hat\gamma_{s_l} = \frac{\sigma^2}{n}\frac{\Delta d_{s_l}}{\Delta R_{s_l}}\geq 1/\tau > 1/2$. Thus for any $\tilde a<l \leq a$, 
$$
\Delta R_{s_l}\pth{\tilde\beta_{s_l}^2-1  }+\frac{2\sigma^2}{n}\Delta d_{s_l}(1-\tilde\beta_{s_l}) = -\Delta R_{s_l}+\frac{2\sigma^2}{n}\Delta d_{s_l}>0.
$$
Therefore, we obtain $\mathbb{E}\big[\|\bff-\bhfstack\|^2\big]<\mathbb{E}\big[\|\bff-\bhfbest\|^2\big]$, and with a similar argument as \eqref{eq:improvement}, we can lower bound the population risk gap
\begin{equation} \label{eq:gap_other}
    \frac{\sigma^4\tau(2-\tau)}{n^2}\mathbb{E}\Bigg[\sum_{l=1}^a \frac{\Delta d_{s_l}(\Delta d_{s_l}-4/(2-\tau))}{\Delta R_{s_l}}\Bigg] + \frac{2\sigma^2}{n}\text{sdf}(\hat f_{\text{best}})
\end{equation}
by
\begin{align*}
    \frac{\sigma^4\tau(2-\tau)}{n^2}\mathbb{E}\Bigg[\min_{1 \leq k\leq M}\frac{(d_k-4k/(2-\tau))^2}{R_0-R_k}\Bigg] + \frac{2\sigma^2}{n}\text{sdf}(\hat f_{\text{best}}).
\end{align*}
Applying the bounds \eqref{eq:improvement}, \eqref{eq:improvement2}, and \eqref{eq:improve_final}, establishes \eqref{eq:main_inequality} when $ \lambda \leq \tau $.

\begin{remark}
The value of $ \tau $ that maximizes \eqref{eq:gap_other} is
\begin{equation*}
\tau^* = \max\Bigg\{\frac{\mathbb{E}\big[\sum_{\ell=1}^a \Delta d_{s_l}(\Delta d_{s_l}-2)\big]}{\mathbb{E}\big[\sum_{\ell=1}^a \Delta^2 d_{s_l}\big]},\; \lambda \Bigg\}. \qedhere
\end{equation*}
\end{remark}
\end{proof}
\begin{proof}[Proof of Theorem \ref{thm:l0}]
With equality \eqref{eq:lambda_small} and the assumption that $\tau=\lambda=1$, we have that 
\begin{align}
    &\mathbb{E}\big[\|\bff-\bhfstack\|^2\big]\\
    &=~ \E\qth{R_0+\sum_{l=1}^{\tilde a}\Delta R_{s_l}\pth{\hat\beta_{s_l}^2-1  }+\sum_{l=1}^{\tilde a}\frac{2\sigma^2}{n}\Delta d_{s_l}(1-\hat\beta_{s_l}) + \frac{4\sigma^2}{n}\sum_{l=1}^{\tilde a}\hat\beta_{s_l} } -\sigma^2\\
    &=~ \E\qth{R_0+\sum_{k=1}^M\drk\pth{\hat\beta_{k}^2-1  }+\sum_{k=1}^M\frac{2\sigma^2}{n}\ddk(1-\hat\beta_k) + \frac{4\sigma^2}{n}\sum_{k=1}^M\hat\beta_k\mathbf{1}(\hat\beta_k \neq \hat\beta_{k+1} )} - \sigma^2\\
    &\leq~ \E\qth{R_0+\sum_{k=1}^M\drk\pth{\beta_k^2-1  }+\sum_{k=1}^M\frac{2\sigma^2}{n}\ddk(1-\beta_k)} + \frac{4\sigma^2}{n}\E\qth{\sum_{k=1}^M\hat\beta_k\mathbf{1}(\hat\beta_k \neq \hat\beta_{k+1} )} - \sigma^2, 
\end{align}
where the last inequality holds for any deterministic or random  $\beta_1\leq\beta_2\leq\cdots\leq \beta_M \leq 1$, since $\hat{\boldsymbol{\beta}}$ is the solution of program \eqref{loss:1} and $\tau=\lambda=1$. For the second equality, we use the fact that $\hat\beta_k=1$ when $k>s_a$.  Recall that $\alpha_k = \beta_{k+1}-\beta_k \geq 0$. Note that with deterministic $\{\beta_k\}$ (and $\{\alpha_k\}$),
\begin{align*}
    \mathbb{E}\Bigg[\Bigg\|\bff-\sum_{k=1}^M\alpha_k \boldsymbol{\hat\mu}_k \Bigg\|^2\Bigg] &= \mathbb{E}\Bigg[\Bigg\|\by-\sum_{k=1}^M\alpha_k \boldsymbol{\hat\mu}_k \Bigg\|^2\Bigg] - \sigma^2 + \sum_{k=1}^M\frac{2\sigma^2}{n}d_k\alpha_k\\
    &= \E\qth{R_0+\sum_{k=1}^M\drk\pth{\beta_k^2-1  }+\sum_{k=1}^M\frac{2\sigma^2}{n}\ddk(1-\beta_k)} - \sigma^2.
\end{align*}
We thus have
\begin{align*}
    \mathbb{E}\big[\|\bff-\bhfstack\|^2\big] \leq \min_{\alpha_1\geq 0,\, \alpha_2 \geq 0,\, \dots,\, \alpha_M \geq 0}\mathbb{E}\Bigg[\Bigg\|\mathbf{f}-\sum_{k=1}^M\alpha_k \boldsymbol{\hat\mu}_k \Bigg\|^2\Bigg] + \E\qth{\frac{4\sigma^2}{n}\sum_{k=1}^M\hat\beta_k\mathbf{1}(\hat\beta_k \neq \hat\beta_{k+1} )}.
\end{align*}
Theorem \ref{thm:l0} follows from the fact that
$\sum_{k=1}^M\hat\beta_k\mathbf{1}(\hat\beta_k \neq \hat\beta_{k+1} ) = \sum_{k=1}^M\Big(1-\sum_{m=k}^M\hat\alpha_m \Big)\mathbf{1}(\hat\alpha_k \neq 0)$, and the inequality 
\begin{equation*}
    \E\qth{\sum_{k=1}^M\Bigg(1-\sum_{m=k}^M\hat\alpha_m \Bigg)\mathbf{1}(\hat\alpha_k \neq 0)} \leq \E\qth{\sum_{k=1}^M\mathbf{1}(\hat\alpha_k \neq 0)} = \E\qth{\|\hat{\boldsymbol{\alpha}}\|_{\ell_0}}. \qedhere
\end{equation*}
\end{proof}
\begin{proof}[Proof of Theorem \ref{thm:complexity}]
By setting $\lambda = \tau = 1$ in \eqref{loss:lasso}, and noting that $\|\boldsymbol{\alpha}\|_{\ell_0} = \sum_{k=1}^M\mathbf{1}(\hat\beta_k \neq \hat\beta_{k+1})$, we can establish that the solution to \eqref{loss:l0} is equivalent to the solution of the following program:
\begin{equation}
\begin{aligned}
& \text{minimize} \quad \sum_{k=1}^M w_k(z_k - \beta_k)^2 + \frac{4\sigma^2}{n}\sum_{k=1}^M \mathbf{1}(\beta_k \neq \beta_{k+1}) \\
&\text{subject to} \quad \beta_1 \leq \beta_2 \leq \cdots \leq \beta_M \leq 1,
\end{aligned}
\end{equation}
where $ w_k = \Delta R_k > 0  $ and $ z_k = (\sigma^2/n)(\Delta d_k/\Delta R_k) > 0 $.
Recall that we assume $r_q = \max\big\{k\in \{0,1,\dots,M\}: \hat\beta_k < 1\big\}$. According to Lemma \ref{lem:mallow}, to show  $\text{dim}(\hat f_{\text{stack}}) \geq \text{dim}(\hat f_{\text{best}})$, it suffices to show  $r_q \geq \tilde m$. If $r_q = M$, then $M=r_q\geq \tilde m$ is already satisfied. If $r_q<M$, we first show for any $r_q<j\leq M$, 
\begin{align}\label{ineq:larger_size1}
    \frac{\hat \beta_{r_q}+1}{2} \leq \frac{\sigma^2}{n} \frac{d_j -d_{r_q}}{R_{r_q}-R_j}. 
\end{align}
Consider another sequence $\boldsymbol{\tilde \beta}$ where $\tilde \beta_k = \hat\beta_k\mathbf{1}(1\leq k \leq r_q)+\hat\beta_{r_q}\mathbf{1}(r_q<k\leq j)+\mathbf{1}(j<k\leq M)$.   By the optimality of $\hat{\boldsymbol{\beta}}$, we have 
\begin{align*}
    &\sum_{k=1}^{M}\Delta R_k\Bigg(-\hat\beta_k +\frac{\sigma^2}{n}\frac{\Delta d_k}{\Delta R_k}\Bigg)^2 + \frac{4\sigma^2}{n}\sum_{k=1}^M \hat\beta_k\mathbf{1}(\hat\beta_k \neq \hat\beta_{k+1})\\ 
    &\leq~\sum_{k=1}^{M}\Delta R_k\Bigg(-\tilde\beta_k +\frac{\sigma^2}{n}\frac{\Delta d_k}{\Delta R_k}\Bigg)^2 + \frac{4\sigma^2}{n}\sum_{k=1}^M \tilde\beta_k\mathbf{1}(\tilde\beta_k \neq \tilde\beta_{k+1}).
\end{align*}
This implies 
\begin{align*}
    &\sum_{k = r_q+1}^j\drk\Bigg(1-\frac{\sigma^2}{n}\frac{\Delta d_k}{\Delta R_k}\Bigg)^2 + \sum_{k=j+1}^M\drk\Bigg(1-\frac{\sigma^2}{n}\frac{\Delta d_k}{\Delta R_k}\Bigg)^2\\
    &\leq~ \sum_{k = r_q+1}^j\drk\Bigg(\hat \beta_{r_q}-\frac{\sigma^2}{n}\frac{\Delta d_k}{\Delta R_k}\Bigg)^2 + \sum_{k=j+1}^M\drk\Bigg(1-\frac{\sigma^2}{n}\frac{\Delta d_k}{\Delta R_k}\Bigg)^2.
\end{align*}
After simplification, we have
\begin{align*}
    \pth{R_{r_q}-R_j}\pth{1-\frac{\sigma^2}{n} \frac{d_j -d_{r_q}}{R_{r_q}-R_j}}^2 &\leq \pth{R_{r_q}-R_j}\pth{\hat\beta_{r_q}-\frac{\sigma^2}{n} \frac{d_j -d_{r_q}}{R_{r_q}-R_j}}^2\\
    \Leftrightarrow \left|1- \frac{\sigma^2}{n} \frac{d_j -d_{r_q}}{R_{r_q}-R_j}\right| &\leq \left|\hat\beta_{r_q}- \frac{\sigma^2}{n} \frac{d_j -d_{r_q}}{R_{r_q}-R_j}\right|,
\end{align*}
which implies \eqref{ineq:larger_size1} since $\hat\beta_{r_q} < 1$.  Suppose $r_q < \tilde m$. Then
\begin{align*}
    \frac{1}{2}\geq \tilde \gamma_{\tilde m} = \max_{1\leq i\leq \tilde m}\frac{\sigma^2}{n}\frac{d_{\tilde m}-d_{i-1}}{R_{i-1}-R_{\tilde m}} \geq \frac{\sigma^2}{n} \frac{d_{\tilde m} -d_{r_q}}{R_{r_q}-R_{\tilde m}}.
\end{align*}
Noting that $\hat \beta_{r_q}>0$, we obtain a contradiction when taking $j = \tilde m$ in \eqref{ineq:larger_size1}. Therefore $r_q \geq \tilde m. $ 

To demonstrate $\sum_{k=1}^M\hat\alpha_k<1$, it suffices to show $\hat\beta_1>0$. Using an argument similar to \citep[Lemma 5.1]{gao2020estimation} to characterize the form of the solution to (weighted) reduced isotonic regression, we can show that there exists an index  $1\leq k \leq M$ for which $\hat\beta_1 = \frac{\sigma^2}{n}\frac{d_k}{R_0-R_k}>0$.  \qedhere

\end{proof}

\begin{proof}[Proof of the solution of program \eqref{loss:q-agg}]
The program proposed in \citep{bellec2018optimal, bellec2020cost} is 
\begin{equation}
\begin{aligned}\label{prog:bellec_yang}
& \text{minimize} \quad R(\boldsymbol{\alpha})
     + \frac{2\sigma^2}{n}\sum_{k=1}^M\alpha_kd_k+\eta \sum_{m=1}^M \alpha_m \frac{1}{n}\sum_{i=1}^n\Bigg(\hat\mu_m(\bx_i)-\sum_{k=1}^M \alpha_k \hat\mu_k(\bx_i)\Bigg)^2\\
&\text{subject to} \quad  \alpha_k \geq 0, \;\; k = 1, 2, \dots, M, \;\;\text{and}\;\; \sum_{k=1}^M\alpha_k=1,
\end{aligned}
\end{equation}
with $\eta = 1/2$. Thanks to a bias-variance decomposition, when $\sum_{k=1}^M \alpha_k=1$,
\begin{align}
&R(\boldsymbol{\alpha})
     + \frac{2\sigma^2}{n}\sum_{k=1}^M\alpha_kd_k+\eta \sum_{m=1}^M \alpha_m \frac{1}{n}\sum_{i=1}^n\Bigg(\hat\mu_m(\bx_i)-\sum_{k=1}^M \alpha_k \hat\mu_k(\bx_i)\Bigg)^2\\
& =~ (1-\eta)R(\boldsymbol{\alpha})
     + \frac{2\sigma^2}{n}\sum_{k=1}^M\alpha_kd_k + \eta\sum_{k=1}^M \alpha_kR_k. 
\end{align}
Thus, program \eqref{prog:bellec_yang}
corresponds to Lemma \ref{lem:bellec_yang}, which also demonstrates its equivalence to program \eqref{loss:q-agg}.
\qedhere
\end{proof}
\begin{lemma}\label{lem:bellec_yang}
    Consider the program
\begin{equation}
\begin{aligned}\label{prog:general}
& \text{minimize} \quad R(\boldsymbol{\alpha})
     + \sum_{k=1}^M\alpha_k\Bigg(\frac{\frac{2\sigma^2}{n} d_k+ \eta R_k}{1-\eta}\Bigg)\\
&\text{subject to} \quad  \alpha_k \geq 0, \;\; k = 1, 2, \dots, M.
\end{aligned}
\end{equation}

If $0<\eta<1$ and $\sum_{k=1}^M \alpha_k=1$, then the solution to \eqref{prog:general} is 
\begin{equation}
\check\alpha_k = \phi\Bigg(\frac{1-\eta/2-\check\gamma_k}{1-\eta}\Bigg) - \phi\Bigg(\frac{1-\eta/2-\check\gamma_{k+1}}{1-\eta}\Bigg),
\end{equation}
where $ \phi(z) = \min\{1, \max\{0, z\}\} $ is the clip function for $ z $ at 0 and 1, and 
\begin{equation}
    \check\gamma_1 = 0,\quad \check\gamma_k = \frac{\sigma^2}{n}\min_{k\leq i\leq M}\max_{1\leq j< k}\frac{d_i-d_j}{R_j-R_i},\quad k =2,3,\dots,M.
\end{equation}

\end{lemma}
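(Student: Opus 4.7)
The plan is to reduce program \eqref{prog:general}, with the added constraint $\sum_k \alpha_k = 1$, to a bounded weighted isotonic regression and then invoke Lemma \ref{lem:opt}. This mirrors the strategy behind Lemma \ref{lem:equi}, with two new twists: the sum-to-one constraint pins one coordinate at a boundary, and the $\eta R_k$ term injects a $-\eta/2$ shift into the isotonic target.

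First, I set $c_k = \sum_{i=k}^M \alpha_i$, so that $\alpha_k = c_k - c_{k+1}$ with conventions $c_{M+1} = 0$ and $c_1 = 1$. Nonnegativity of the weights translates into the monotonicity $1 = c_1 \geq c_2 \geq \cdots \geq c_M \geq 0$. Lemma \ref{lem:training_error} gives $R(\boldsymbol{\alpha}) = R_0 + \sum_k \Delta R_k (c_k^2 - 2 c_k)$, and summation by parts yields $\sum_k \alpha_k d_k = \sum_k c_k \Delta d_k$ and $\sum_k \alpha_k R_k = R_0 - \sum_k c_k \Delta R_k$ (the latter uses $c_1 = 1$). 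Substituting into the objective of \eqref{prog:general} and completing the square in $c_k$, the $c_k$-dependent part reduces to $\sum_k \Delta R_k (c_k - u_k)^2$ up to a constant, where $u_k = \frac{1}{1-\eta}\bigl(1 - \eta/2 - \frac{\sigma^2}{n}\frac{\Delta d_k}{\Delta R_k}\bigr)$.

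Next, switching to $\beta_k = 1 - c_k$ turns the constraint set into $0 = \beta_1 \leq \beta_2 \leq \cdots \leq \beta_M \leq 1$ and the objective into $\sum_k \Delta R_k (\beta_k - z_k)^2$ with $z_k = 1 - u_k = \frac{1}{1-\eta}\bigl(\frac{\sigma^2}{n}\frac{\Delta d_k}{\Delta R_k} - \eta/2\bigr)$. Since $\beta_1 = 0$ is pinned at the lower bound and decouples from the rest, Lemma \ref{lem:opt} applied to the bounded isotonic regression on $\beta_2, \dots, \beta_M$ yields, for $k \geq 2$,
\[
\tilde\beta_k = \phi\Bigg(\max_{2 \leq i \leq k}\min_{k \leq j \leq M}\frac{\sum_{l=i}^j \Delta R_l\, z_l}{\sum_{l=i}^j \Delta R_l}\Bigg).
\]
The telescoping identities $\sum_{l=i}^j \Delta R_l = R_{i-1} - R_j$ and $\sum_{l=i}^j \Delta d_l = d_j - d_{i-1}$ simplify the inner average to $\frac{1}{1-\eta}\bigl(\frac{\sigma^2}{n}\frac{d_j - d_{i-1}}{R_{i-1} - R_j} - \eta/2\bigr)$. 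After reindexing the outer max via $i \mapsto i-1$ (so the range becomes $1 \leq i-1 < k$) and invoking the classical $\max\min = \min\max$ identity for cumulative averages in isotonic regression, the bracketed quantity becomes $\check\gamma_k$, so $\tilde\beta_k = \phi\bigl(\frac{\check\gamma_k - \eta/2}{1-\eta}\bigr)$.

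Finally, using $1 - \phi(x) = \phi(1-x)$ gives $c_k = 1 - \tilde\beta_k = \phi\bigl(\frac{1 - \eta/2 - \check\gamma_k}{1-\eta}\bigr) = \phi\bigl(\tfrac{1}{2}(1 + \tfrac{1-2\check\gamma_k}{1-\eta})\bigr)$, and the convention $\check\gamma_1 = 0$ forces the argument above $1$, producing $c_1 = 1$ consistent with the sum-to-one constraint. The claimed formula then follows from $\check\alpha_k = c_k - c_{k+1}$. The main delicate point will be carefully tracking how the $\eta R_k$ term contributes the $-\eta/2$ shift through summation by parts; the rest is a direct application of the isotonic regression machinery already developed in Lemma \ref{lem:opt}.
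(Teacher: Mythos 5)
Your proposal is correct and follows essentially the same route as the paper's proof: the change of variables $c_k=\sum_{i\ge k}\alpha_i$, $\beta_k=1-c_k$, the use of Lemma~\ref{lem:training_error} and summation by parts to reduce \eqref{prog:general} with the sum-to-one constraint to a bounded weighted isotonic regression, and the application of Lemma~\ref{lem:opt} followed by the clip identity $1-\phi(x)=\phi(1-x)$. As a minor side note, your derived target $z_k=\frac{1}{1-\eta}\bigl(\frac{\sigma^2}{n}\frac{\Delta d_k}{\Delta R_k}-\eta/2\bigr)$ is the one consistent with the stated solution \eqref{eq:q-sol}, whereas the main text's display of \eqref{loss:q-agg} writes $+\eta/2$ in $z_k$; that appears to be a sign typo there, not an error in your argument.
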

\begin{proof}
    Recall that $\alpha_k = \beta_{k+1}-\beta_k$ and $c_k = 1-\beta_k$.  Using summation by parts and Lemma \ref{lem:training_error}, we can establish the equivalence between the
solution of \eqref{prog:general} and the solution of the following program:
\begin{equation}
\begin{aligned}\label{prog:simplify}
& \text{minimize} \quad \frac{\eta}{1-\eta}R_0c_1 + \sum_{k=1}^M\drk\Bigg( c_k-\frac{1}{2}\Bigg(\frac{2-\eta}{1-\eta}-\frac{2}{1-\eta}\frac{\sigma^2}{n}\frac{\ddk}{\drk}\Bigg)
 \Bigg)^2   \\
&\text{subject to} \quad  c_1 \geq c_2 \geq \cdots \geq c_M \geq 0.
\end{aligned}
\end{equation}
When $0<\eta<1$ and $\sum_{k=1}^M \alpha_k=1$, we note that $c_1 = \sum_{k=1}^M\alpha_k = 1$ and $\beta_1 = 1-c_1=0$. Thus, \eqref{prog:simplify} can be further reduced to \eqref{loss:q-agg}.
By Lemma \ref{lem:opt}, we have that  the solution is $\check \beta_1 = 0$ and  $\check\beta_k 
 =  1-\phi\Big(\frac{1-\eta/2-\check\gamma_k}{1-\eta}\Big)$ for $k=2,3,\dots,M$. Consequently,
\begin{equation}
\check\alpha_k = \phi\Bigg(\frac{1-\eta/2-\check\gamma_k}{1-\eta}\Bigg) - \phi\Bigg(\frac{1-\eta/2-\check\gamma_{k+1}}{1-\eta}\Bigg), \quad k = 1, 2, \dots, M.
\end{equation}
\qedhere
\end{proof}

\section{Additional Experiments} \label{app:numeric_additional}
In this section, we present additional experiments evaluating the performance of the stacked model under non-Gaussian noise and a random design. Figure \ref{fig:linear_first35_laplace} illustrates the results with Laplace noise, which has heavier tails compared to the Gaussian distribution (in fact, non-sub-Gausian). Specifically, we maintain the same setup as in Figure \ref{fig:linear_first60}, except that the noise now follows a Laplace distribution with location parameter $\mu = 0 $ and scale parameter $b = 3/\sqrt{2}$. Note that the noise variance remains $\sigma^2 = 9$, matching that of Figure \ref{fig:linear_first60}. 

Figure \ref{fig:linear_first60_random} explores a random design setting, where each entry of the data matrix $\bX$ is independently sampled from the standard Gaussian distribution. The noise follows a Gaussian distribution with mean zero and variance $\sigma^2 = 9$, again, consistent with the experiments in the main text. Instead of controlling the signal length $\|\mathbf{f}\|$, we now control $\sqrt{\E[\|\mathbf{f}\|^2]}$, which ranges from 1 to 5 in increments of 0.5. 

The results from both figures demonstrate that stacking continues to perform well under these different distributional settings.

\begin{figure}[H]
    \centering
    \begin{minipage}[b]{0.48\textwidth}
        \centering
        \includegraphics[width=\textwidth]{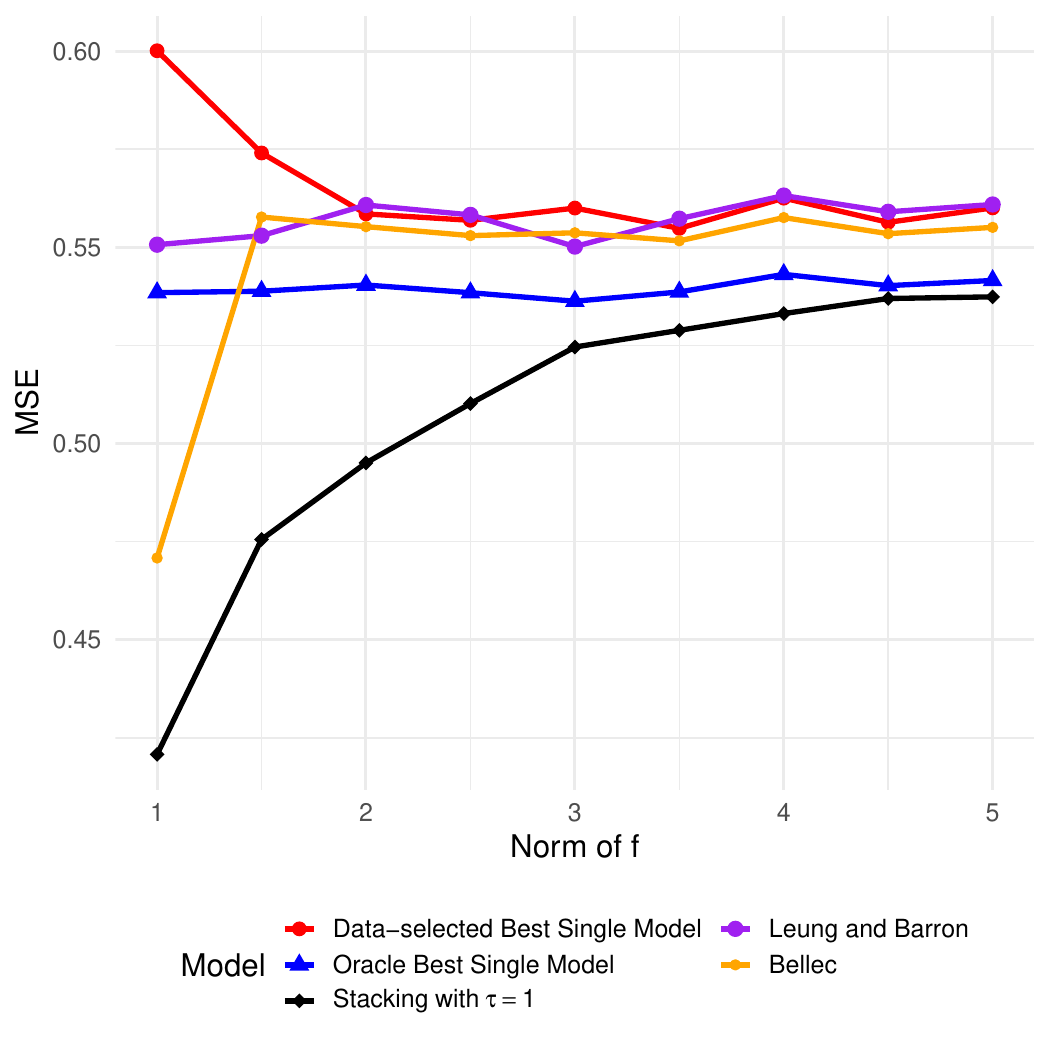}
        \caption{\small MSE comparison across different methods, where the function $f$ is linear and depends on the first 60 covariates, with noise drawn from a Laplace distribution.}
        \label{fig:linear_first35_laplace}
    \end{minipage}
    \hfill
    \begin{minipage}[b]{0.48\textwidth}
        \centering
        \includegraphics[width=\textwidth]{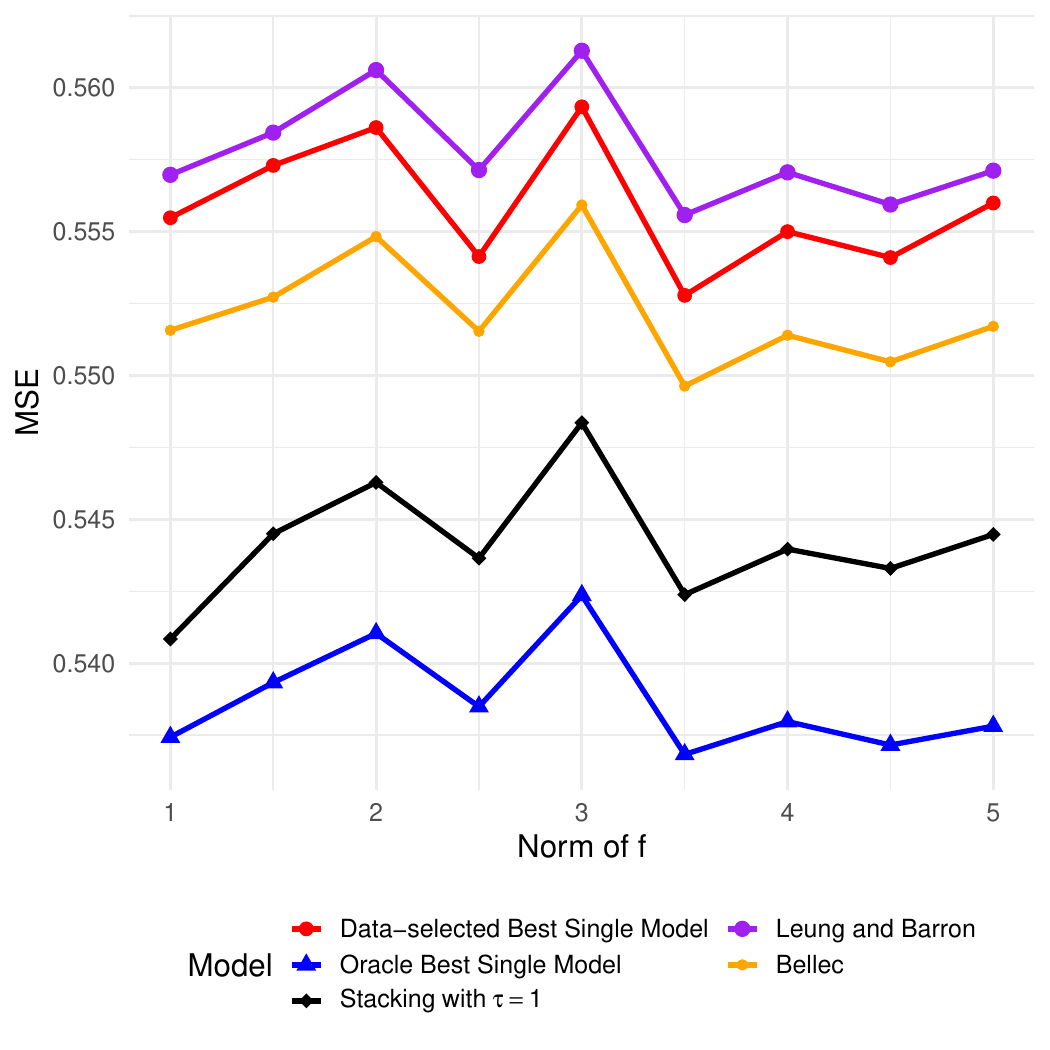}
        \caption{\small MSE comparison across different methods, where the function $f$ is linear and depends on the first 60 covariates The data matrix $\bX$ follows a random design.}
        \label{fig:linear_first60_random}
    \end{minipage}
\end{figure}

\bibliographystyle{plainnat}
\bibliography{stack.bib}

\end{document}